\documentclass[letterpaper, 11pt]{article}
\usepackage[utf8]{inputenc}
\usepackage[margin = 1in]{geometry}
\usepackage{smile}
\usepackage[colorlinks, linkcolor=orange, anchorcolor=blue, citecolor=blue]{hyperref}
\usepackage{xcolor}
\usepackage{natbib}
\usepackage{enumitem}
\linespread{1.05}

\newcommand{\ptz}{p_t^{\sf LD}}
\newcommand{\pld}{p^{\sf LD}}
\newcommand{\sld}{\tilde{\sbb}^{\sf LD}}
\newcommand{\myEE}{\bar{\mathbb{E}}}
\newcommand{\Ybp}{\bXb_{t,\parallel}}
\newcommand{\Ybo}{\bXb_{t,\perp}}
\newcommand{\pscore}{\sbb_{\parallel}}
\newcommand{\oscore}{\sbb_{\perp}}

\newcommand*\diff{\mathop{}\!\mathrm{d}}

\newcommand{\bXb}{\Xb^{\leftarrow}}
\newcommand{\tildebXb}{\tilde{\Xb}^{\leftarrow}}
\newcommand{\hatbXb}{\tilde{\Xb}^{\leftarrow}}
\newcommand{\hatbXbDis}{\tilde{\Xb}^{\Leftarrow}}
\newcommand{\bXbo}{\Xb^{\leftarrow}_{t,\perp}}

\newcommand{\bZb}{\Zb^{\leftarrow}}
\newcommand{\tildebZb}{\tilde{\Zb}^{\leftarrow}}
\newcommand{\tildebZbRot}{\tilde{\Zb}^{\leftarrow,r}}
\newcommand{\tildebZbRotDis}{\tilde{\Zb}^{\Leftarrow,r}}

\newcommand{\TV}{{\sf TV}}

\newcommand{\CE}{C_{E}}
\newcommand{\Czero}{c_0}
\newcommand{\Ctwo}{C_\zb}

\title{\Large \bf Score Approximation, Estimation and Distribution Recovery of Diffusion Models on Low-Dimensional Data}

\author{Minshuo Chen$^{{1, \dagger}}$ \quad  Kaixuan Huang$^{{1, \dagger}}$ \quad Tuo Zhao$^2$ \quad Mengdi Wang$^1$ \thanks{$\dagger$ Equal contribution. Emails: \texttt{\{mc0750, kaixuanh, mengdiw\}@princeton.edu, tourzhao@gatech.edu}.} \vspace{0.05in} \\
$^1$Princeton University $\quad$ $^2$Georgia Tech}

\begin{document}

\maketitle

\begin{abstract}
Diffusion models achieve state-of-the-art performance in various generation tasks. However, their theoretical foundations fall far behind. This paper studies score approximation, estimation, and distribution recovery of diffusion models, when data are supported on an unknown low-dimensional linear subspace. Our result provides sample complexity bounds for distribution estimation using diffusion models. We show that with a properly chosen neural network architecture, the score function can be both accurately approximated and efficiently estimated. Furthermore, the generated distribution based on the estimated score function captures the data geometric structures and converges to a close vicinity of the data distribution. The convergence rate depends on the subspace dimension, indicating that diffusion models can circumvent the curse of data ambient dimensionality.
\end{abstract}

%!TEX root = diffusion_arxiv.tex

\section{Introduction}\label{sec:intro}

Diffusion models achieve state-of-the-art performance in image and audio generating tasks \citep{song2019generative, dathathri2019plug, song2020score, ho2020denoising} and are one of the fundamental building blocks of the more advanced image synthesis system, e.g., DALL-E-2 \citep{ramesh2022hierarchical} and stable diffusion \citep{rombach2022high}.

A standard diffusion model \citep{sohl2015deep, ho2020denoising} consists of a forward process and a backward process: In the forward process, a data point is sequentially corrupted by Gaussian random noises and in the limit the data distribution is transformed into white noise; In the backward process, a denoising neural network is trained to sequentially remove the added noise in the data and restore the clean data point. Using the trained denoising network for the backward process, one can generate diverse and high fidelity samples by first sampling from the standard Gaussian distribution and then progressively removing noises.

The distinctive denoising objective separates diffusion models from other deep generative models such as GANs \citep{NIPS2014_5ca3e9b1}, and Normalizing Flows \citep{rezende2015variational}. As shown by \citet{vincent2011connection}, the training of denoising network essentially learns the so-called ``score function'', i.e., the gradient of log probability density function. Therefore, diffusion models fall into the category of Score-based Generative Models (SGMs).

Despite the empirical success of diffusion models, the theory is still in its embryo. Here we are interested in answering two fundamental questions:

\noindent {\it Q1}. {\it Can neural networks well approximate and learn score functions, especially when data have intrinsic geometric structures? If so, how should one choose the neural network architectures, and what is the sample complexity of learning?}

\noindent {\it Q2}. {\it Can diffusion models estimate the data distribution using the learned score functions? If so, how are the data intrinsic geometric structures being captured and how do they affect the sample complexity?}

Both {\it Q1} and {\it Q2} raise a practical concern about the real world data, such as high resolution images. These data, though having high ambient dimensions, often exhibit low-dimensional structures \citep{pope2021intrinsic}, due to symmetries, repetitive patterns, and local regularities \citep{tenenbaum2000global, roweis2000nonlinear}. Deep neural networks have been known for capturing certain low-dimensional data geometric structures \citep{schmidt2017nonparametric, suzuki2018adaptivity, nakada2020adaptive, shen2022optimal}. However, whether such abilities hold for diffusion models remains unclear.

Some recent works skipped {\it Q1} and attempted to study {\it Q2}, by directly assuming that the score function is accurately learned up to a small error under certain metric, e.g., $L^2$/$L^\infty$ norm \citep{de2022convergence, lee2022convergencea, chen2022sampling, lee2022convergenceb}. \citet{de2022convergence} in particular studied low-dimensional manifold data. These progresses unveil important theoretical insights about the sampling properties of the backward process of diffusion models, however, leaving {\it Q1} largely untouched. As a result, a full theoretical picture of diffusion models is lacking.

To bridge the gap between theory and practice, we make a first step towards an integrated analysis to answer both {\it Q1} and {\it Q2} for diffusion models. The combined result provides sample complexity bounds of diffusion models for learning data distributions supported on low-dimensional linear subspaces. Specifically, we consider data point $\xb = A\zb$, where $\zb$ is referred to as the latent variable, columns of $A \in \RR^{D \times d}$ form an orthonormal basis of $\RR^d$ for $d < D$. We refer to $d$ as the intrinsic dimension and $D$ as the ambient dimension. 

Based on such a low-dimensional linear subspace assumption, we can decompose the score function of the linear subspace data into on-support and orthogonal components (Lemma \ref{lemma:subspace_score}). We then characterize their distinct behaviors of the two components, where on-support component carries latent distribution information and orthogonal component forces the subspace recovery.

Our main contributions are summarized as follows:

\noindent $\bullet$ We specify an encoder-decoder neural architecture with skip-layer connections and establish its approximation guarantees with respect to the score functions under the $L^2$ norm (Theorem \ref{thm:score_approximation}). Specifically, given an approximation error $\epsilon$, we show that the network size needs to be exponential in $1/\epsilon$ with the exponent depending on the data intrinsic dimension $d$.

\noindent $\bullet$ We establish statistical guarantees of score estimation using our properly chosen encoder-decoder neural network. We show that such a neural score estimator converges to the ground truth score under the $L^2$ norm at a rate of $\tilde{\cO}(\frac{1}{\sqrt{t_0}} n^{-\frac{1}{d+5}})$, where $n$ is the sample size and $t_0$ is an early stopping time (Theorem \ref{thm:score_estimation}). This result indicates that the neural score estimator does not suffer from the curse of the data ambient dimensionality in score estimation, when the data exhibit intrinsic geometric structures.

\noindent $\bullet$ We establish distribution estimation guarantees using the learned neural score estimator. By simulating a discretized backward process, the generated data distribution of diffusion models converges to a close vicinity of the data distribution (Theorem \ref{thm:distro_estimation}). Specifically, for the on-support direction, generated distribution enjoys a $\tilde{\cO}(n^{-\frac{1}{2(d+5)}})$ rate of convergence in Total Variation distance. For the orthogonal direction, the generated distribution vanishes in magnitude, and the support of the data is approximated recovered. Our analysis demonstrates that diffusion models are free of the curse of data ambient dimensionality.

\subsection{Related work}
Several recent works study diffusion models from the sampling perspective. \citet{de2021diffusion} study the convergence of diffusion Schr\"{o}dinger bridges by assuming the score estimator is accurate under the $L^\infty$ norm. \citet{lee2022convergencea} provide polynomial convergence guarantees of SGMs, under the assumption that the score estimator is accurate under the $L^2$ norm. In addition, \citet{lee2022convergencea} require the data distribution satisfying a log-Sobolev inequality. Concurrent works \citet{chen2022sampling} and \citet{lee2022convergenceb} improve previous results by extending to distributions with bounded moments. Their analyses still assume access to an accurate score estimator under the $L^2$ norm. It is worth mentioning that \citet{lee2022convergenceb} allow the error of the score estimator under the $L^2$ norm to scale with time.

Moreover, \citet{de2022convergence} made an interesting attempt to analyze diffusion models for learning low-dimensional manifold data. Assuming the score estimator is accurate under the $L^\infty$ norm (extension to the $L^2$ norm is also provided), \citet{de2022convergence} provide distribution estimation guarantees of diffusion models in terms of the Wasserstein distance. The obtained convergence rate has an exponential dependence on the diameter of manifold. 

As stated, aforementioned works hardly touch {\it Q1} and provide partial understandings of diffusion models. To the best of our knowledge, \citet{block2020generative} is the only work in existing literature, which provides score estimation guarantees under the $L^2$ norm. Yet the error bound depends on some unknown Rademacher complexity of certain concept class. In comparison, our work is explicit on the choice of a neural network concept class and score estimation error bound. Note that \citet{block2020generative} also provide sampling convergence guarantees under the assumption of access to an accurate score estimator under the $L^2$ norm. We are also aware of \citet{song2020sliced} and \citet{liu2022let} studying score estimation and distribution estimation from an asymptotic statistics point of view. 

\noindent {\bf Notations}: We use bold lower case letters to denote vectors, e.g., $\xb \in \RR^D$. For a vector $\xb$, $\norm{\xb}_2$ and $\norm{\xb}_\infty$ denote its Euclidean norm and maximum magnitude of entries, respectively. Normal upper case letters denote matrices, e.g., $A \in \RR^{D \times d}$. For a matrix $A$, $\norm{A}_{\rm op}$ and $\norm{A}_{\rm F}$ denote its operator norm and Frobenius norm, respectively. Given a mapping $\fb$ and a distribution $P$, we denote $\norm{\fb}_{L^2(P)} = \EE_P^{1/2}[\norm{\fb}_2^2]$ as the $L^2(P)$ norm. We also denote $\fb_{\sharp} P$ as a pushforward measure, i.e., for any measurable $\Omega$, $(\fb_{\sharp}P)(\Omega) = P(\fb^{-1}(\Omega))$, We reserve $\phi$ for (conditional) Gaussian density functions.
%!TEX root = diffusion_arxiv.tex

\section{Preliminaries}\label{sec:background}

We briefly review diffusion models and score matching using neural networks.

\paragraph{Forward and backward SDEs} The forward process in diffusion models progressively adds noise to original data. Here we consider the Ornstein-Ulhenbeck process, which is described by the following SDE,
\begin{align}\label{eq:forward_sde}
\diff \Xb_t = -\frac{1}{2} g(t) \Xb_t \diff t + \sqrt{g(t)} \diff\Wb_t ~~~ \text{for} ~~ g(t) > 0,
\end{align}
where initial $\Xb_0 \sim P_{\rm data}$ follows the data distribution, $(\Wb_t)_{t\geq 0}$ is a standard Wiener process, and $g(t)$ is a nondecreasing weighting function. We denote the marginal distribution of $\Xb_t$ at time $t$ as $P_t$. Roughly speaking, after an infinitesimal time, \eqref{eq:forward_sde} shrinks the magnitude of data and corrupts data by Gaussian white noise.
More precisely, given $\Xb_0$, the conditional distribution of $\Xb_t | \Xb_0$ is Gaussian ${\sf N}(\alpha(t) \Xb_0, h(t)I_D)$, where $\alpha(t) = \exp(-\int_0^t \frac{1}{2}g(s) ds)$ and $h(t) = 1 - \alpha^2(t)$. Consequently, under mild conditions, \eqref{eq:forward_sde} transforms initial distribution $P_{\rm data}$ to $P_{\infty} = {\sf N}(\boldsymbol{0}, I_D)$. Therefore, \eqref{eq:forward_sde} is also known as the variance preserving forward SDE \citep{song2020score}.

In practice, the forward process \eqref{eq:forward_sde} will terminate at a sufficiently large time horizon $T > 0$, where the corrupted marginal distribution $P_T$ is expected to be close to the standard Gaussian distribution.

Diffusion models generate fake data by reversing the time of \eqref{eq:forward_sde}, which leads to the following backward SDE,
\begin{align}\label{eq:backward_sde}
\diff \bXb_t & = \left[\frac{1}{2}g(T-t)\bXb_t + g(T-t) \nabla \log p_{T-t}(\bXb_t)\right] \diff t + \sqrt{g(T-t)} \diff \overline{\Wb}_t,
\end{align}
where $\nabla \log p_t(\cdot)$ is the score function, i.e., the gradient of log probability density function of $P_t$, and $\overline{\Wb}_t$ is a reversed Wiener process. Under mild conditions, when initialized at $\bXb_0 \sim P_T$, the backward process $(\bXb_t)_{0\leq t\leq T}$ has the same distribution as the time-reversed version of the forward process $(\Xb_{T-t})_{0 \leq t \leq T}$ \citep{anderson1982reverse, haussmann1986time}.

Working with \eqref{eq:backward_sde}, however, leads to difficulties, as both the score function $\nabla \log p_t$ and initial distribution $P_T$ are unknown. In practice, several surrogates are deployed. Firstly, we replace $P_T$ by the standard Gaussian distribution. Secondly, we use a score estimator $\hat{\sbb}$ instead of ground truth score $\nabla \log p_t$. The estimated score $\hat{\sbb}$ is often parameterized by a neural network. With these substitutions, we obtain the following practical backward SDE,
\begin{align}\label{eq:backward_practice}
\diff  {\hatbXb}_t & = \left[\frac{1}{2}g(T-t) {\hatbXb}_t + g(T-t) \hat{\sbb}(\hatbXb_{t}, T-t)\right] \diff t + \sqrt{g(T-t)} \diff \overline{\Wb}_t, \quad \hatbXb_{0} \sim {\sf N}(\boldsymbol{0}, I_D).
\end{align}
Diffusion models then generate data by simulating a discretization of \eqref{eq:backward_practice} with $\eta>0$ being the discretization step size:
\begin{align}\label{eq:backward_piecewise}
    \diff  {\hatbXbDis}_t & = \left[\frac{1}{2}g(T-t) {\hatbXbDis}_{k\eta} + g(T-t) \hat{\sbb}(\hatbXbDis_{k\eta}, T-k\eta)\right] \diff t + \sqrt{g(T-t)} \diff \overline{\Wb}_t, \text{ for} ~~ t \in [k\eta, (k+1)\eta],
\end{align}
Throughout the paper, we take $g(t) = 1$ for simplicity.

\paragraph{Score matching}  
To estimate the score function, a conceptual way is to minimize a weighted quadratic loss:
\begin{align*}
\min_{\sbb \in \cS} \int_{0}^T w(t) \EE_{\Xb_t \sim P_t} \left[\norm{\nabla \log p_t(\Xb_t) - \sbb(\Xb_t, t)}_2^2 \right] \diff t,
\end{align*}
where $w(t)$ is a weighting function and $\cS$ is a concept class (often neural networks). However, such an objective function is intractable, as $\nabla \log p_t$ is unknown. As shown by \citet{vincent2011connection}, rather than minimizing the integral above, we can minimize an equivalent objective,
\begin{align}\label{eq:denoising_score_matching}
\min_{\sbb \in \cS} & \int_{0}^T w(t) \EE_{\Xb_0 \sim P_{\rm data}} \Big[\EE_{\Xb_t|\Xb_0}\Big[\big\|\nabla_{\Xb_t} \log \phi_t(\Xb_t | \Xb_0) - \sbb(\Xb_t , t)\big\|_2^2 \Big] \Big]\diff t.
\end{align}
Here $\phi_t(\Xb_t | \Xb_0)$ denotes the transition kernel of the forward process, which is Gaussian. Hence, we have an analytical form
\[
    \nabla_{\Xb_t} \log \phi_t(\Xb_t | \Xb_0) = -\frac{\Xb_t - \alpha(t)\Xb_0 }{h(t)}.
\]
Note that $\nabla_{\Xb_t} \log \phi_t(\Xb_t | \Xb_0)$ is the noise added to $\Xb_0$ at time $t$. Therefore, \eqref{eq:denoising_score_matching} is known as denoising score matching.

In practice, we approximate \eqref{eq:denoising_score_matching} by its empirical version. Specifically, given $n$ i.i.d. data points $\xb_i \sim P_{\rm data}$ for $i = 1, \dots, n$, we sample $\Xb_t$ given $\Xb_0 = \xb_i$ from ${\sf N}(\alpha(t)\xb_i, h(t)I_D)$. We also sample time $t$ uniformly from interval $[t_0, T]$ for some small $t_0 > 0$. (In Section \ref{sec:distro_guarantee}, we will choose $t_0$ based on sample size $n$.) The reason behind avoiding $[0, t_0]$ is to prevent score from blowing up and stabilize training \citep{vahdat2021score, song2020improved}. To this end, the empirical score matching objective is
\begin{align}\label{eq:score_matching_empirical}
\min_{\sbb \in \cS}~ \hat{\cL}(\sbb) & = \frac{1}{n} \sum_{i=1}^n \ell(\xb_i; \sbb),
\end{align}
where the loss function $\ell(\xb_i; \sbb)$ is defined as $$\ell(\xb_i; \sbb) = \frac{1}{T-t_0} \int_{t_0}^T \EE_{\Xb_t |\Xb_0 =  \xb_i} [\norm{\nabla_{\Xb_t} \log \phi_t(\Xb_t | \Xb_0) - \sbb(\Xb_t, t)}_2^2] \diff t.$$ Note that we have already taken $w(t) = 1/(T-t_0)$ for simplicity and assumed sufficient sampling of $\Xb_t|\xb_i$ and $t$, as they are cheap to generate. For notational convenience, we denote population loss $\cL(\cdot) = \EE_{P_{\rm data}} [\hat{\cL}(\cdot)]$.
%!TEX root = diffusion_arxiv.tex

\section{Score decomposition}\label{sec:score_decomp}
In this section, we show that for a low-dimensional data distribution, the score function can be decomposed -- each component of the score function has distinct properties. Exploiting these properties enables an efficient approximation and estimation of the score function; see Section \ref{sec:score_approx_est}.

We consider data $\xb \in \RR^D$ supported on a $d$-dimensional unknown linear subspace with $d \ll D$.
\begin{assumption}\label{assumption:subspace_data}
Data point $\xb$ can be written as $\xb = A \zb$, where $A \in \RR^{D \times d}$ is an unknown matrix with orthonormal columns. The latent variable $\zb \in \RR^d$ follows some distribution $P_z$ with a density function $p_z$.
\end{assumption} 

Given such a low dimensional structure of the data, we can show that the ground-truth score function has the following decomposition.
\begin{lemma}\label{lemma:subspace_score}
Let data $\xb = A\zb$ follows Assumption~\ref{assumption:subspace_data}. The score function $\nabla \log p_t(\xb)$ decomposes as
\begin{align*}
\nabla \log p_t(\xb) = \underbrace{A\nabla \log p_t^{\sf LD}(A^\top \xb)}_{\pscore(A^\top \xb, t) \text{:~on-support~score}}   \underbrace{- \frac{1}{h(t)} \left(I_D - AA^\top \right) \xb}_{\oscore(\xb, t) \text{:~ortho.~score}},
\end{align*}
where
\[
p_t^{\sf LD}(\zb') =  \int  \phi_t(\zb'|\zb)p_z(\zb) \diff \zb
\]
with $\phi_t( \cdot | \zb)$ being the Gaussian density function of ${\sf N}(\alpha(t)\zb, h(t)I_d)$ for $\alpha(t) = e^{-t/2}$ and $h(t) = 1 - e^{-t}$.
\end{lemma}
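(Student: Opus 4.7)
The plan is to compute $p_t$ directly as a convolution of the data-supported distribution with a Gaussian, and then exploit the orthogonal decomposition $I_D = AA^\top + (I_D - AA^\top)$ to separate the resulting density into a product of two factors — one depending only on the coordinates $A^\top \xb$ along the subspace, the other depending only on the orthogonal residual $(I_D - AA^\top)\xb$. Once this factorization is in hand, $\log p_t$ splits as a sum, and taking $\nabla_\xb$ termwise immediately produces the two announced components of the score.

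First I would write
\begin{align*}
p_t(\xb) = \int \phi_t(\xb \mid A\zb)\, p_z(\zb)\, \diff \zb,
\end{align*}
where $\phi_t(\xb \mid A\zb)$ is the density of ${\sf N}(\alpha(t) A\zb, h(t) I_D)$. The key algebraic step is to note that, since $A\zb$ lies in the column span of $A$ and $A$ has orthonormal columns, the Pythagorean identity gives
\begin{align*}
\|\xb - \alpha(t) A\zb\|_2^2 = \|A^\top \xb - \alpha(t)\zb\|_2^2 + \|(I_D - AA^\top)\xb\|_2^2.
\end{align*}
Plugging this into the Gaussian exponent lets me factor the integrand as the product of (i) a constant-in-$\zb$ Gaussian in the orthogonal direction and (ii) the $d$-dimensional Gaussian density of ${\sf N}(\alpha(t)\zb, h(t) I_d)$ evaluated at $A^\top \xb$, which is exactly $\phi_t(A^\top\xb \mid \zb)$. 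The normalization constants then split as $(2\pi h(t))^{-D/2} = (2\pi h(t))^{-(D-d)/2}\cdot(2\pi h(t))^{-d/2}$.

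Integrating out $\zb$ against $p_z$ in this factored form yields
\begin{align*}
p_t(\xb) = (2\pi h(t))^{-(D-d)/2}\exp\!\left(-\tfrac{\|(I_D - AA^\top)\xb\|_2^2}{2h(t)}\right) \cdot p_t^{\sf LD}(A^\top \xb),
\end{align*}
where $p_t^{\sf LD}$ is precisely the definition given in the statement. Taking logarithms and then $\nabla_\xb$ finishes the proof: the first factor contributes $-h(t)^{-1}(I_D - AA^\top)\xb$ (using that $I_D - AA^\top$ is symmetric idempotent), and the chain rule applied to $\log p_t^{\sf LD}(A^\top \xb)$ contributes $A\,\nabla \log p_t^{\sf LD}(A^\top \xb)$. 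No serious obstacle is expected; the only point requiring care is the orthogonal decomposition of the squared norm, which hinges on $A^\top A = I_d$ and on $\alpha(t)A\zb$ belonging to the subspace.
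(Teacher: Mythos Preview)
Your proposal is correct and uses essentially the same key ingredient as the paper's proof, namely the Pythagorean decomposition $\|\xb - \alpha(t)A\zb\|_2^2 = \|A^\top\xb - \alpha(t)\zb\|_2^2 + \|(I_D-AA^\top)\xb\|_2^2$. The only difference is ordering: you factor $p_t$ into a product first and then take $\nabla\log$, whereas the paper differentiates under the integral sign first and only afterwards invokes the factorization to simplify $\sbb_\parallel$; your route is slightly more direct but the content is the same.
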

The proof follows from algebraic manipulation, which is deferred to Appendix \ref{pf:subspace_score}. Here $p_t^{\sf LD}$ denotes a density function on the latent space (superscript stands for ``latent distribution''). The on-support score $\sbb_{\parallel}$ belongs to the column span of $A$, depends on the projected data $A^\top \xb$, and is orthogonal to $\sbb_{\perp}$. When $t \to 0$, we can check that $\sbb_{\perp}$ will blow up since $h(t) \to 0$. This observation is consistent with the score blowup phenomenon for manifold data \citep{song2020improved, kim2021soft, pidstrigach2022score, de2022convergence}, as our linear subspace is a special type of manifolds.

The decomposition of $\nabla \log p_t$ also suggests a decomposition of the backward process. Specifically, we denote $\Ybp = AA^\top \bXb_t$ and $\Ybo = (I_D - AA^\top) \bXb_t$. Then the dynamic in \eqref{eq:backward_sde} leads to
\begin{align*}
\diff \Ybp & = \left[\frac{1}{2} \Ybp + \pscore(\Ybp, T-t) \right] \diff t + AA^\top \diff \overline{\Wb}_{t}, \\
\diff \Ybo & = \left[\frac{1}{2} - \frac{1}{h(T-t)} \right] \Ybo \diff t + (I_D - AA^\top) \diff \overline{\Wb}_{t}.
\end{align*}
A graphical illustration is provided in Figure \ref{fig:process_decomp}. The dynamics of $\Ybp$ incorporates information from the latent distribution $P_z$, while the dynamics of $\Ybo$ is linear and much simpler. The interesting part is that the coefficient in the drift term of $\Ybo$ is always negative, indicating that $\Ybo$ will vanish eventually and the data support will be perfectly recovered.
\begin{figure}[htb!]
\centering
\includegraphics[width = 0.7\textwidth]{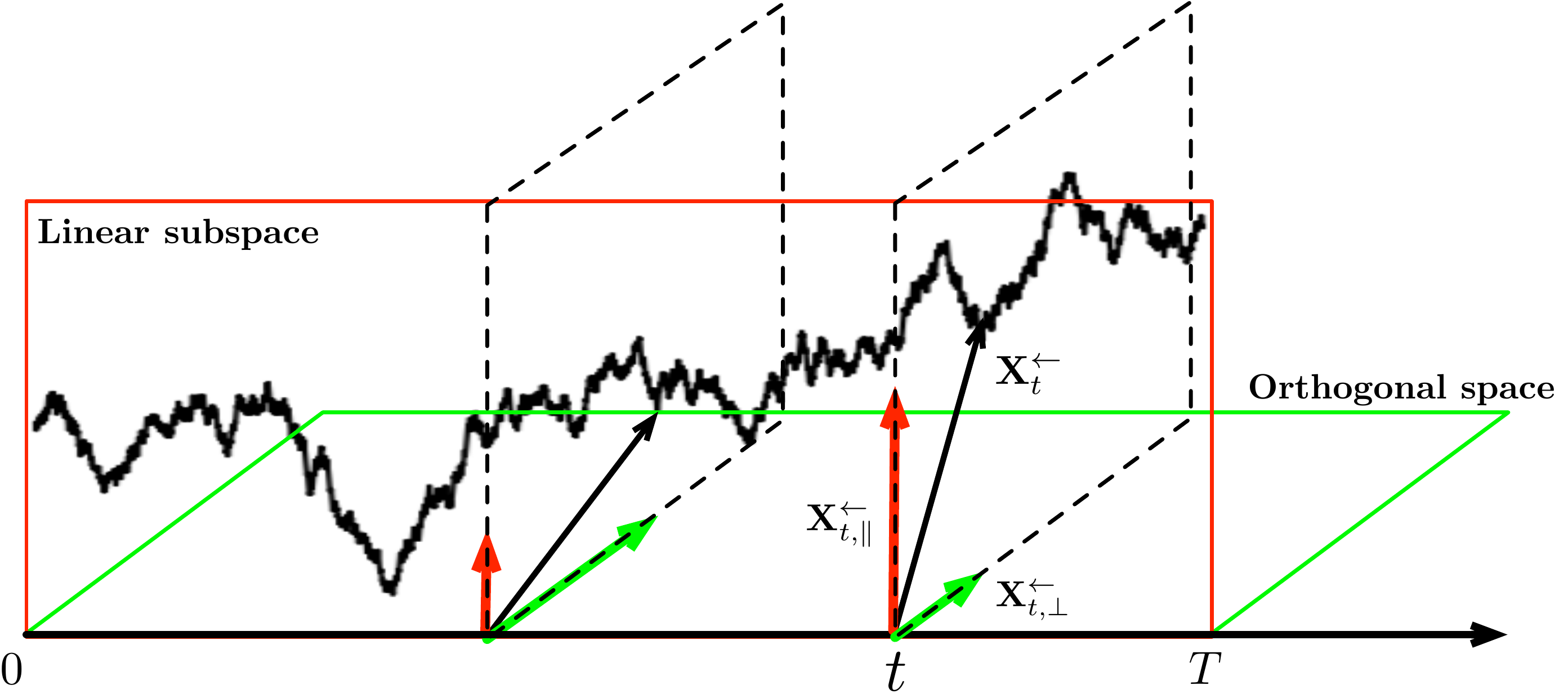}
\caption{Demonstration of score decomposition induces two backward processes.}
\label{fig:process_decomp}
\end{figure}

For better interpretation, we analyze a Gaussian example. Detailed computation is provided in Appendix \ref{pf:example_gaussian}.
\begin{example}\label{example:gaussian}
We take latent distribution $P_z = {\sf N}(\boldsymbol{0}, \Sigma)$ with $\Sigma = \diag(\lambda^2_1, \dots, \lambda_d^2) \succ 0$, a $d$-dimensional Gaussian distribution. The score function can be computed as
\begin{align*}
\nabla \log p_t(\xb) = \underbrace{- A \Sigma_t^{-1} A^\top \xb}_{\sbb_\parallel}\underbrace{  - \frac{1}{h(t)} (I_D - AA^\top)\xb}_{\sbb_{\perp}},
\end{align*}
where $\Sigma_t = \diag(\dots,  \alpha^2(t) \lambda_k^2 + h(t) , \dots)$.

One can verify that $\sbb_{\parallel}$ now is linear in $\xb$, whereas $\sbb_{\perp}$ blows up when $t$ approaches $0$. Moreover, only the on-support score $\sbb_{\parallel}$ carries the covariance information of the latent distribution and will guide the distribution recovery.

A closer evaluation further reveals $\pscore$ is Lipschitz continuous, i.e.,
\begin{align*}
\norm{\pscore(\zb_1, t) - \pscore(\zb_2, t)}_{2} \leq \max\{\lambda_d^{-2}, 1 \}\norm{\zb_1 - \zb_2}_2
\end{align*}
for any $t \in [0, T]$ and $\zb_1, \zb_2$, and
\begin{align*}
\norm{\pscore(\zb, t_1) - \pscore(\zb, t_2)}_{2} \leq \max\{\lambda_d^{-2}, 1\} \norm{\zb}_2 |t_1 - t_2|.
\end{align*}
for any $\zb$ and $t_1, t_2 \in [0, T]$. Such properties are essential to develop score approximation and estimation results.
\end{example}

\section{Score approximation and estimation}\label{sec:score_approx_est}
In practice, score functions are approximated by  neural networks. To ensure an effective learning, the network class should be expressive enough to approximate the score function. This section first establishes a score approximation theory. Built upon the approximation theory, we next provide statistical guarantees of the score matching.

\subsection{Score approximation}
We rearrange terms of $\nabla \log p_t$ in Lemma \ref{lemma:subspace_score} as
\[
     \nabla \log p_t(\xb) =  \frac{1}{h(t)} A \big(h(t)\nabla \log p_t^{\sf LD}(A^\top\xb) + A^\top \xb \big) - \frac{1}{h(t)} \xb.
\]

Accordingly, we consider score networks in the form of
\begin{align*}
\cS_{\rm NN} = \bigg\{\sbb_{V, \btheta}(\xb, t) = \frac{1}{h(t)} V {\fb}_{\btheta}(V^\top \xb, t) - \frac{1}{h(t)}\xb & :~ V \in \RR^{D \times d}~ \text{with~orthonormal~columns}, \\
& \quad {\fb}_{\btheta}: \RR^d \times [t_0, T] \to \RR^d~\text{a~ReLU~network} \bigg\}.
\end{align*}
\begin{figure}[htb!]
\centering
\includegraphics[width = 0.65\textwidth]{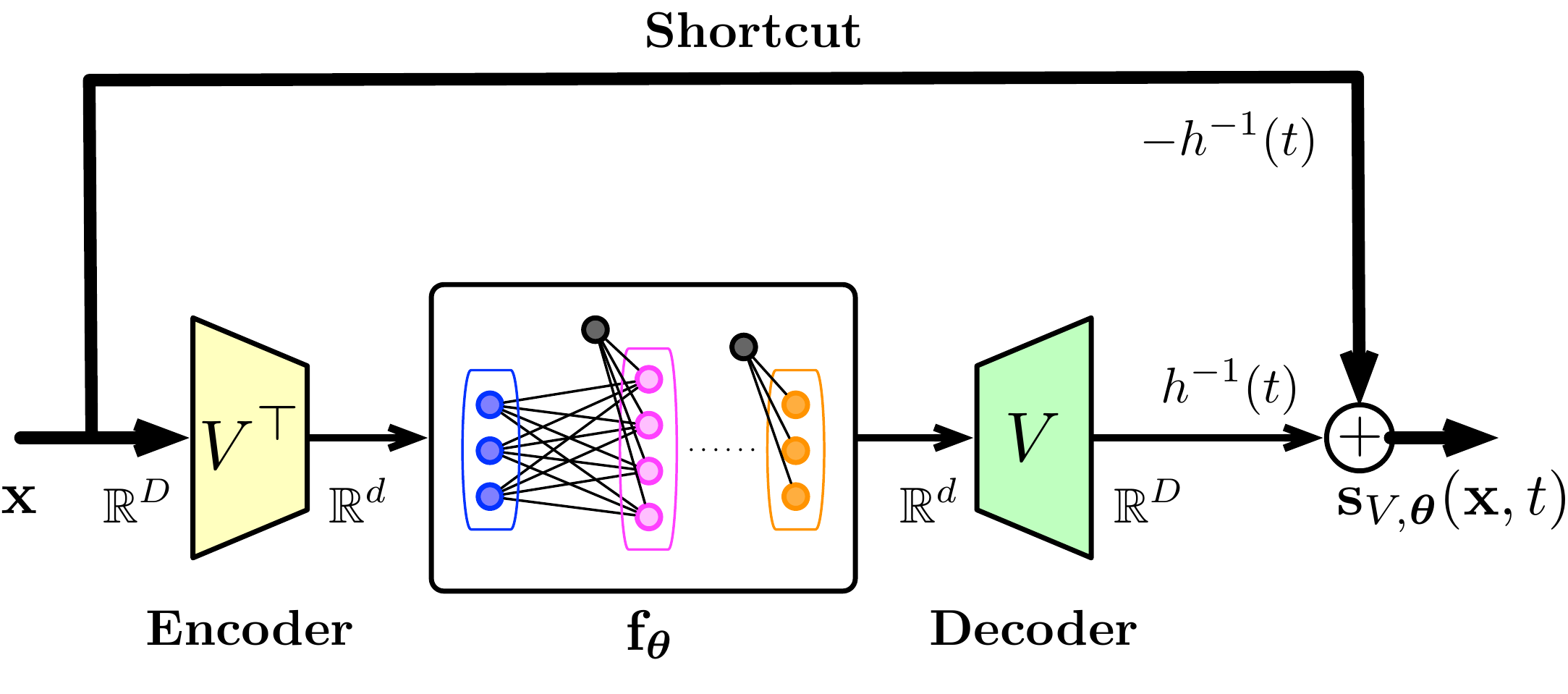}
\caption{Network architecture of $\cS_{\rm NN}$.}
\label{fig:SNN}
\end{figure}
\begin{remark} 
The network family $\cS_{\rm NN}$ resembles commonly used architectures of score networks, e.g., U-Net \citep{ronneberger2015u}: (1) $-\frac{1}{h(t)} \xb$ contributes as a shortcut connection; (2) $V\fb_{\btheta}(V^\top \xb, t)$ retains an encoder-decoder structure, where $V$, $V^\top$ are the linear decoder and encoder, respectively. See Figure \ref{fig:SNN} for an illustration of the network architecture. We will show later that through score matching, $V$ indeed recovers the unknown data subspace.
\end{remark}

We configure the ReLU network $\fb_{\btheta}$ in $\cS_{\rm NN}$ by hyperparameters. Specifically, $\fb_{\btheta} \in {\rm NN}(L, M, J, K, \kappa, \gamma, \gamma_t)$ with
\begin{align*}
{\rm NN}(L, M, J, K, \kappa, \gamma, \gamma_t) = & \Big \{\fb(\zb, t) = W_L \sigma( \dots \sigma(W_1 [\zb^\top, t]^\top + \bbb_1) \dots ) + \bbb_{L} : \\
& \quad  \text{network~width~bounded~by~}M, ~\sup_{\zb, t} \norm{\fb(\zb, t)}_2 \leq K, \\
& ~~~ \max\{\norm{\bbb_i}_\infty, \norm{W_i}_\infty\} \leq \kappa ~\text{for}~i = 1, \dots, L, \\
& ~~~ \sum_{i=1}^L \big( \norm{W_i}_0 + \norm{\bbb_i}_0 \big) \leq J, \\
& ~~~ \norm{\fb(\zb_1, t) - \fb(\zb_2, t)}_2 \leq \gamma \norm{\zb_1 - \zb_2}_2~\text{for~any~}t \in [0, T], \\
& ~~~ \norm{\fb(\zb, t_1) - \fb(\zb, t_2)}_2 \leq \gamma_t |t_1 - t_2 |~\text{for~any~}\zb \Big\},
\end{align*}
where the network width refers to the maximum dimensions of the weight matrices, $\sigma$ is the ReLU activation, and $\norm{\cdot}_\infty$ and $\norm{\cdot}_0$ denote the maximum magnitude of entries and the number of nonzero entries, respectively. In the sequel, we write $\cS_{\rm NN}(L, M, J, K, \kappa, \gamma, \gamma_t)$ to reflect the configuration of $\fb_{\btheta}$. To establish our score approximation theory, we impose an assumption on the latent distribution $P_z$.
\begin{assumption}\label{assumption:pz}
The density function $p_z > 0$ is twice continuously differentiable. Moreover, there exist positive constants $B, C_1, C_2$ such that when $\norm{\zb}_2 \geq B$, the density function $p_z(\zb) \leq (2\pi)^{-d/2} C_1 \exp(-C_2 \norm{\zb}_2^2 / 2)$.
\end{assumption}
Assumption \ref{assumption:pz} describes the tail behavior of $P_z$ being sub-Gaussian, which is commonly adopted in high-dimensional statistics literature \citep{vershynin2018high, wainwright2019high}. We also need the following regularity assumption on the score function.
\begin{assumption}\label{assumption:score_lip}
The on-support score function $\sbb_{\parallel}(\zb, t)$ is $\beta$-Lipschitz in $\zb \in \RR^d$ for any $t \in [0, T]$.
\end{assumption}
Lipschitz score functions are a standard assumption in existing literature \citep{block2020generative, lee2022convergencea, chen2022sampling}. Yet Assumption \ref{assumption:score_lip} only requires the Lipschitz continuity of the on-support score. As an example, the Gaussian data in Example \ref{example:gaussian} verifies Assumption \ref{assumption:score_lip}. We remark that $\nabla \log p_t$ itself is $(\beta+\frac{1}{h(t)})$-Lipschitz, which matches the weaker assumption of \citet[Assumption 3]{lee2022convergenceb}. When $t$ goes to zero, the Lipschitz constant of $\nabla \log p_t$ goes to infinity. 

The following theorem presents an approximation theory using $\cS_{\rm NN}$ for score functions.
\begin{theorem}\label{thm:score_approximation}
Given an approximation error $\epsilon > 0$, we choose $\cS_{\rm NN}$ with
\begin{align*}
& L = \cO\left(\log \frac{1}{\epsilon} + d \right), ~ K = \cO\left(2d^2 \log \left(\frac{d}{t_0 \epsilon}\right)\right), \\
& M = \cO\left((1+\beta)^{d}T \tau d^{d/2+1} \epsilon^{-(d+1)} \log^{d/2} \left(\frac{d}{t_0\epsilon}\right)\right), \\
& J = \cO\left((1+\beta)^d T \tau d^{d/2+1} \epsilon^{-(d+1)} \log^{d/2} \left(\frac{d}{t_0\epsilon}\right) \left(\log \frac{1}{\epsilon} + d \right)\right), \\
& \kappa = \cO\left(\max\left\{2(1+\beta) \sqrt{d\log \left(\frac{d}{t_0\epsilon}\right)}, T\tau\right\}\right), \\
& \gamma = 10d (1+\beta), ~ \gamma_t = 10\tau,
\end{align*}
where $\tau = \sup_{t \in [t_0, T]} \sup_{\norm{\zb}_\infty \leq \sqrt{d\log \frac{d}{t_0\epsilon}}} \norm{\frac{\partial}{\partial t}[h(t)\sbb_{\parallel}(\zb, t)]}_2$. Then for any data distribution $P_{\rm data}$ satisfying Assumptions 
\ref{assumption:subspace_data} -- \ref{assumption:score_lip}, there exists an $\bar{\sbb}_{V, \btheta} \in \cS_{\rm NN}$ such that for any $t \in [t_0, T]$, we have
\begin{align*}
\norm{\bar{\sbb}_{V, \btheta}(\cdot, t) - \nabla \log p_t(\cdot)}_{L^2(P_t)} \leq \frac{\sqrt{d} + 1}{h(t)} \epsilon.
\end{align*}
\end{theorem}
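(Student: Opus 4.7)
The plan is to use Lemma~\ref{lemma:subspace_score} to collapse the $D$-dimensional score approximation problem to a $d$-dimensional ReLU approximation problem on the latent space. First I would set $V = A$ in the score network. A direct algebraic manipulation using $A^\top A = I_d$ together with Lemma~\ref{lemma:subspace_score} yields
\begin{align*}
\bar{\sbb}_{A,\btheta}(\xb,t) - \nabla \log p_t(\xb) = \frac{1}{h(t)} A\bigl[\fb_{\btheta}(A^\top \xb, t) - g(A^\top \xb, t)\bigr],
\end{align*}
where $g(\zb, t) := h(t) \nabla \log \ptz(\zb) + \zb$ is a vector-valued function on the latent space. Since $A$ has orthonormal columns and $A^\top \xb \sim \ptz$ when $\xb \sim P_t$, it suffices to construct a ReLU network $\fb_{\btheta}$ with $\EE_{\zb\sim\ptz}\norm{\fb_{\btheta}(\zb,t) - g(\zb,t)}_2^2 \leq (\sqrt{d}+1)^2 \epsilon^2$ uniformly in $t \in [t_0, T]$.

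Next I would truncate the latent domain. Because $p_z$ is sub-Gaussian (Assumption~\ref{assumption:pz}) and convolution with the Gaussian kernel $\phi_t$ preserves sub-Gaussianity, $\ptz$ has sub-Gaussian tails. Choosing a truncation radius $R = \Theta(\sqrt{d \log(d/(t_0\epsilon))})$ splits the $L^2$ integral into a bulk piece on $\{\norm{\zb}_\infty \leq R\}$ and a tail piece; the tail is bounded by pairing the sub-Gaussian decay against the at-most-linear growth of $g$ (which follows from Assumption~\ref{assumption:score_lip}), giving an $\cO(\epsilon^2)$ contribution. On the bulk, $g$ is $(1+\beta)$-Lipschitz in $\zb$ uniformly in $t$ (since $h(t) \leq 1$) and $\tau$-Lipschitz in $t$ by the definition of $\tau$. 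I would then apply a Yarotsky-style piecewise-linear approximation on a regular grid of mesh $\sim \epsilon/(1+\beta)$ inside the $(d+1)$-dimensional box $[-R,R]^d \times [t_0,T]$, component-wise over the $d$ output coordinates of $g$. This produces a ReLU network of depth $\cO(\log(1/\epsilon) + d)$, width $\cO((1+\beta)^d R^d / \epsilon^{d+1})$ and sparsity of the same order times the depth. Aggregating the per-coordinate bulk error $\epsilon$ across the $d$ output dimensions supplies the $\sqrt{d}$ factor, and substituting $R \sim \sqrt{d\log(d/(t_0\epsilon))}$ reproduces the stated $L, M, J, K, \kappa$. I would finally clip the output at magnitude $K$ and track the slope of the piecewise-linear interpolator to enforce $\gamma = 10d(1+\beta)$ and $\gamma_t = 10\tau$.

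The main obstacle is the coupling of scales. The truncation radius $R$ depends on $t_0$ through the logarithm (because the orthogonal score blows up as $t \to 0$, which is why the early-stopping threshold enters), the time-Lipschitz constant $\tau$ itself can depend on $t_0$, and the final bound is normalized by the prefactor $1/h(t)$ that diverges at $t = 0$. The construction must therefore simultaneously produce a \emph{single} network that is globally Lipschitz in $(\zb,t)$---rather than a family of per-slice networks---forcing the Yarotsky approximation to be carried out on the joint $(d+1)$-dimensional domain; and it must keep the bulk approximation error exactly on the order of $\epsilon$ without further $1/h(t)$ amplification inside, so that after multiplying by the $1/h(t)$ prefactor from the reduction step we land precisely at the advertised $(\sqrt{d}+1)\epsilon/h(t)$ bound.
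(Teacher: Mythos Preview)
Your proposal is correct and follows essentially the same approach as the paper: set $V=A$ to reduce to approximating $\gb(\zb,t)=h(t)\nabla\log\ptz(\zb)+\zb$ on the latent space, truncate to a hypercube of radius $R=\Theta(\sqrt{d\log(d/(t_0\epsilon))})$, use a piecewise-linear (partition-of-unity / Yarotsky-type) construction on the $(d{+}1)$-dimensional box to control the bulk, and invoke the sub-Gaussian tail of $\ptz$ against the at-most-linear growth of $\gb$ for the remainder. The paper carries out exactly this program, citing \citet[Lemma~10]{chen2020statistical} for the bulk construction and its Lipschitz analysis, and a dedicated truncation lemma (Lemma~\ref{lemma:truncation_error}) for the tail; your identification of the $\sqrt{d}$ factor from aggregating coordinatewise errors and of the $1/h(t)$ prefactor from the reduction step also matches.
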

The proof is provided in Appendix \ref{pf:score_approximation}. Theorem \ref{thm:score_approximation} confirms the universal approximation ability of $\cS_{\rm NN}$ for score functions. A few remarks are in order.

\paragraph{Universal approximation under the $L^2$ norm} Many existing universal approximation theory of neural networks focus on approximating target functions on a compact domain under the $L^\infty$ norm \citep{yarotsky2017error, schmidt2017nonparametric, chen2019efficient, guhring2020error}. Instead, we provide an $L^2$-approximation error bound over the unbounded input domain $\RR^D$, where we tackle the unboundedness through a truncation argument. In addition, thanks to the encoder-decoder architecture, the network size only depends on the intrinsic dimension $d$ of data.

\paragraph{Lipschitz score network} Conventional universal approximation theory of neural networks hardly provide network Lipschitz continuity guarantees \citep{cybenko1989approximation, barron1993universal, yarotsky2017error}. By our construction, the Lipschitz constraints $\gamma$ and $\gamma_t$ do not undermine the approximation power of score networks. In practice, such a Lipschitz regularity is often enforced during training, e.g., adding regularization  \citep{virmaux2018lipschitz, pauli2021training, gouk2021regularisation}. Further, from a theoretical perspective, the Lipschitz property of the estimated score is essential to bounding the distribution recovery error, as we demonstrate in Section \ref{sec:distro_guarantee}.

\paragraph{Time as an additional input dimension} We take time $t$ as an additional input dimension to the score network. The network size depends on the Lipschitz constant $\tau$. We show a very coarse upper bound of $\tau$ in Appendix \ref{pf:score_approximation}. However, $\tau$ depends on the latent distribution $P_z$ and is highly instance specific. In Example \ref{example:gaussian}, we have $\tau = \cO(\sqrt{d\log \left(d/(t_0 \epsilon)\right)})$, much smaller than its coarse upper bound. More interestingly, in practice, time $t$ is embedded using sinusoidal positional encoding scheme \citep{vaswani2017attention} and the processed embedding is added to the input data. Such a dimensional lift of time opens research directions, however, the analysis is beyond the scope of this paper.

\subsection{Score estimation theory}

In this subsection, we provide sample complexity for score estimation using $\cS_{\rm NN}$. As we have parameterized the score function using deep neural networks, we can rewrite the score matching objective in \eqref{eq:score_matching_empirical} as 
\begin{align*}
\hat{\sbb}_{V, \btheta} \in \argmin_{\sbb_{V, \btheta} \in \cS_{\rm NN}} \hat{\cL}(\sbb_{V, \btheta}),
\end{align*}
where $\hat{\cL}$ is defined in \eqref{eq:score_matching_empirical}. The following theorem establishes the $L^2$ convergence of $\hat{\sbb}_{V, \btheta}$ to $\nabla \log p_t$ when the sample size $n \to \infty$.

\begin{theorem}\label{thm:score_estimation}
Suppose Assumptions \ref{assumption:subspace_data} -- \ref{assumption:score_lip} hold. We choose $\cS_{\rm NN}$ as in Theorem \ref{thm:score_approximation} with $\epsilon = n^{-\frac{1 - \delta(n)}{d+5}}$ for $\delta(n) = \frac{d\log\log n}{\log n}$. Then with probability $1 - \frac{1}{n}$, it holds
\begin{align*}
& \frac{1}{T-t_0}\int_{t_0}^T \norm{\hat{\sbb}_{V, \btheta}(\cdot, t) - \nabla \log p_t(\cdot)}_{L^2(P_t)}^2 \diff t = \tilde{\cO}\left(\frac{1}{t_0} \left(n^{-\frac{2 - 2\delta(n)}{d+5}} + Dn^{-\frac{d+3}{d+5}}\right)\log^{3} n\right),
\end{align*}
where $\tilde{\cO}$ hides factors depending on $\beta$, $\log D$, $d$, $\log t_0$ and $\tau$ defined in Theorem \ref{thm:score_approximation}.
\end{theorem}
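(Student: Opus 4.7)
The plan is to cast the theorem as a standard excess-risk bound for empirical risk minimization over $\cS_{\rm NN}$, combining the approximation guarantee of Theorem~\ref{thm:score_approximation} with a uniform concentration bound over the network class. The starting point is the denoising score matching identity of \citet{vincent2011connection}, which, because $\nabla \log p_t$ is the population minimizer of both the score matching and denoising score matching objectives, yields
\[
\frac{1}{T-t_0}\int_{t_0}^T \|\sbb(\cdot,t) - \nabla\log p_t\|_{L^2(P_t)}^2 \, dt = \cL(\sbb) - \cL(\nabla\log p_t).
\]
Writing $\bar\sbb$ for the approximator from Theorem~\ref{thm:score_approximation}, I would use optimality of $\hat\sbb_{V,\btheta}$ on $\hat\cL$ to decompose
\[
\cL(\hat{\sbb}_{V,\btheta}) - \cL(\nabla \log p_t) \le \bigl[\cL(\bar\sbb) - \cL(\nabla\log p_t)\bigr] + 2\sup_{\sbb\in\cS_{\rm NN}}\bigl|\cL(\sbb) - \hat\cL(\sbb)\bigr|.
\]
The approximation term is controlled by squaring the Theorem~\ref{thm:score_approximation} bound and integrating $1/h(t)^2$ over $[t_0,T]$, producing a contribution of $\tilde{\cO}(\epsilon^2/t_0)$.

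The second step is to bound the empirical-process supremum. The chief technical difficulty is that the per-sample loss $\ell(\xb;\sbb)$ is not bounded: $\Xb_t \mid \xb$ is Gaussian, the conditional score $\nabla_{\Xb_t}\log\phi_t(\cdot\mid\xb)$ scales like $\|\Xb_t - \alpha(t)\xb\|/h(t)$, and the network output carries an explicit $1/h(t)$ factor. I would truncate to the event that $\|\xb_i\|_2$ and $\|\Xb_t\|_2$ are all $\lesssim \sqrt{\log n}$; by Assumption~\ref{assumption:pz} and Gaussian concentration of $\Xb_t\mid\xb$, the complement contributes only a lower-order $n^{-c}$ term. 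On this event Assumption~\ref{assumption:score_lip} and the built-in bound $K$ make the loss bounded by $\tilde{\cO}(1/h(t)^2) \le \tilde{\cO}(1/t_0^2)$.

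Third, I would bound the covering number of $\cS_{\rm NN}$ in $L^\infty$ on the truncation box. A standard parameter-counting argument for ReLU networks gives $\log\cN(\delta; \mathrm{NN}(L,M,J,K,\kappa,\gamma,\gamma_t),L^\infty) = \tilde{\cO}(J\log(\kappa L/\delta))$, while the Stiefel manifold of admissible $V$ contributes an additional $\tilde{\cO}(Dd\log(1/\delta))$; the explicit Lipschitz constraints $\gamma,\gamma_t$ are what let an $L^\infty$ cover control the time-integrated loss uniformly. A Dudley-chaining / symmetrization argument then gives
\[
\sup_{\sbb\in\cS_{\rm NN}}\bigl|\cL(\sbb) - \hat\cL(\sbb)\bigr| = \tilde{\cO}\!\left(\frac{1}{t_0}\sqrt{\frac{(J + Dd)\log n}{n}}\right) + n^{-c}.
\]
Plugging the Theorem~\ref{thm:score_approximation} configuration $J = \tilde{\cO}(\epsilon^{-(d+1)})$ into this bound and balancing $\epsilon^2/t_0$ against $\sqrt{\epsilon^{-(d+1)}\log n/(n t_0^2)}$ yields $\epsilon^{d+5} \asymp 1/n$, i.e., $\epsilon \asymp n^{-(1-\delta(n))/(d+5)}$ with $\delta(n) = d\log\log n/\log n$ absorbing the polylogarithmic overhead. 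This gives the first term of the stated rate; the $Dd$ piece of the covering cost produces the additive $Dn^{-(d+3)/(d+5)}$ term, the $1/t_0$ prefactor comes from the singular normalization of the loss, and the $\log^3 n$ aggregates the truncation-radius, covering-number, and Dudley logarithmic factors.

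The main obstacle is the coupling between the time singularity at $t_0$ and the unboundedness of the Gaussian conditional $\Xb_t\mid\xb$: a naive uniform bound would propagate as $1/h(t)^4$ and would drag in polynomial factors of $D$. Avoiding this requires keeping the truncation radius at $\mathrm{polylog}(n)$ via Assumption~\ref{assumption:pz}, exploiting the encoder-decoder structure so that $D$ enters the complexity only through the cover of the orthonormal $V$, and using the built-in Lipschitz constraints $\gamma,\gamma_t$ so that the time-integral in $\hat\cL$ does not inflate the covering bound beyond logarithmic factors.
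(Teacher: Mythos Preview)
Your approach would prove the theorem, but it differs from the paper's route in two respects worth noting.

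First, the paper does not bound the symmetric deviation $\sup_{\sbb\in\cS_{\rm NN}}|\cL(\sbb)-\hat\cL(\sbb)|$. Instead it uses a one-sided Bernstein-type inequality (Lemma~\ref{lemma:concentration_bernstein}) comparing $\cL^{\rm trunc}$ against $(1+a)\hat\cL^{\rm trunc}$ for a small slack $a>0$, yielding a ``fast-rate'' term of order $\tfrac{B}{an}\log\cN$ rather than your slow-rate $B\sqrt{\log\cN/n}$; the choice $a=\epsilon^2$ then pays for the residual $((1+a)^2-1)E\approx 2\epsilon^2 E$ with $E\lesssim d$. Because the loss envelope $B$ is only polylogarithmic here, both routes give the same exponent $n^{-2/(d+5)}$ after balancing. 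One visible difference: the paper's Bernstein term produces the stated $Dn^{-(d+3)/(d+5)}$ form directly from the $Dd$ portion of $\log\cN$, whereas your Dudley bound actually gives $\sqrt{Dd/n}$, not $Dn^{-(d+3)/(d+5)}$ as you assert. Since $\sqrt{Dd/n}\le n^{-2/(d+5)}+Dn^{-(d+3)/(d+5)}$ always holds, the theorem's bound still follows, but your claimed mechanism for that term is not quite right.

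Second, truncating $\|\Xb_t\|$ is unnecessary. The paper truncates only the clean data $\|\xb_i\|\le R$: because $\ell(\xb;\sbb)$ already contains the inner expectation over $\Xb_t\mid\xb$, and because the identity $\sbb_{V,\btheta}(\xb',t)-\nabla_{\xb'}\log\phi_t(\xb'\mid\xb)=\tfrac{1}{h(t)}\bigl[V\fb_\btheta(V^\top\xb',t)-\alpha(t)\xb\bigr]$ makes the integrand bounded by $2(K^2+\|\xb\|^2)/h(t)^2$ uniformly in $\xb'$, one gets $\ell^{\rm trunc}(\xb;\sbb)=\cO\!\bigl((K^2+R^2)/t_0\bigr)$ without ever controlling $\|\Xb_t\|$. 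Your route is a bit more elementary; the paper's buys slightly tighter constants and the exact $D$-dependence in the statement.
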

The proof is provided in Appendix \ref{pf:score_estimation}. To the best of our knowledge, Theorem \ref{thm:score_estimation} is the first explicit sample complexity bound for score matching. The rate of convergence only depends on intrinsic dimension $d$. When $n$ is sufficiently large, $\delta(n)$ is negligible and the squared $L^2$ estimation error converges at a rate of $\tilde{\cO}(\frac{1}{t_0} n^{-\frac{2}{d+5}})$. (We hide other factors depending on $d$ in the bound to highlight the fast convergence in terms of sample size $n$.
As $d$ is often much smaller than $D$ and $n$ is large for diffusion models, those factors on $d$ do not undermine the convergence guarantee.)

Theorem \ref{thm:score_estimation} becomes vacuous if $t_0 \to 0$ when $n$ is fixed. This is a consequence of the blowup of score function $\nabla \log p_t$ as $t_0 \to 0$. Although larger $t_0$ leads to a better estimation error bound, following the backward process until a large time $t_0$ gives poor distribution recovery. In the following section, we will show a tradeoff on $t_0$.

\section{Distribution estimation}\label{sec:distro_guarantee}

This section establishes distribution estimation guarantees using the estimated score functions. Recall that in reality, diffusion models generate data using the discretized backward process \eqref{eq:backward_piecewise} with step size $\eta$. Given an estimated score function $\hat{\sbb}_{V, \btheta}$ as in Theorem \ref{thm:score_estimation}, we denote the generated distribution by $\hat{P}^{\sf dis}_{t_0}$.

We focus on three major criteria to assess the quality of $\hat{P}_{t_0}^{\sf dis}$: 1). How accurate is the subspace $A$ recovered; 2). What is the estimation error of $\hat{P}_{t_0}^{\sf dis}$ to the on-support latent distribution $P_z$; 3). What is the behavior of $\hat{P}_{t_0}^{\sf dis}$ in the orthogonal space.

Recall from Lemma \ref{lemma:subspace_score}, we denote on-support latent distribution as $P^{\sf LD}_t$ with density function $p^{\sf LD}_t$. Since we early-stop at time $t_0$, we compare the estimated distribution with $P_{t_0}^{\sf LD}$. Now we summarize our results in the following theorem.
\begin{theorem}\label{thm:distro_estimation}
Given the estimated score $\hat{\sbb}_{V, \btheta} \in \cS_{\rm NN}$ in Theorem \ref{thm:score_estimation}, we choose $T = \Theta(\log n), t_0 = {\cO}(\min\{\Czero,  1/\beta\})$, where $c_0 = \sigma_{\min}(\EE_{P_z}[\zb \zb^\top])$ is the minimum eigenvalue. Then the following items hold with probability $1 - \frac{1}{n}$.

\noindent {\bf 1)}. The unknown data subspace is recovered as
\begin{align*}
\norm{VV^\top - AA^\top}_{\rm F}^2 = \tilde{\cO}\left(\frac{1}{c_0} n^{-\frac{2-2\delta(n)}{d+5}}\log^{7/2} n\right),
\end{align*}

\noindent {\bf 2)}. Under the condition ${\sf KL}(P_z || {\sf N}(\boldsymbol{0}, I_d)) < \infty$, we choose the step size $\eta \leq \frac{t_0^2}{d} n^{-\frac{2-2\delta(n)}{d+5}}$. Recall $(VU)^\top_{\sharp} \hat{P}_{t_0}^{\sf dis}$ denotes the pushforward distribution. Then there exists an orthogonal matrix $U \in \RR^{d \times d}$ such that the total variation distance
\begin{align*}
{\sf TV} (P_{t_0}^{\sf LD}, (VU)^\top_{\sharp} \hat{P}_{t_0}^{\sf dis}) = \tilde{\cO}\left(\sqrt{\frac{1}{c_0t_0}} n^{-\frac{1-\delta(n)}{d+5}}\log^{2} n\right).
\end{align*}
Moreover, the Wasserstein-2 distance between $P_{t_0}^{\sf LD}$ and $P_z$ satisfies
\begin{align*}
{\sf W}_2(P_{t_0}^{\sf LD}, P_z) = \cO\left(\sqrt{d t_0}\right).
\end{align*}

\noindent {\bf 3)}. The orthogonal pushforward $(I-VV^\top)_{\sharp} \hat{P}_{t_0}^{\sf dis}$ of the continuous-time generated data distribution is ${\sf N}(\boldsymbol{0}, \Sigma)$, with $\Sigma \preceq ct_0 I$ for a constant $c > 0$.
\end{theorem}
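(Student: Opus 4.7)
The proof splits into the three parts of the theorem, all built from the $L^2$ score estimation bound of Theorem \ref{thm:score_estimation}. The unifying algebraic observation is that by construction $(I_D - VV^\top)\hat{\sbb}_{V,\btheta}(\xb, t) = -h(t)^{-1}(I_D - VV^\top)\xb$: the orthogonal component of the estimated score is fully determined by $V$, while the orthogonal component of the true $\nabla \log p_t$ encodes $A$ via Lemma \ref{lemma:subspace_score}. Comparing the two orthogonal projections will drive the subspace recovery in Part 1; the parallel projection together with a Girsanov-type argument will drive the on-support total variation in Part 2; and the linear SDE satisfied by $(I_D - VV^\top)\hatbXb_t$ will drive the orthogonal variance in Part 3.

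For Part 1, combining the identity above with Tweedie's formula $\nabla \log p_t(\xb) = -h(t)^{-1}\xb + \alpha(t)h(t)^{-1}\EE[\Xb_0 \mid \Xb_t = \xb]$ together with $\Xb_0 = A\zb$ yields
\begin{align*}
(I_D - VV^\top)\bigl[\hat{\sbb}_{V,\btheta}(\xb, t) - \nabla \log p_t(\xb)\bigr] = -\frac{\alpha(t)}{h(t)}(I_D - VV^\top)A\,\EE[\zb\mid \Xb_t = \xb].
\end{align*}
Taking $L^2(P_t)$ norms and using $\EE\bigl[\EE[\zb|\Xb_t]\EE[\zb|\Xb_t]^\top\bigr] = \EE[\zb\zb^\top] - \EE[\mathrm{Cov}(\zb|\Xb_t)]$ combined with a posterior-variance bound $\|\mathrm{Cov}(\zb|\Xb_t)\|_{\rm op} = \cO(h(t))$ for small $t$ (derivable from Assumption \ref{assumption:pz}), the left-hand side is bounded below by $c_0\alpha(t)^2 h(t)^{-2}\|VV^\top - AA^\top\|_{\rm F}^2/2$ whenever $t \in [t_0, 2t_0]$ and $t_0 = \cO(c_0)$. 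Integrating this inequality over $[t_0, 2t_0]$ picks up $\int_{t_0}^{2t_0}\alpha^2/h^2\,\diff t = \Theta(1/t_0)$, and upper-bounding the left-hand side by Theorem \ref{thm:score_estimation} (with $T = \Theta(\log n)$) yields the claimed $\tilde{\cO}(n^{-(2-2\delta(n))/(d+5)}/c_0)$ rate.

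For Part 2, let $Q$ denote the law of the exact backward process \eqref{eq:backward_sde} initialized at $P_T$, and let $\tilde{Q}$ denote the law of the discretized practical process \eqref{eq:backward_piecewise}. A Girsanov calculation decomposes ${\sf KL}(Q\,\|\,\tilde{Q})$ into (a) an initial mismatch ${\sf KL}(P_T\,\|\,{\sf N}(0, I_D)) = \cO(e^{-T})$ by exponential contraction of the forward OU; (b) the integrated score error $\int_{t_0}^T \|\hat{\sbb}_{V,\btheta}(\cdot, t) - \nabla \log p_t\|_{L^2(P_t)}^2\,\diff t$ bounded via Theorem \ref{thm:score_estimation}; and (c) an Euler--Maruyama term scaling as $\eta\,d/t_0^2$, since the drift's Lipschitz constant is $\cO(1/h(t)) = \cO(1/t_0)$. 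The step-size choice $\eta \leq (t_0^2/d)\,n^{-(2-2\delta(n))/(d+5)}$ balances (b) and (c), and Pinsker's inequality converts KL to TV in $\RR^D$; projection onto $\mathrm{Col}(V)$ is contractive in TV by the data-processing inequality, and the rotation $U$ enters because Part 1 only identifies the column span of $V$ with $\mathrm{Col}(A)$, not the ordered basis, so $U$ is picked from the SVD of $A^\top V$. The $1/c_0$ factor in the final rate arises when propagating the Part 1 subspace error into the score-error comparison. The Wasserstein-2 claim is then a direct coupling: $W_2^2(P^{\sf LD}_{t_0}, P_z) \leq (1-\alpha(t_0))^2\EE\|\zb\|^2 + h(t_0)\,d = \cO(dt_0)$ by matching $\alpha(t_0)\zb + \sqrt{h(t_0)}\xi$ with $\zb$ and using the sub-Gaussian tail of Assumption \ref{assumption:pz}.

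For Part 3, since $(I_D - VV^\top)V\fb_{\btheta} \equiv 0$, the orthogonal component $\Ybo(t) := (I_D - VV^\top)\hatbXb_t$ of the continuous-time practical backward SDE \eqref{eq:backward_practice} obeys the closed linear SDE
\begin{align*}
\diff\Ybo(t) = \Bigl[\tfrac{1}{2} - \tfrac{1}{h(T-t)}\Bigr]\Ybo(t)\,\diff t + (I_D - VV^\top)\,\diff\overline{\Wb}_t, \quad \Ybo(0) \sim {\sf N}(\boldsymbol{0}, I_D - VV^\top).
\end{align*}
By isotropy in the $(D-d)$-dimensional subspace $V^\perp$, $\mathrm{Cov}(\Ybo(t)) = \sigma^2(t)(I_D - VV^\top)$, and $\sigma^2$ satisfies the scalar linear ODE $\dot{\sigma}^2 = [1 - 2/h(T-t)]\sigma^2 + 1$; solving via the integrating factor $e^{-\tau}h(\tau)^{-2}$ (with $\tau = T - t$) yields $\sigma^2(T - t_0) = e^{t_0}h(t_0) - \cO(e^{-T}) \leq ct_0$ for small $t_0$ and large $T$, giving the claimed covariance bound. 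The main obstacle lies in Part 2: the subspace mismatch $V \neq A$ couples on-support and orthogonal score errors in the full-dimensional Girsanov analysis, whereas the stated rate should only see $d$-dimensional error. I expect the cleanest resolution is to use Part 1 to effectively replace $V$ with $A$ up to the rotation $U$, apply Girsanov to an induced $d$-dimensional diffusion on the latent coordinate $V^\top \hatbXb_t$, and only then push forward, so that the orthogonal mismatch enters only through the $1/c_0$ subspace factor rather than as a full $D$-dimensional penalty.
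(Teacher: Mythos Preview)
Your overall architecture matches the paper's proof closely: Part~1 via projecting the score error onto $V^\perp$ and lower-bounding by $c_0$, Part~3 via solving the linear orthogonal SDE explicitly, and Part~2 via Girsanov plus discretization. In Part~1 the paper computes the same lower bound on $\EE_t[\gb\gb^\top]$ using the integration-by-parts identity $\EE_t[\nabla\log p_t^{\sf LD}(\zb)\,\zb^\top]=-I_d$ rather than your posterior-covariance route, but since $\gb(\zb,t)=\alpha(t)\EE[\zb_0\mid\Zb_t=\zb]$ the two calculations are equivalent.

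The one genuine gap is in Part~2. Your first sketch (full $D$-dimensional Girsanov, then project) would pick up an $\eta D$ term in the discretization error from the ambient Brownian increments, so the final TV bound would carry a $\sqrt{D}$ factor not hidden by $\tilde\cO$. You correctly diagnose this at the end and propose the right fix---run Girsanov on the $d$-dimensional latent process $U^\top V^\top\hatbXb_t$---which is exactly what the paper does. What you have not supplied is the conversion step: one must show that the \emph{latent} score matching error
\[
\int_{t_0}^T \EE_{\zb\sim P_t^{\sf LD}}\bigl\|\tfrac{1}{h(t)}(U^\top\fb_{\btheta}(U\zb,t)-\zb)-\nabla\log p_t^{\sf LD}(\zb)\bigr\|_2^2\,\diff t
\]
is controlled by the full score error times $O(1/c_0)$. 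The paper does this (its Lemma~\ref{lem:main1}) by choosing $U$ from the SVD of $V^\top A$ so that $\|U-V^\top A\|_{\rm F}^2\lesssim\epsilon/c_0$, then splitting $U^\top\fb_{\btheta}(U\zb,t)-\gb(\zb,t)$ into three pieces: a term bounded by $\|\fb_{\btheta}(\cdot,t)\|_{\mathrm{Lip}}^2\cdot\|U-V^\top A\|_{\rm F}^2\cdot\myEE\|\zb\|_2^2$, the original score error, and a term $\|V^\top A-U\|_{\rm F}^2\cdot\myEE\|\gb\|_2^2$. The Lipschitz constraint $\gamma$ on $\fb_{\btheta}$ in the network class $\cS_{\rm NN}$ is used here in an essential way, not just for discretization; without it the first piece is uncontrolled.
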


The proof is provided in Appendix \ref{append:proof:de}. Theorem \ref{thm:distro_estimation} has the following interpretations.

\paragraph{Subspace recovery error} Item 1 of Theorem \ref{thm:distro_estimation} confirms that the subspace is accurately learned, in that the column span of matrix $V$ closely matches that of $A$. The error is proportional to the score estimation error and depends on the minimum eigenvalue of the covariance of $P_z$. The intuition behind is that we need $P_z$ to span every direction of column span of $A$ for estimation.

Meanwhile, item 1 does not translate to $\|A - V\|_{\rm F}$ being small, since the column span is invariant under orthogonal transformation, i.e., column spans of $A$ and $AU$ for an orthogonal $U$ are identical. Therefore, we need such an orthogonal transformation in item 2.

\paragraph{Tradeoff on $t_0$} From item 2, we observe that the latent distribution error ${\sf TV} (P_{t_0}^{\sf LD}, (VU)^\top_{\sharp} \hat{P}^{\sf dis}_{t_0})$ increases as $t_0$ decreases, because the error of score estimation amplifies. On the other hand, the bias ${\sf W}_2(P_{t_0}^{\sf LD}, P_z) = \cO\left(\sqrt{t_0d}\right)$ shrinks as $t_0$ decreases. This reveals a tradeoff concerning recovery of data distribution $P_z$. Although we cannot directly translate total variation distance to Wasserstein-2 distance and vice versa, we can make them in the same order, which implies setting $t_0 = n^{-\frac{1-\delta(n)}{d+5}}$. We thus obtain
\begin{align*}
{\sf TV} (P_{t_0}^{\sf LD}, (VU)^\top_{\sharp} \hat{P}^{\sf dis}_{t_0}) & = \tilde{\cO}\left(n^{-\frac{1-\delta(n)}{2(d + 5)}} \log^{2} n \right) \quad \text{and} \quad {\sf W}_2(P_{t_0}^{\sf LD}, P_z) = \tilde{\cO}\left(n^{-\frac{1-\delta(n)}{2(d + 5)}} \right).
\end{align*}

\paragraph{Vanishing in the orthogonal space} The behavior of $\hat{P}_{t_0}^{\sf dis}$ matches our discussion in the score decomposition. In particular, $(I-VV^\top)_{\sharp} \hat{P}_{t_0}^{\sf dis}$ degenerates to a point mass at origin when $t_0 \to 0$. Due to item 1, $(I-AA^\top)_{\sharp} \hat{P}_{t_0}^{\sf dis}$ is also approximately vanishing.
%!TEX root = diffusion_arxiv.tex

\section{Proof sketch of main results}\label{sec:proof}

This section is devoted to proving Theorems \ref{thm:score_approximation} -- \ref{thm:distro_estimation}. Due to space limit, we only describe key steps.

\subsection{Proof sketch of Theorem \ref{thm:score_approximation}}
Theorem \ref{thm:score_approximation} is established by construction. A significant difference from the existing universal approximation theories is that the input domain of $\cS_{\rm NN}$ is unbounded. We manipulate the tail behavior of $P_z$ for developing a truncation argument.

In the construction, we choose $V = A$ and the approximation of the score boils down to that of $\fb_{\btheta}(\zb, t)$ to $h(t) \nabla \log \ptz(\zb) + \zb$ for $\zb \in \RR^d$. We denote $\gb(\zb, t) = h(t)  \nabla \log \ptz(\zb) + \zb$. By Assumption \ref{assumption:score_lip}, $\gb(\zb, t)$ is $(\beta + 1)$-Lipschitz in $\zb$.
\begin{figure}[htb!]
\centering
\includegraphics[width = 0.6\textwidth]{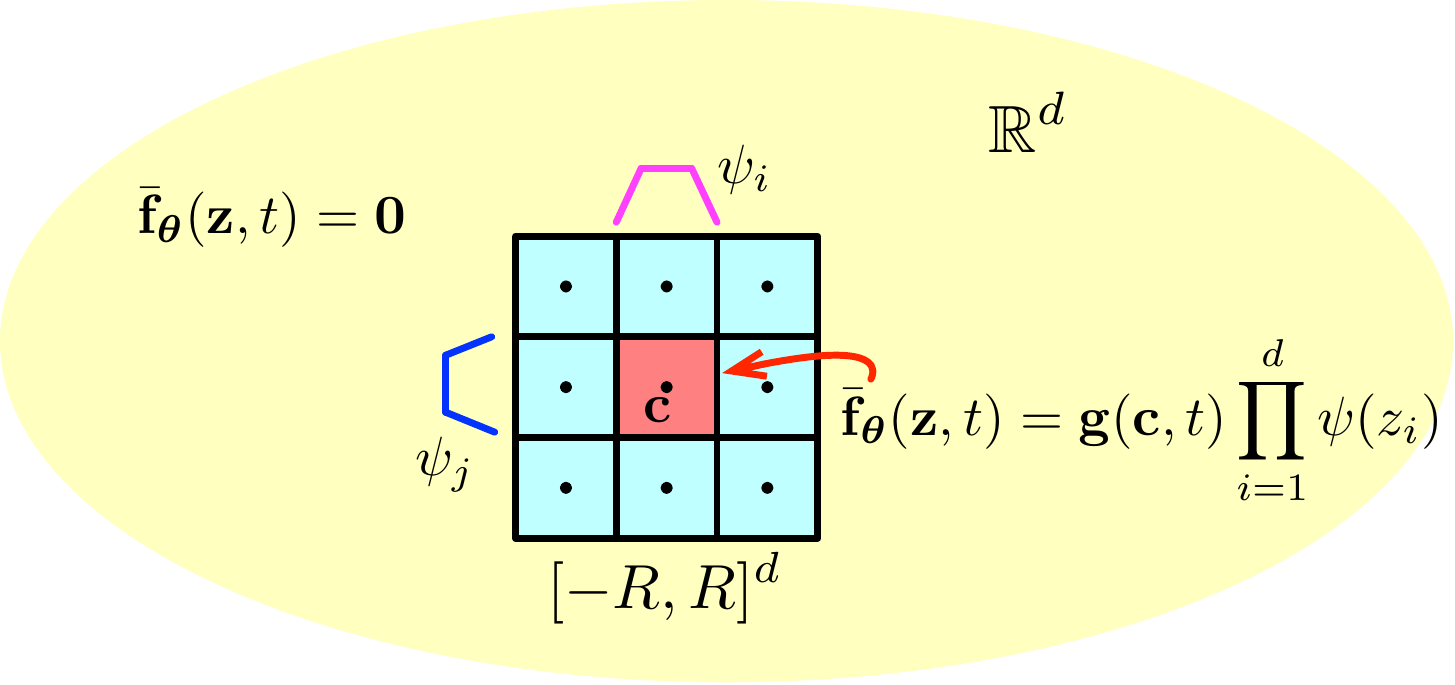}
\caption{Construction of $\bar{\fb}_{\btheta}(\zb, t)$ for approximating $\gb(\zb, t)$. For a fixed $t$, inside $[-R, R]^d$, we uniformly partition the hypercube into smaller hypercubes. On each of the small hypercube, we locally approximate $\gb(\zb, t)$ by its value on the center. To detect the local region, we construct a trapezoid function $\psi$ on each coordinate.}
\label{fig:SNN_approx}
\end{figure}

Let $R > B$ be a truncation radius. On the hypercube $[-R, R]^d \times [t_0, T]$, we construct $\bar{\fb}_{\btheta}$ as a piecewise linear function for approximating $\sbb(\zb, t)$. Outside of the hypercube, we simply set $\bar{\fb}_{\btheta} = \boldsymbol{0}$. See Figure \ref{fig:SNN_approx} for an illustration.

The $L^2$ approximation error is evaluated as
\begin{align*}
\norm{\bar{\fb}_{\btheta}(\cdot, t) - \gb(\cdot, t)}_{L^2(P_t^{\sf LD})} & \leq \underbrace{\norm{\left(\bar{\fb}_{\btheta}(\cdot, t) - \gb(\cdot, t)\right)\mathds{1}\{\norm{\cdot}_2 \leq R\}}_{L^2(P_t^{\sf LD})}}_{(A)} \\
& + \underbrace{\norm{\left(\bar{\fb}_{\btheta}(\cdot, t) - \gb(\cdot, t)\right)\mathds{1}\{\norm{\cdot}_2 > R\}}_{L^2(P_t^{\sf LD})}}_{(B)}.
\end{align*}
Term $(A)$ is directly bounded by the approximation error of $\bar{\fb}_{\btheta}$ on the hypercube. Term $(B)$ utilizes the tail behavior of $P_t$. In particular, since $\gb(\zb, t)$ is Lipschitz in $\zb$, for sufficiently large $R$, $\norm{\gb(\zb, t)}_2$ is bounded by $\cO\left(\norm{\zb}_2\right)$ whenever $\norm{\zb}_2 > R$. Consequently, term $(B)$ is bounded by
\begin{align*}
(B) = \cO\left(\int_{\norm{\zb}_2 > R} \norm{\zb}_2^2 p_t(\zb) d\zb \right).
\end{align*}
Note that Assumption \ref{assumption:pz} implies that $P_t$ has a sub-Gaussian tail. Therefore, $(B)$ can be bounded (by Lemma \ref{lemma:truncation_error}), which leads to a choice of $R = \cO\left(\sqrt{d \log \frac{d}{t_0} + \log \frac{1}{\epsilon}}\right)$. The Lipschitzness of the constructed network is analyzed by adapting \citet[Lemma 10]{chen2020statistical}.

\subsection{Proof of Theorem \ref{thm:score_estimation}}
We first focus on the equivalent objective $\cL(\hat{\sbb}_{V, \btheta})$ and then translate to the desired score matching error.

We begin with an oracle inequality for bounding $\cL(\hat{\sbb}_{V, \btheta})$:
\begin{align*}
\cL(\hat{\sbb}_{V, \btheta}) & \leq \underbrace{\cL^{\rm trunc}(\hat{\sbb}_{V, \btheta}) - (1+a)\hat{\cL}^{\rm trunc}(\hat{\sbb}_{V, \btheta})}_{(A)} + \underbrace{\cL(\hat{\sbb}_{V, \btheta}) - \cL^{\rm trunc}(\hat{\sbb}_{V, \btheta})}_{(B)} \\
& \quad + (1+a) \underbrace{\inf_{\sbb_{V, \btheta} \in \cS_{\rm NN}} \hat{\cL}(\sbb_{V, \btheta})}_{(C)},
\end{align*}
where $a > 0$ is arbitrary, and $\cL^{\rm trunc}(\hat{\sbb}_{V, \btheta})$ is a truncated loss defined as
\begin{align*}
\cL^{\rm trunc}(\hat{\sbb}_{V, \btheta}) = \EE_{\xb \sim P_{\rm data}}  \left[\ell(\xb; \hat{\sbb}_{V, \btheta})\mathds{1}\{\norm{\xb}_2 \leq R\} \diff t \right]
\end{align*}
for some radius $R > 0$ to be determined, and $\hat{\cL}^{\rm trunc}$ is the empirical counterpart of $\cL^{\rm trunc}$. We truncate on $\norm{\xb}_2$ to achieve an uniform upper bound on the loss $\cL$ for concentration. Here term $(A)$ is the statistical error due to finite samples, term $(B)$ is the truncation error, term $(C)$ reflects the approximation error of $\cS_{\rm NN}$. We bound these error terms separately.

\noindent $\bullet$ {\bf Bounding term $(A)$}. Suppose we choose $\cS_{\rm NN}$ as in Theorem \ref{thm:score_approximation} with $\epsilon$ to be determined. For term $(A)$, we apply a Bernstein-type concentration inequality (Lemma \ref{lemma:concentration_bernstein}) to obtain with probability $1 - \delta$,
\begin{align*}
(A) = \cO\left(\frac{R^2 + K^2}{t_0 a n} \log \frac{\cN(1/(t_0n), \cS_{\rm NN}, \norm{\cdot})}{\delta} + \frac{1}{n} \right).
\end{align*}
Here $\cN(\tau, \cS_{\rm NN}, \norm{\cdot})$ is the covering number of $\cS_{\rm NN}$ under a properly defined norm. (The choice of norm is involved; see details in Appendix \ref{pf:score_estimation}.)

\noindent $\bullet$ {\bf Bounding term $(B)$}. Term $(B)$ is relatively simple and share the same high-level idea of bounding the truncation error in Theorem \ref{thm:score_approximation}. As a result, we have
\begin{align*}
(B) = \cO\left(\frac{1}{t_0} K^2 R^d \exp(-C_2 R^2 / 2)\right).
\end{align*}

\noindent $\bullet$ {\bf Bounding term $(C)$}. Denote $\bar{\sbb}_{V, \btheta}$ as the approximator constructed in Theorem \ref{thm:score_approximation}, and we further decompose $(C)$ into two terms,
\begin{align*}
(C) \leq \underbrace{\hat{\cL}(\bar{\sbb}_{V, \btheta}) - (1+a)\cL^{\rm trunc}(\bar{\sbb}_{V, \btheta})}_{(C_1)} + (1+a) \underbrace{\cL^{\rm trunc}(\bar{\sbb}_{V, \btheta})}_{(C_2)}.
\end{align*}
Term $(C_1)$ is very similar to term $(A)$. Recall  $P_z$ has a light tail. Thus, with high probability, we have $(C_1) = \hat{\cL}^{\rm trunc}(\bar{\sbb}_{V, \btheta}) - (1+a)\cL^{\rm trunc}(\bar{\sbb}_{V, \btheta})$, which allows us to apply Bernstein-type concentration. Since $\bar{\sbb}_{V, \btheta}$ is independent of data, $(C_1)$ converges rather fast. Term $(C_2)$ is the approximation error and we bound it by
\begin{align*}
(C_2) = \cO\left(\frac{d}{t_0 (T-t_0)}\epsilon^2\right) + E,
\end{align*}
where $E$ is the gap between equivalent losses $\cL(\sbb_{V, \btheta})$ and $\frac{1}{T-t_0} \int_{t_0}^T \norm{\sbb_{V, \btheta}(\cdot, t) - \nabla \log p_t(\cdot)}_{L^2(P_t)}^2 \diff t$. Such a gap is independent of $\sbb_{V, \btheta}$ (see derivation in \citet[Appendix A]{chen2022sampling}).

\noindent $\bullet$ {\bf Putting $(A)$, $(B)$, $(C)$ together}. We choose $a = \epsilon^2$ and $R = \cO\left(\sqrt{d\log d + \log K + \log \frac{n}{\delta}}\right)$. Summing up $(A)$, $(B)$ and $(C)$ gives rise to
\begin{align*}
\frac{1}{T-t_0} \int_{t_0}^T \norm{\hat{\sbb}_{V, \btheta}(\cdot, t) - \nabla \log p_t(\cdot)}_{L^2(P_t)}^2 \diff t = \tilde{\cO}\left(\frac{\epsilon^{-d-3-2\delta(n)}}{t_0 n} + \frac{\epsilon^2}{t_0} + \frac{1}{n} \right).
\end{align*}
We have plugged in an upper bound on the covering number $\cN(1/(t_0 n), \cS_{\rm NN}, \norm{\cdot})$ and omitted a $\mathrm{polylog}(n)$ term. Optimally choosing $\epsilon = n^{-\frac{1-\delta(n)}{d+5}}$ yields the desired result.

\subsection{Proof of Theorem \ref{thm:distro_estimation}}

We will be succinct on how to prove items 1 and 3, and focus on the proof of item 2. The intuition behind item 1 is that the mismatch between the column span of $A$ and $V$ will be significantly amplified due to the blowup of the orthogonal score. Therefore, an accurate neural score estimator forces $A$ and $V$ to match. Item 3 can be obtained by analytically solving the orthogonal backward process.

$\bullet$ {\bf Proof of item 2}. We consider the continuous-time generated distribution $\hat{P}_{t_0}$ for an exposure of the main idea. The discrete result is obtained by adding discretization error (Lemma \ref{lem:de}). 

For the ground-truth backward process, we consider the corresponding latent backward process $\bZb_t = A^\top \bXb_t$, which satisfies the following SDE
\begin{align*}
    \diff  \bZb_t   = \left[\frac{1}{2} \bZb_t  +  \nabla  \log \pld _{T-t} (\bZb_t )  \right] \diff t +   \diff \overline{\Wb}^{\sf LD}_t,
\end{align*}
where $\overline{\Wb}_t^{\sf LD}$ is a standard Wiener process in the latent space. 

For the learned process, similarly we consider ${\tildebZbRot}_{t } = U^\top V^\top {\tildebXb}_t $. We first show that $({\tildebZbRot}_{t })_{t\geq 0}$ satisfies the following SDE 
\begin{align*}
    \diff {\tildebZbRot}_{t }  & = \left[\frac{1}{2} {\tildebZbRot}_{t }  + \sld_{\theta, U}({\tildebZbRot}_{t },T-t)  \right] \diff t +   \diff  \overline{\Wb}^{\sf LD}_t,
\end{align*}
where $\tilde{\sbb}_{U, \btheta}^{\sf LD}(\zb, t) = \frac{1}{h(t)}[U^\top \fb_{\btheta} (U \zb, t) - \zb]$ is the latent score estimator.

Observe that $P_{t_0}^{\sf LD}$ is the marginal distribution of $\bZb_{T-t_0}$, and $(VU)_{\sharp}^\top \hat{P}_{t_0}$ is the marginal distribution of  ${\tildebZbRot}_{T-t_0}$. 
To this end, it suffices to bound the divergence between the two stochastic processes above. In the proof, we first convert the score matching error bound to the latent score matching error between $ \nabla  \log \pld _{t} (\zb ) $ and $\tilde{\sbb}_{U, \btheta}^{\sf LD}(\zb, t)$. Then, similar to \citet{chen2022sampling}, we adopt Girsanov's Theorem and bound the difference of the KL divergence of the two process via the error bound of their drift terms.
%!TEX root = diffusion_arxiv.tex

\section{Conclusion and discussion}\label{sec:discuss}

This paper studies distribution estimation of diffusion models for low-dimensional linear subspace data. We show that with a properly chosen neural network, the score function can be accurately approximated and estimated. The estimation error converges at a rate depending on the data intrinsic dimension. We further show data distribution can be efficiently learned using the estimated score function. The convergence rate is also free of the curse of ambient dimensionality.

\paragraph{Linear subspace assumption} Diffusion models are very new in the field of machine learning theory. The theoretical analysis has been very challenging, especially when taking the intrinsic geometric structures of the data into consideration. Although we make a linear subspace assumption, characterizing the behavior of diffusion models in the on-support and orthogonal subspaces has already required highly non-trivial analysis. We expect to stimulate more sophisticated followup works, which analyze diffusion models under more general assumptions such as manifold data.

\paragraph{End-to-end distribution learning} Given our linear subspace assumption, one may advocate PCA-like methods, which first reduce the data dimension by estimating the subspace structure, and then estimate the
data distribution on a projected subspace. However, such a two-step method is rarely used in practice, and does not necessarily help us understand the empirical success of diffusion models. On the contrary, our results consider a more realistic end-to-end learning scheme, and show that the learned diffusion model can capture the unknown linear structure and the data distribution, and enjoy fast distribution estimation guarantees with a proper score network.

\bibliography{ref.bib}

\begin{thebibliography}{48}
\expandafter\ifx\csname natexlab\endcsname\relax\def\natexlab#1{#1}\fi
\expandafter\ifx\csname url\endcsname\relax
  \def\url#1{\texttt{#1}}\fi
\expandafter\ifx\csname urlprefix\endcsname\relax\def\urlprefix{}\fi

\bibitem[{Anderson(1982)}]{anderson1982reverse}
\textsc{Anderson, B.~D.} (1982).
\newblock Reverse-time diffusion equation models.
\newblock \textit{Stochastic Processes and their Applications}, \textbf{12}
  313--326.

\bibitem[{Barron(1993)}]{barron1993universal}
\textsc{Barron, A.~R.} (1993).
\newblock Universal approximation bounds for superpositions of a sigmoidal
  function.
\newblock \textit{IEEE Trans. Inform. Theory}, \textbf{39} 930--945.

\bibitem[{Block et~al.(2020)Block, Mroueh and Rakhlin}]{block2020generative}
\textsc{Block, A.}, \textsc{Mroueh, Y.} and \textsc{Rakhlin, A.} (2020).
\newblock Generative modeling with denoising auto-encoders and langevin
  sampling.
\newblock \textit{arXiv preprint arXiv:2002.00107}.

\bibitem[{Chen et~al.(2019{\natexlab{a}})Chen, Jiang, Liao and
  Zhao}]{chen2019efficient}
\textsc{Chen, M.}, \textsc{Jiang, H.}, \textsc{Liao, W.} and \textsc{Zhao, T.}
  (2019{\natexlab{a}}).
\newblock Efficient approximation of deep relu networks for functions on low
  dimensional manifolds.
\newblock \textit{Advances in neural information processing systems},
  \textbf{32}.

\bibitem[{Chen et~al.(2022{\natexlab{a}})Chen, Jiang, Liao and
  Zhao}]{chen2022nonparametric}
\textsc{Chen, M.}, \textsc{Jiang, H.}, \textsc{Liao, W.} and \textsc{Zhao, T.}
  (2022{\natexlab{a}}).
\newblock Nonparametric regression on low-dimensional manifolds using deep relu
  networks: Function approximation and statistical recovery.
\newblock \textit{Information and Inference: A Journal of the IMA}, \textbf{11}
  1203--1253.

\bibitem[{Chen et~al.(2019{\natexlab{b}})Chen, Li and
  Zhao}]{chen2019generalization}
\textsc{Chen, M.}, \textsc{Li, X.} and \textsc{Zhao, T.} (2019{\natexlab{b}}).
\newblock On generalization bounds of a family of recurrent neural networks.
\newblock \textit{arXiv preprint arXiv:1910.12947}.

\bibitem[{Chen et~al.(2020)Chen, Liao, Zha and Zhao}]{chen2020statistical}
\textsc{Chen, M.}, \textsc{Liao, W.}, \textsc{Zha, H.} and \textsc{Zhao, T.}
  (2020).
\newblock Statistical guarantees of generative adversarial networks for
  distribution estimation.
\newblock \textit{arXiv preprint arXiv:2002.03938}.

\bibitem[{Chen et~al.(2022{\natexlab{b}})Chen, Chewi, Li, Li, Salim and
  Zhang}]{chen2022sampling}
\textsc{Chen, S.}, \textsc{Chewi, S.}, \textsc{Li, J.}, \textsc{Li, Y.},
  \textsc{Salim, A.} and \textsc{Zhang, A.~R.} (2022{\natexlab{b}}).
\newblock Sampling is as easy as learning the score: theory for diffusion
  models with minimal data assumptions.
\newblock \textit{arXiv preprint arXiv:2209.11215}.

\bibitem[{Cybenko(1989)}]{cybenko1989approximation}
\textsc{Cybenko, G.} (1989).
\newblock Approximation by superpositions of a sigmoidal function.
\newblock \textit{Math. Control Signals Systems}, \textbf{2} 303--314.

\bibitem[{Dathathri et~al.(2019)Dathathri, Madotto, Lan, Hung, Frank, Molino,
  Yosinski and Liu}]{dathathri2019plug}
\textsc{Dathathri, S.}, \textsc{Madotto, A.}, \textsc{Lan, J.}, \textsc{Hung,
  J.}, \textsc{Frank, E.}, \textsc{Molino, P.}, \textsc{Yosinski, J.} and
  \textsc{Liu, R.} (2019).
\newblock Plug and play language models: A simple approach to controlled text
  generation.
\newblock \textit{arXiv preprint arXiv:1912.02164}.

\bibitem[{De~Bortoli(2022)}]{de2022convergence}
\textsc{De~Bortoli, V.} (2022).
\newblock Convergence of denoising diffusion models under the manifold
  hypothesis.
\newblock \textit{arXiv preprint arXiv:2208.05314}.

\bibitem[{De~Bortoli et~al.(2021)De~Bortoli, Thornton, Heng and
  Doucet}]{de2021diffusion}
\textsc{De~Bortoli, V.}, \textsc{Thornton, J.}, \textsc{Heng, J.} and
  \textsc{Doucet, A.} (2021).
\newblock Diffusion schr{\"o}dinger bridge with applications to score-based
  generative modeling.
\newblock \textit{Advances in Neural Information Processing Systems},
  \textbf{34} 17695--17709.

\bibitem[{Goodfellow et~al.(2014)Goodfellow, Pouget-Abadie, Mirza, Xu,
  Warde-Farley, Ozair, Courville and Bengio}]{NIPS2014_5ca3e9b1}
\textsc{Goodfellow, I.}, \textsc{Pouget-Abadie, J.}, \textsc{Mirza, M.},
  \textsc{Xu, B.}, \textsc{Warde-Farley, D.}, \textsc{Ozair, S.},
  \textsc{Courville, A.} and \textsc{Bengio, Y.} (2014).
\newblock Generative adversarial nets.
\newblock In \textit{Advances in Neural Information Processing Systems}
  (Z.~Ghahramani, M.~Welling, C.~Cortes, N.~Lawrence and K.~Weinberger, eds.),
  vol.~27. Curran Associates, Inc.
\newline\urlprefix\url{https://proceedings.neurips.cc/paper/2014/file/5ca3e9b122f61f8f06494c97b1afccf3-Paper.pdf}

\bibitem[{Gouk et~al.(2021)Gouk, Frank, Pfahringer and
  Cree}]{gouk2021regularisation}
\textsc{Gouk, H.}, \textsc{Frank, E.}, \textsc{Pfahringer, B.} and
  \textsc{Cree, M.~J.} (2021).
\newblock Regularisation of neural networks by enforcing lipschitz continuity.
\newblock \textit{Machine Learning}, \textbf{110} 393--416.

\bibitem[{G{\"u}hring et~al.(2020)G{\"u}hring, Kutyniok and
  Petersen}]{guhring2020error}
\textsc{G{\"u}hring, I.}, \textsc{Kutyniok, G.} and \textsc{Petersen, P.}
  (2020).
\newblock Error bounds for approximations with deep relu neural networks in
  $w^{s,p}$ norms.
\newblock \textit{Anal. Appl.}, \textbf{18} 803--859.

\bibitem[{Haussmann and Pardoux(1986)}]{haussmann1986time}
\textsc{Haussmann, U.~G.} and \textsc{Pardoux, E.} (1986).
\newblock Time reversal of diffusions.
\newblock \textit{The Annals of Probability} 1188--1205.

\bibitem[{Ho et~al.(2020)Ho, Jain and Abbeel}]{ho2020denoising}
\textsc{Ho, J.}, \textsc{Jain, A.} and \textsc{Abbeel, P.} (2020).
\newblock Denoising diffusion probabilistic models.
\newblock \textit{Advances in Neural Information Processing Systems},
  \textbf{33} 6840--6851.

\bibitem[{Kim et~al.(2021)Kim, Shin, Song, Kang and Moon}]{kim2021soft}
\textsc{Kim, D.}, \textsc{Shin, S.}, \textsc{Song, K.}, \textsc{Kang, W.} and
  \textsc{Moon, I.-C.} (2021).
\newblock Soft truncation: A universal training technique of score-based
  diffusion model for high precision score estimation.
\newblock \textit{arXiv preprint arXiv:2106.05527}.

\bibitem[{Le~Gall et~al.(2016)}]{le2016brownian}
\textsc{Le~Gall, J.-F.} \textsc{et~al.} (2016).
\newblock \textit{Brownian motion, martingales, and stochastic calculus}, vol.
  274.
\newblock Springer.

\bibitem[{Lee et~al.(2022{\natexlab{a}})Lee, Lu and Tan}]{lee2022convergencea}
\textsc{Lee, H.}, \textsc{Lu, J.} and \textsc{Tan, Y.} (2022{\natexlab{a}}).
\newblock Convergence for score-based generative modeling with polynomial
  complexity.
\newblock \textit{arXiv preprint arXiv:2206.06227}.

\bibitem[{Lee et~al.(2022{\natexlab{b}})Lee, Lu and Tan}]{lee2022convergenceb}
\textsc{Lee, H.}, \textsc{Lu, J.} and \textsc{Tan, Y.} (2022{\natexlab{b}}).
\newblock Convergence of score-based generative modeling for general data
  distributions.
\newblock \textit{arXiv preprint arXiv:2209.12381}.

\bibitem[{Liu et~al.(2022)Liu, Wu, Ye and Liu}]{liu2022let}
\textsc{Liu, X.}, \textsc{Wu, L.}, \textsc{Ye, M.} and \textsc{Liu, Q.} (2022).
\newblock Let us build bridges: Understanding and extending diffusion
  generative models.
\newblock \textit{arXiv preprint arXiv:2208.14699}.

\bibitem[{Nakada and Imaizumi(2020)}]{nakada2020adaptive}
\textsc{Nakada, R.} and \textsc{Imaizumi, M.} (2020).
\newblock Adaptive approximation and generalization of deep neural network with
  intrinsic dimensionality.
\newblock \textit{The Journal of Machine Learning Research}, \textbf{21}
  7018--7055.

\bibitem[{Pauli et~al.(2021)Pauli, Koch, Berberich, Kohler and
  Allg{\"o}wer}]{pauli2021training}
\textsc{Pauli, P.}, \textsc{Koch, A.}, \textsc{Berberich, J.}, \textsc{Kohler,
  P.} and \textsc{Allg{\"o}wer, F.} (2021).
\newblock Training robust neural networks using lipschitz bounds.
\newblock \textit{IEEE Control Systems Letters}, \textbf{6} 121--126.

\bibitem[{Pidstrigach(2022)}]{pidstrigach2022score}
\textsc{Pidstrigach, J.} (2022).
\newblock Score-based generative models detect manifolds.
\newblock \textit{arXiv preprint arXiv:2206.01018}.

\bibitem[{Pope et~al.(2021)Pope, Zhu, Abdelkader, Goldblum and
  Goldstein}]{pope2021intrinsic}
\textsc{Pope, P.}, \textsc{Zhu, C.}, \textsc{Abdelkader, A.}, \textsc{Goldblum,
  M.} and \textsc{Goldstein, T.} (2021).
\newblock The intrinsic dimension of images and its impact on learning.
\newblock \textit{arXiv preprint arXiv:2104.08894}.

\bibitem[{Qi and Mei(1999)}]{qi1999some}
\textsc{Qi, F.} and \textsc{Mei, J.-Q.} (1999).
\newblock Some inequalities of the incomplete gamma and related functions.
\newblock \textit{Zeitschrift f{\"u}r Analysis und ihre Anwendungen},
  \textbf{18} 793--799.

\bibitem[{Ramesh et~al.(2022)Ramesh, Dhariwal, Nichol, Chu and
  Chen}]{ramesh2022hierarchical}
\textsc{Ramesh, A.}, \textsc{Dhariwal, P.}, \textsc{Nichol, A.}, \textsc{Chu,
  C.} and \textsc{Chen, M.} (2022).
\newblock Hierarchical text-conditional image generation with clip latents.
\newblock \textit{arXiv preprint arXiv:2204.06125}.

\bibitem[{Rezende and Mohamed(2015)}]{rezende2015variational}
\textsc{Rezende, D.} and \textsc{Mohamed, S.} (2015).
\newblock Variational inference with normalizing flows.
\newblock In \textit{International conference on machine learning}. PMLR.

\bibitem[{Rombach et~al.(2022)Rombach, Blattmann, Lorenz, Esser and
  Ommer}]{rombach2022high}
\textsc{Rombach, R.}, \textsc{Blattmann, A.}, \textsc{Lorenz, D.},
  \textsc{Esser, P.} and \textsc{Ommer, B.} (2022).
\newblock High-resolution image synthesis with latent diffusion models.
\newblock In \textit{Proceedings of the IEEE/CVF Conference on Computer Vision
  and Pattern Recognition}.

\bibitem[{Ronneberger et~al.(2015)Ronneberger, Fischer and
  Brox}]{ronneberger2015u}
\textsc{Ronneberger, O.}, \textsc{Fischer, P.} and \textsc{Brox, T.} (2015).
\newblock U-net: Convolutional networks for biomedical image segmentation.
\newblock In \textit{International Conference on Medical image computing and
  computer-assisted intervention}. Springer.

\bibitem[{Roweis and Saul(2000)}]{roweis2000nonlinear}
\textsc{Roweis, S.~T.} and \textsc{Saul, L.~K.} (2000).
\newblock Nonlinear dimensionality reduction by locally linear embedding.
\newblock \textit{science}, \textbf{290} 2323--2326.

\bibitem[{Schmidt-Hieber(2017)}]{schmidt2017nonparametric}
\textsc{Schmidt-Hieber, J.} (2017).
\newblock Nonparametric regression using deep neural networks with relu
  activation function.
\newblock \textit{arXiv preprint arXiv:1708.06633}.

\bibitem[{Shen et~al.(2022)Shen, Yang and Zhang}]{shen2022optimal}
\textsc{Shen, Z.}, \textsc{Yang, H.} and \textsc{Zhang, S.} (2022).
\newblock Optimal approximation rate of relu networks in terms of width and
  depth.
\newblock \textit{Journal de Math{\'e}matiques Pures et Appliqu{\'e}es},
  \textbf{157} 101--135.

\bibitem[{Sohl-Dickstein et~al.(2015)Sohl-Dickstein, Weiss, Maheswaranathan and
  Ganguli}]{sohl2015deep}
\textsc{Sohl-Dickstein, J.}, \textsc{Weiss, E.}, \textsc{Maheswaranathan, N.}
  and \textsc{Ganguli, S.} (2015).
\newblock Deep unsupervised learning using nonequilibrium thermodynamics.
\newblock In \textit{International Conference on Machine Learning}. PMLR.

\bibitem[{Song and Ermon(2019)}]{song2019generative}
\textsc{Song, Y.} and \textsc{Ermon, S.} (2019).
\newblock Generative modeling by estimating gradients of the data distribution.
\newblock \textit{Advances in Neural Information Processing Systems},
  \textbf{32}.

\bibitem[{Song and Ermon(2020)}]{song2020improved}
\textsc{Song, Y.} and \textsc{Ermon, S.} (2020).
\newblock Improved techniques for training score-based generative models.
\newblock \textit{Advances in neural information processing systems},
  \textbf{33} 12438--12448.

\bibitem[{Song et~al.(2020{\natexlab{a}})Song, Garg, Shi and
  Ermon}]{song2020sliced}
\textsc{Song, Y.}, \textsc{Garg, S.}, \textsc{Shi, J.} and \textsc{Ermon, S.}
  (2020{\natexlab{a}}).
\newblock Sliced score matching: A scalable approach to density and score
  estimation.
\newblock In \textit{Uncertainty in Artificial Intelligence}. PMLR.

\bibitem[{Song et~al.(2020{\natexlab{b}})Song, Sohl-Dickstein, Kingma, Kumar,
  Ermon and Poole}]{song2020score}
\textsc{Song, Y.}, \textsc{Sohl-Dickstein, J.}, \textsc{Kingma, D.~P.},
  \textsc{Kumar, A.}, \textsc{Ermon, S.} and \textsc{Poole, B.}
  (2020{\natexlab{b}}).
\newblock Score-based generative modeling through stochastic differential
  equations.
\newblock \textit{arXiv preprint arXiv:2011.13456}.

\bibitem[{Suzuki(2018)}]{suzuki2018adaptivity}
\textsc{Suzuki, T.} (2018).
\newblock Adaptivity of deep relu network for learning in besov and mixed
  smooth besov spaces: optimal rate and curse of dimensionality.
\newblock \textit{arXiv preprint arXiv:1810.08033}.

\bibitem[{Tenenbaum et~al.(2000)Tenenbaum, Silva and
  Langford}]{tenenbaum2000global}
\textsc{Tenenbaum, J.~B.}, \textsc{Silva, V.~d.} and \textsc{Langford, J.~C.}
  (2000).
\newblock A global geometric framework for nonlinear dimensionality reduction.
\newblock \textit{science}, \textbf{290} 2319--2323.

\bibitem[{Vahdat et~al.(2021)Vahdat, Kreis and Kautz}]{vahdat2021score}
\textsc{Vahdat, A.}, \textsc{Kreis, K.} and \textsc{Kautz, J.} (2021).
\newblock Score-based generative modeling in latent space.
\newblock \textit{Advances in Neural Information Processing Systems},
  \textbf{34} 11287--11302.

\bibitem[{Vaswani et~al.(2017)Vaswani, Shazeer, Parmar, Uszkoreit, Jones,
  Gomez, Kaiser and Polosukhin}]{vaswani2017attention}
\textsc{Vaswani, A.}, \textsc{Shazeer, N.}, \textsc{Parmar, N.},
  \textsc{Uszkoreit, J.}, \textsc{Jones, L.}, \textsc{Gomez, A.~N.},
  \textsc{Kaiser, {\L}.} and \textsc{Polosukhin, I.} (2017).
\newblock Attention is all you need.
\newblock \textit{Advances in neural information processing systems},
  \textbf{30}.

\bibitem[{Vershynin(2018)}]{vershynin2018high}
\textsc{Vershynin, R.} (2018).
\newblock \textit{High-dimensional probability: An introduction with
  applications in data science}, vol.~47.
\newblock Cambridge university press.

\bibitem[{Vincent(2011)}]{vincent2011connection}
\textsc{Vincent, P.} (2011).
\newblock A connection between score matching and denoising autoencoders.
\newblock \textit{Neural computation}, \textbf{23} 1661--1674.

\bibitem[{Virmaux and Scaman(2018)}]{virmaux2018lipschitz}
\textsc{Virmaux, A.} and \textsc{Scaman, K.} (2018).
\newblock Lipschitz regularity of deep neural networks: analysis and efficient
  estimation.
\newblock \textit{Advances in Neural Information Processing Systems},
  \textbf{31}.

\bibitem[{Wainwright(2019)}]{wainwright2019high}
\textsc{Wainwright, M.~J.} (2019).
\newblock \textit{High-dimensional statistics: A non-asymptotic viewpoint},
  vol.~48.
\newblock Cambridge University Press.

\bibitem[{Yarotsky(2017)}]{yarotsky2017error}
\textsc{Yarotsky, D.} (2017).
\newblock Error bounds for approximations with deep relu networks.
\newblock \textit{Neural Networks}, \textbf{94} 103--114.

\end{thebibliography}
\bibliographystyle{ims}

\newpage
\appendix
%!TEX root = diffusion_arxiv.tex

\section{Omitted proofs in Section \ref{sec:score_decomp}}
\subsection{Proof of Lemma \ref{lemma:subspace_score}}\label{pf:subspace_score}
\begin{proof}
Using the latent variable $\zb$ and according to the forward process \eqref{eq:forward_sde}, we have
\begin{align*}
p_t(\xb) = \int \phi_t(\xb | A\zb) p_z(\zb) \diff \zb,
\end{align*}
where $\phi_t(\xb | A\zb) = (2\pi)^{-D/2} h^{-D/2}(t) \exp\left(-\frac{1}{2h(t)}\norm{\alpha(t)A \zb - \xb}_2^2 \right).$
Then the score function can be written as
\begin{align}\label{eq:nabla_pt}
\nabla \log p_t(\xb) = \frac{\nabla \int \phi_t(\xb | A\zb) p_z(\zb) \diff \zb}{\int \phi_t(\xb | A\zb) p_z(\zb) \diff \zb} = \frac{\int \nabla \phi_t(\xb | A\zb) p_z(\zb) \diff \zb}{\int \phi_t(\xb | A\zb) p_z(\zb) \diff \zb},
\end{align}
where the last equality holds since $\phi_t(\xb | A\zb)$ is continuously differentiable in $\xb$. Substituting $\phi_t(\xb | A\zb)$ into \eqref{eq:nabla_pt} gives rise to
\begin{align*}
\nabla \log p_t(\xb)
& = \frac{(2\pi)^{-D/2} h^{-D/2}(t)}{\int \phi_t(\xb | A\zb) p_z(\zb) \diff \zb} \int \frac{1}{h(t)} \left(\alpha(t)A\zb - \xb\right) \exp\left(-\frac{1}{2h(t)}\norm{\alpha(t)A \zb - \xb}_2^2 \right) p_z(\zb) \diff \zb \\
& = \frac{(2\pi)^{-D/2} h^{-D/2}(t)}{\int \phi_t(\xb | A\zb) p_z(\zb) \diff \zb} \int \frac{1}{h(t)} \left(\alpha(t)A\zb - AA^\top \xb\right) \exp\left(-\frac{1}{2h(t)}\norm{\alpha(t)A \zb - \xb}_2^2 \right) p_z(\zb) \diff \zb \\
& \quad - \frac{(2\pi)^{-D/2} h^{-D/2}(t)}{\int \phi_t(\xb | A\zb) p_z(\zb) \diff \zb} \int \frac{1}{h(t)} \left(I_D - AA^\top\right)\xb \cdot  \exp\left(-\frac{1}{2h(t)}\norm{\alpha(t)A \zb - \xb}_2^2 \right) p_z(\zb) \diff \zb \\
& = \underbrace{\frac{1}{\int \phi_t(\xb | A\zb) p_z(\zb) \diff \zb} \int \frac{1}{h(t)} \left(\alpha(t)A\zb - AA^\top \xb\right) \phi_t(\xb | A\zb) p_z(\zb) \diff \zb}_{\pscore} \underbrace{-  \frac{1}{h(t)}\left(I_D - AA^\top\right) \xb}_{\oscore}.
\end{align*}
We can further simplify $\sbb_{\parallel}$. We decompose $\phi_t(\xb | A\zb)$ as
\begin{align*}
\phi_t(\xb | A\zb) & = (2\pi)^{-D/2} h^{-D/2}(t) \exp\left(-\frac{1}{2h(t)}\norm{\alpha(t)A \zb - AA^\top \xb + \left(I_D - AA^\top \right)\xb}_2^2 \right) \\
& = (2\pi)^{-D/2} h^{-D/2}(t) \exp\left(-\frac{1}{2h(t)}\left(\norm{\alpha(t)A \zb - AA^\top \xb}_2^2 + \norm{\left(I_D - AA^\top \right)\xb}_2^2\right) \right) \\
& = (2\pi)^{-d/2} h^{-d/2}(t) \exp\left(-\frac{1}{2h(t)}\norm{\alpha(t) \zb - A^\top \xb}_2^2\right) \\
& \quad \times (2\pi)^{-(D-d)/2} h^{-(D-d)/2}(t) \exp\left(-\frac{1}{2h(t)}\norm{\left(I_D - AA^\top \right)\xb}_2^2\right).
\end{align*}
We denote
\begin{align*}
\phi_t\left(A^\top \xb | \zb\right) & = (2\pi)^{-d/2} h^{-d/2}(t) \exp\left(-\frac{1}{2h(t)}\norm{\alpha(t) \zb - A^\top \xb}_2^2\right) \quad \text{and} \\ 
\phi_t\left((I_D - AA^\top)\xb\right) & = (2\pi)^{-(D-d)/2} h^{-(D-d)/2}(t) \exp\left(-\frac{1}{2h(t)}\norm{\left(I_D - AA^\top \right)\xb}_2^2\right)
\end{align*}
being both Gaussian densities. Substituting $\phi_t(\xb | A\zb) = \phi_t\left(A^\top \xb | \zb\right) \phi_t\left((I_D - AA^\top)\xb\right)$ into $\pscore$, we obtain
\begin{align*}
\pscore(\xb, t) = \frac{1}{\int \phi_t(A^\top \xb | \zb) p_z(\zb) \diff \zb} \int \frac{1}{h(t)} \left(\alpha(t)A\zb - AA^\top \xb\right) \phi_t(A^\top \xb | \zb) p_z(\zb) \diff \zb.
\end{align*}
As can be seen, $\pscore$ only depends on the projected data $A^\top \xb$. Therefore, it is legitimate to overload $\pscore(\xb, t)$ by $\pscore(A^\top \xb, t)$. The benefit is that the first input of $\pscore(A^\top \xb, t)$ now has the intrinsic dimension $d$. Denoting $\zb' = A^\top \xb$, we observe $\frac{1}{h(t)} (\alpha(t) \zb - A^\top \xb) \phi_t(A^\top \xb | \zb) = \nabla_{\zb'} \phi_t(\zb' | \zb)$. Therefore, we can rewrite $\sbb_{\parallel}(A^\top \xb, t) = \frac{\nabla_{\zb'}\phi_t(\zb' | \zb) p_z(\zb)}{\int \phi_t(\zb' | \zb) p_z(\zb) \diff \zb} \diff \zb = A \nabla \log p_t^{\rm ld}(A^\top \xb)$. The proof is complete.
\end{proof}

\subsection{Computation in Example \ref{example:gaussian}}\label{pf:example_gaussian}
We find the marginal distribution $P_t$ of the forward process is still Gaussian. Density function $p_t(\xb) = \int \phi_t(A^\top \xb | \zb) p_z(\zb) \diff \zb$. We check
\begin{align*}
\phi_t(A^\top \xb | \zb) p_z(\zb) & \propto \exp\left(-\frac{1}{2h(t)} \norm{A^\top \xb - \alpha(t) \zb}_2^2 - \zb^\top \Sigma^{-1} \zb \right) \\
& \propto \exp\left(-\frac{1}{2h(t)} \norm{\zb - \alpha(t)\left(\alpha^2(t) I_d + h(t)\Sigma^{-1}\right)^{-1} A^\top \xb}_{\left(\alpha^2(t)I_d + h(t)\Sigma^{-1} \right)^{-1}}^2 \right),
\end{align*}
where $\norm{\xb}_{A} = \xb^\top A \xb$. Therefore, $\phi_t(A^\top\xb | \zb) p_z(\zb)$ corresponds to a Gaussian distribution with mean vector $\alpha(t) \left(\alpha^2(t) I_d + h(t)\Sigma^{-1} \right)^{-1}A^\top \xb$. To this end, Lemma \ref{lemma:subspace_score} leads to
\begin{align*}
\pscore(A^\top \xb, t) & = \frac{1}{h(t)} \left(\alpha^2(t) A \left(\alpha^2(t) I_d + h(t)\Sigma^{-1} \right)^{-1} A^\top \xb - AA^\top \xb\right) \\
& = \frac{1}{h(t)} A \left( \diag\left(\frac{\alpha^2(t)}{\alpha^2(t) + h(t) \lambda_1^{-2}}, \dots, \frac{\alpha^2(t)}{\alpha^2(t) + h(t) \lambda_d^{-2}}\right) - I_d \right) A^\top \xb \\
& = A ~\diag\left(\frac{\lambda_1^{-2}}{\alpha^2(t) + h(t) \lambda_1^{-2}}, \dots, \frac{\lambda^{-2}_d}{\alpha^2(t) + h(t) \lambda_1^{-2}}\right) A^\top \xb \\
& = A ~\diag\left(\frac{1}{\alpha^2(t) \lambda_1^2 + h(t)}, \dots, \frac{1}{\alpha^2(t) \lambda_d^2 + h(t)}\right) A^\top \xb.
\end{align*}
Lastly, we check $\sbb_{\parallel}$ is Lipschitz continuous. We need to upper bound
\begin{align*}
\norm{\diag\left(\frac{1}{\alpha^2(t) \lambda_1^2 + h(t)}, \dots, \frac{1}{\alpha^2(t) \lambda_d^2 + h(t)}\right)}_{\rm op} \leq \frac{1}{\alpha^{2}(t) \lambda_d^2 + h(t)} = \frac{1}{\lambda_d^2 + (1-\lambda_d^2) h(t)}.
\end{align*}
We discuss two cases. If $\lambda_d > 1$, we have $\frac{1}{\lambda_d^2 + (1-\lambda_d^2) h(t)} \leq 1$; if $\lambda_d \leq 1$, we have $\frac{1}{\lambda_d^2 + (1-\lambda_d^2) h(t)} \leq \lambda_d^{-2}$. Combining the two cases gives rise to
\begin{align*}
\norm{\diag\left(\frac{1}{\alpha^2(t) \lambda_1^2 + h(t)}, \dots, \frac{1}{\alpha^2(t) \lambda_d^2 + h(t)}\right)}_{\rm op} \leq \max\{\lambda_d^{-2}, 1\}.
\end{align*}
For the Lipschitzness with respect to $t$, we take time derivative of $\diag\left(\frac{1}{\alpha^2(t) \lambda_1^2 + h(t)}, \dots, \frac{1}{\alpha^2(t) \lambda_d^2 + h(t)}\right)$:
\begin{align*}
\frac{\partial}{\partial t}\diag\left(\frac{1}{\alpha^2(t) \lambda_1^2 + h(t)}, \dots, \frac{1}{\alpha^2(t) \lambda_d^2 + h(t)}\right) & = \diag\left(\frac{\alpha^2(t)(\lambda_1^2 - 1)}{(\alpha^2(t) \lambda_1^2 + h(t))^2}, \dots, \frac{\alpha^2(t)(\lambda_d^2 - 1)}{(\alpha^2(t) \lambda_d^2 + h(t))^2}\right) \\
& \preceq \diag\left(\frac{1}{\alpha^2(t) \lambda_1^2 + h(t)}, \dots, \frac{1}{\alpha^2(t) \lambda_d^2 + h(t)}\right).
\end{align*}
Therefore, for any $t_1, t_2 \in [0, T]$ and $\zb$, we have
\begin{align*}
\norm{\sbb_{\parallel}(\zb, t_1) - \sbb_{\parallel}(\zb, t_2)}_2 & \leq \norm{\diag\left(\frac{1}{\alpha^2(t) \lambda_1^2 + h(t)}, \dots, \frac{1}{\alpha^2(t) \lambda_d^2 + h(t)}\right)\zb}_2 |t_1 - t_2| \\
& \leq \max\{\lambda_d^{-2}, 1\} \norm{\zb}_2 |t_1 - t_2|.
\end{align*}

\section{Omitted proofs in Section \ref{sec:score_approx_est}}

\subsection{Proof of Theorem \ref{thm:score_approximation}}\label{pf:score_approximation}
\begin{proof}
Due to Lemma \ref{lemma:subspace_score}, we cast score function $\nabla \log p_t(\xb)$ into
\begin{align}\label{eq:score_sbb}
\nabla \log p_t(\xb) = \frac{1}{h(t)} \underbrace{A \int \frac{\zb \phi_t(A^\top \xb | \zb) p_z(\zb)}{\int \phi_t(A^\top \xb | \zb) p_z(\zb) \diff \zb} \diff \zb}_{A\gb(A^\top \xb, t)} - \frac{1}{h(t)} \xb.
\end{align}
Note that $\gb(A^\top \xb, t) = h(t) A^\top (\sbb_{\parallel}(A^\top \xb, t) + \xb)$. It suffices to construct $V\fb_{\btheta}(V^\top \xb, t)$ for approximating $A\gb(A^\top \xb, t)$. By taking $V = A$, it further reduces to construct $\fb_{\btheta}(\zb', t)$ well approximating $\gb(\zb', t)$, where $\zb' \in \RR^d$.

A major difficulty in approximating $\gb(\zb', t)$ is that the input space $\RR^d \times [t_0, T]$ is unbounded. Here we partition $\RR^d$ into a compact subset $\cS$ and its complement. On set $\cS \times [t_0, T]$, we construct $\fb_{\btheta}$ to achieve an $L^\infty$ approximation. On the complement of $\cS$, we simply let $\fb_{\btheta}(\zb', t) = 0$. Thanks to the tail behavior of $P_z$, the $L^2$ approximation error of $\fb_{\btheta}(\zb', t)$ to $\sbb(\zb', t)$ can still be controlled.

\noindent $\bullet$ {\bf Approximation on $\cS \times [t_0, T]$}.
We choose $\cS = \{\zb' | \norm{\zb'}_\infty \leq R\}$ to be a $d$-dimensional hypercube of edge length $2R > 0$, where $R$ will be determined later. On $\cS \times [t_0, T]$, we approximate coordinate maps $g_k(\zb', t)$ of $\gb(\zb', t)$ separately, where $\gb(\zb', t) = [g_1(\zb', t), \dots, g_d(\zb', t)]^\top$. The main idea replicates {\it Lemma 10} in \citet{chen2020statistical}. To match the function domain, we first rescale the input by $\yb' = \frac{1}{2R} (\zb' + R \mathbf{1})$ and $t' = t/T$, so that the transformed input space is $[0, 1]^{d} \times [t_0/T, 1]$. Such a transformation can be exactly implemented by a single ReLU layer.

By Assumption \ref{assumption:score_lip}, on-support score $\sbb_{\parallel}(\zb', t)$ is $\beta$-Lipschitz in $\zb'$. This implies $\gb(\zb', t)$ is $1+\beta$-Lipschitz in $\zb'$. When taking the transformed inputs, $\gb(\yb', t') = \sbb(2R\yb'-R \mathbf{1}, T t')$ becomes $2R(1+\beta)$-Lipschitz in $\yb'$; so is each coordinate map. For notational simplicity, we denote $L_z = 1 + \beta$.

We also denote the Lipschitz constant of $\gb(\yb', t')$ with respect to $t$ as $T\tau(R)$, when $\yb' \in [0, 1]^d$. That is, we denote
\begin{align*}
\tau(R) = \sup_{t \in [t_0, T]} \sup_{\zb' \in [0, R]^d} \norm{\frac{\partial}{\partial t} \gb(\zb', t)}_2.
\end{align*}
A very coarse upper bound on $\tau(R)$ is computed by
\begin{align*}
\frac{\partial}{\partial t} \gb(\zb', t) & = A \int \frac{\zb \frac{\partial}{\partial t}\phi_t(\zb' | \zb) p_z(\zb)}{\int \phi_t(\zb' | \zb) p_z(\zb) \diff \zb} \diff \zb - A \int \frac{\zb \phi_t(\zb' | \zb) p_z(\zb) \int \frac{\partial}{\partial t} \phi_t(\zb' | \zb) p_z(\zb) \diff \zb}{\left(\int \phi_t(\zb' | \zb) p_z(\zb) \diff \zb\right)^2} \diff \zb \\
& \overset{(i)}{=} \frac{\alpha(t)}{h^2(t)} A \left[\EE_{P_z}\left[\zb \norm{\zb}_2^2\right] - (1+\alpha^2(t)) \Cov[\zb | \zb'] \zb'\right],
\end{align*}
where we plug in $\frac{\partial}{\partial t} \phi_t(\zb' | \zb) = \frac{\alpha(t)}{h^2(t)} \left(\norm{\zb}_2^2 - (1 + \alpha^2(t)) \zb^\top \zb' + \alpha(t) \norm{\zb'}_2^2\right) \phi_t(\zb' | \zb)$ and collect terms in $(i)$. Since $P_z$ has Gaussian tail, its third moment is bounded. By the computation in Appendix \ref{pf:conditional_cov}, we have $\norm{\Cov[\zb | \zb']}_{\rm op} \leq \frac{h^2(t)}{\alpha^2(t)} (\beta + \frac{1}{h(t)})$. Therefore, we deduce
\begin{align*}
\tau(R) = \cO\left(\frac{1+\alpha^2(t)}{\alpha(t)} \left(\beta + \frac{1}{h(t)}\right) \sqrt{d} R\right) = \cO\left(e^{T/2}\beta R \sqrt{d}\right),
\end{align*}
as $P_z$ having sub-Gaussian tail implies $\EE_{P_z}[\zb \norm{\zb}_2^2]$ is bounded.

Now we form a partition of $[0, 1]^{d} \times [t_0/T, 1]$. For the first $d$ dimension, we uniformly partition $[0, 1]^d$ into nonoverlapping hypercubes with edge length $e_1$. We also evenly partition the interval $[t_0/T, 1]$ into nonoverlapping subintervals of length $e_2$. $e_1$ and $e_2$ will be chosen depending on the desired approximation error. We also denote $N_1 = \lceil \frac{1}{e_1} \rceil$ and $N_2 = \lceil \frac{1}{e_2} \rceil$.

Let $\mb = [m_1, \dots, m_d]^\top \in \{0, \dots, N_1-1\}^d$ be a multi-index. We define $\bar{f}$ as
\begin{align*}
\bar{f}_i(\yb', t') = \sum_{\mb, j = 0, \dots, N_2-1} g_i\left(2R\frac{\mb}{N_1} -R\mathbf{1}, T \frac{j}{N_2}\right) \Psi_{\mb, j}(\yb', t'),
\end{align*}
where $\Psi_{\mb, j}(\yb', t')$ is a partition of unity function. We choose $\Psi$ as a product of coordinatewise trapezoid functions:
\begin{align*}
\Psi_{\mb, j} (\yb', t') = \psi\left(3N_2 \left(t' - \frac{j}{N_2}\right)\right) \prod_{i=1}^d \psi\left(3N_1\left(y'_i - \frac{m_i}{N_1}\right)\right),
\end{align*}
where $\psi$ is a trapezoid function (see also a graphical illustration in Figure \ref{fig:trapezoid}),
\begin{align*}
\psi(a) = \begin{cases}
1, & \vert a\vert < 1 \\
2 - \vert a\vert, & \vert a\vert \in [1, 2] \\
0, & \vert a\vert > 2 \\
\end{cases}.
\end{align*}

\begin{figure}[!htb]
\centering
\includegraphics[width = 0.5\textwidth]{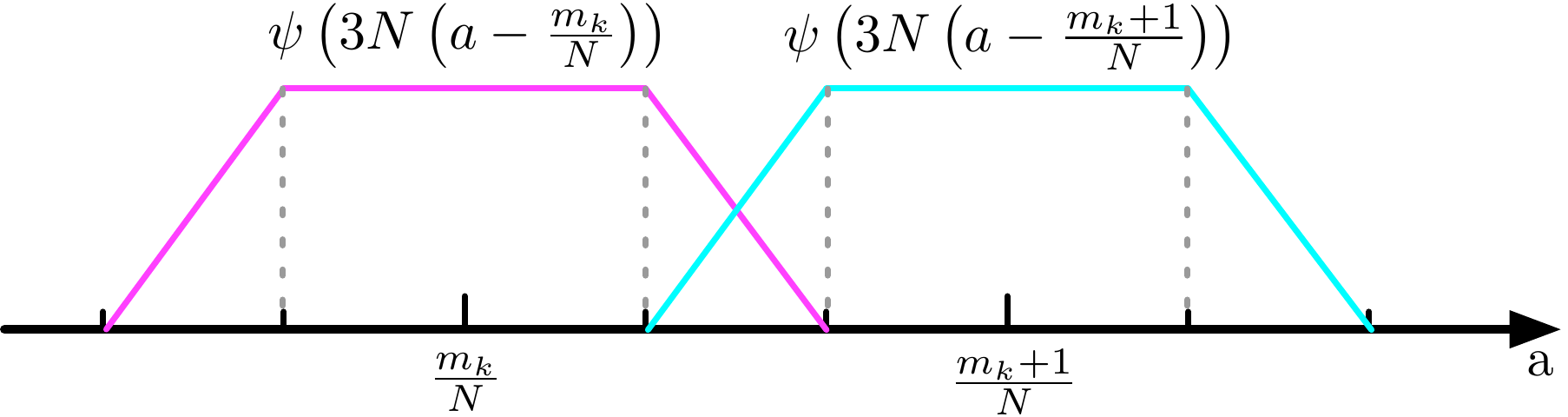}
\caption{Trapezoid function in one dimension.}
\label{fig:trapezoid}
\end{figure}

We claim that
\begin{enumerate}
\item $\bar{f}_i$ is an approximation to $g_i$;
\item $\bar{f}_i$ can be implemented by a ReLU neural network $\hat{f}_i$ with small error.
\end{enumerate}
Both claims are verified in \citet[Lemma 10]{chen2020statistical}, where we only need to substitute the Lipschitz coefficients $2cR(1+\beta)$ and $T\tau(R)$ into the error analysis. (We use the coordinate wise analysis in the proof of \citet[Lemma 10]{chen2020statistical} for deriving the Lipschitz continuity w.r.t. $\yb'$ and $t'$.) By concatenating $\bar{f}_i$'s together, we construct $\bar{\fb}_{\btheta} = [\bar{f}_1, \dots, \bar{f}_d]^\top$. Given $\epsilon$, if we achieve
\begin{align*}
\sup_{\yb', t' \in [0, 1]^d \times [t_0/T, 1]} \norm{\bar{\fb}_{\btheta}(\yb', t') - \gb(\yb', t')}_\infty \leq \epsilon,
\end{align*}
the neural network configuration is
\begin{align*}
& L = \cO\left(\log \frac{1}{\epsilon} + d \right), \quad M = \cO\left(T \tau(R) (R L_z)^{d} \epsilon^{-(d+1)}\right), \quad J = \cO\left(T \tau(R) (R L_z)^{d} \epsilon^{-(d+1)} \left(\log \frac{1}{\epsilon} + d \right)\right), \\
& \hspace{1.7in} K = \cO\left(\sqrt{d}RL_z\right), \quad \kappa = \max\{1, RL_z, T\tau(R)\}.
\end{align*}
Here we already take $e_1 = \cO\left(\frac{\epsilon}{RL_z}\right)$ and $e_2 = \cO\left(\frac{\epsilon}{T\tau(R)}\right)$. The output range $K$ is computed by $K = \sqrt{d} \max_i \norm{s_k}_\infty$. Combining with the input transformation layer (i.e., $\zb' \to \yb'$ and $t \to t'$ rescaling), we have the constructed network is Lipschitz continuous in $\zb'$, i.e., for any $\zb'_1, \zb'_2 \in \cS$ and $t \in [t_0, T]$, it holds
\begin{align*}
\norm{\bar{\fb}_{\btheta}(\zb'_1, t) - \bar{\fb}_{\btheta}(\zb'_2, t)}_{\infty} \leq 10d L_z \norm{\zb'_1 - \zb'_2}_2.
\end{align*}
Moreover, the network is also Lipschitz in $t$, i.e., for any $t_1, t_2 \in [t_0, T]$ and $\norm{\zb'}_2 \leq R$, it holds
\begin{align*}
\norm{\bar{\fb}_{\btheta}(\zb', t_1) - \bar{\fb}_{\btheta}(\zb', t_2)}_{\infty} \leq 10 \tau(R) \norm{t_1 - t_2}_2.
\end{align*}
Due to the partition of unity function $\Psi$ vanishes outside $\cS$, we have $\bar{\fb}_{\btheta}(\zb', t) = \boldsymbol{0}$ for $\norm{\zb'}_2 > R$. Therefore, the above Lipschitz continuity in $\zb'$ extends to whole $\RR^d$.

\noindent $\bullet$ {\bf Bounding $L^2$ approximation error}.
The $L^2$ approximation error of $\bar{\fb}_{\btheta}$ can be decomposed into two terms,
\begin{align*}
\norm{\gb(\zb', t) - \bar{\fb}_{\btheta}(\zb', t)}_{L^2(P_t^{\sf LD})} = \norm{(\gb(\zb', t) - \bar{\fb}_{\btheta}(\zb', t) \mathds{1}\{\norm{\zb'}_2 < R\}}_{L^2(P_t^{\sf LD})} + \norm{\gb(\zb', t) \mathds{1}\{\norm{\zb'}_2 > R\}}_{L^2(P_t^{\sf LD})}.
\end{align*}
The first term on the right-hand side of the last display is bounded by
\begin{align*}
\norm{(\gb(\zb', t) - \bar{\fb}_{\btheta}(\zb', t) \mathds{1}\{\norm{\zb'}_2 < R\}}_{L^2(P_t^{\sf LD})} \leq \sqrt{d} \sup_{\zb', t \in \cS \times [t_0, T]} \norm{\gb(\zb', t) - \bar{\fb}_{\btheta}(\zb', t)}_\infty \leq \sqrt{d} \epsilon.
\end{align*}
The second term assumes an upper bound in Lemma \ref{lemma:truncation_error}. Specifically, when choosing $R = \cO\left(\sqrt{d\log \frac{d}{t_0} + \log \frac{1}{\epsilon}}\right)$, we have
\begin{align*}
\norm{\gb(\zb', t) \mathds{1}\{\norm{\zb'}_2 > R\}}_{L^2(P_t^{\sf LD})} \leq \epsilon.
\end{align*}
As a result, with the choice of $R$, we obtain
\begin{align*}
\norm{\gb(\zb', t) - \bar{\fb}_{\btheta}(\zb', t)}_{L^2(P_t^{\sf LD})} \leq (\sqrt{d} + 1)\epsilon.
\end{align*}
Substituting $R$ into the network configuration and $\tau(R)$ denoted as $\tau$, we obtain
\begin{align*}
& \hspace{1in} L = \cO\left(\log \frac{1}{\epsilon} + d \right), \quad M = \cO\left((1+\beta)^d T \tau d^{d/2+1} \epsilon^{-(d+1)} \log^{d/2} \frac{d}{t_0\epsilon}\right), \\
& \hspace{1.2in} J = \cO\left((1+\beta)^d T \tau d^{d/2+1} \epsilon^{-(d+1)} \log^{d/2} \frac{d}{t_0\epsilon} \left(\log \frac{1}{\epsilon} + d \right)\right), \\
& \hspace{0.3in} K = \cO\left((1+\beta)d \log^{1/2} \frac{d}{t_0 \epsilon}\right), \quad \kappa = \max\left\{(1+\beta)\sqrt{d\log \frac{d}{t_0\epsilon}}, T\tau\right\}, \quad \gamma = 10d(1+\beta), \quad \gamma_t = 10\tau.
\end{align*}
The constructed approximator to $\nabla \log p_t$ is $\bar{\sbb}_{V, \btheta} = \frac{1}{h(t)} A \bar{\fb}_{\btheta}(A^\top \xb, t) - \frac{1}{h(t)} \xb$, whose $L^2$ approximation error is
\begin{align*}
\norm{\nabla \log p_t(\cdot, t) - \bar{\sbb}_{V, \btheta}(\cdot, t)}_{L^2(P_t)} \leq \frac{\sqrt{d} + 1}{h(t)} \epsilon
\end{align*}
for $t \in [t_0, T]$.
\end{proof}

\subsection{Proof of Theorem \ref{thm:score_estimation}}\label{pf:score_estimation}
\begin{proof}
The proof is based on the following oracle inequality to decompose $\cL(\hat{\sbb}_{V, \btheta})$.

\noindent $\bullet$ {\bf Oracle inequality}.
For any $a \in (0, 1)$, we decompose $\cL(\hat{\sbb}_{V, \btheta})$ as
\begin{align*}
\cL(\hat{\sbb}_{V, \btheta}) & = \cL(\hat{\sbb}_{V, \btheta}) - (1+a)\hat{\cL}(\hat{\sbb}_{V, \btheta}) + (1+a)\hat{\cL}(\hat{\sbb}_{V, \btheta}) \\
& \overset{(i)}{\leq} \cL^{\rm trunc}(\hat{\sbb}_{V, \btheta}) - (1+a)\hat{\cL}^{\rm trunc}(\hat{\sbb}_{V, \btheta}) + \cL(\hat{\sbb}_{V, \btheta}) - \cL^{\rm trunc}(\hat{\sbb}_{V, \btheta}) + (1+a)\hat{\cL}(\hat{\sbb}_{V, \btheta}) \\
& = \underbrace{\cL^{\rm trunc}(\hat{\sbb}_{V, \btheta}) - (1+a)\hat{\cL}^{\rm trunc}(\hat{\sbb}_{V, \btheta})}_{(A)} + \underbrace{\cL(\hat{\sbb}_{V, \btheta}) - \cL^{\rm trunc}(\hat{\sbb}_{V, \btheta})}_{(B)} + (1+a) \underbrace{\inf_{\sbb_{V, \btheta} \in \cS_{\rm NN}} \hat{\cL}(\sbb_{V, \btheta})}_{(C)}.
\end{align*}
where in $(i)$, $\cL^{\rm trunc}$ is defined as
\begin{align*}
\cL^{\rm trunc}(\hat{\sbb}_{V, \btheta}) = \EE_{\xb \sim P_{\rm data}} \left[\ell^{\rm trunc}(\xb; \hat{\sbb}_{V, \btheta})\right] = \EE_{\xb \sim P_{\rm data}}  \left[\ell(\xb; \hat{\sbb}_{V, \btheta})\mathds{1}\{\norm{\xb}_2 \leq R\} dt \right],
\end{align*}
for some radius $R > B$ to be determined. In the sequel, we bound $(A)$ -- $(C)$ separately.

\noindent $\star$ {\bf Bounding term $(A)$}. This term measures the concentration of the empirical loss to its population counterpart. We denote $\cG = \{\ell^{\rm trunc}(\cdot; \sbb_{V, \btheta}) : \sbb_{V, \btheta} \in \cS_{\rm NN}\}$ as an induced function class of score network $\cS_{\rm NN}$. We first determine an upper bound on $\cG$. For any $\sbb_{V, \btheta} \in \cS_{\rm NN}$, we have
\begin{align*}
\ell^{\rm trunc} (\xb; \sbb_{V, \btheta}) & = \frac{1}{T-t_0} \int_{t_0}^T \EE_{\xb' \sim \phi_t(\xb' | \xb)} \left[\norm{\sbb_{V, \btheta}(\xb', t) - \nabla \log \phi_t(\xb' | \xb)}_2^2 \mathds{1}\{\norm{\xb}_2 \leq R\}\right] \diff t \\
& = \frac{1}{T-t_0} \int_{t_0}^T \EE_{\xb' \sim \phi_t(\xb' | \xb)} \left[\norm{\sbb_{V, \btheta}(\xb', t) + \frac{1}{h(t)} (\xb' - \alpha(t)\xb)}_2^2 \mathds{1}\{\norm{\xb}_2 \leq R\}\right] \diff t \\
& \leq \frac{2}{T-t_0} \int_{t_0}^T \left(\sup_{\xb'} \norm{\sbb_{\btheta}(\xb', t) + \frac{1}{h(t)} \xb'}_2^2 + \norm{\frac{\alpha(t)}{h(t)} \xb}_2^2\right) \mathds{1}\{\norm{\xb}_2 \leq R\} \diff t \\
& = \frac{2}{T-t_0} \int_{t_0}^T \left(\sup_{\xb'} \norm{\frac{1}{h(t)}V\fb_{\btheta}(V^\top \xb', t)}_2^2 + \norm{\frac{\alpha(t)}{h(t)} \xb}_2^2\right) \mathds{1}\{\norm{\xb}_2 \leq R\} \diff t \\
& \overset{(i)}{\leq} \frac{K^2 + R^2}{T-t_0} \int_{t_0}^T  \frac{2}{h^2(t)} \diff t \\
& = \cO\left(\frac{K^2 + R^2}{t_0(T-t_0)}\right),
\end{align*}
where inequality $(i)$ invokes the uniform upper bound of $\cS_{\rm NN}$. Moreover, suppose given $\sbb_{V_1, \btheta_1}$ and $\sbb_{V_2, \btheta_2}$ with $\sup_{\norm{\xb}_2 \leq 3R + \sqrt{D\log D}, t \in [t_0, T]} \norm{\sbb_{V_1, \btheta_1}(\xb, t) - \sbb_{V_2, \btheta_2}(\xb, t)}_2 \leq \iota$. We evaluate
\begin{align*}
& \quad \norm{\ell^{\rm trunc}(\cdot; \sbb_{V_1, \btheta_1}) - \ell^{\rm trunc}(\cdot; \sbb_{V_2, \btheta_2})}_{\infty} \\
& = \sup_{\norm{\xb}_2 \leq R}  \frac{1}{T-t_0} \int_{t_0}^T \EE_{\xb' \sim \phi_t(\xb' | \xb)} \left[\norm{\sbb_{V_1, \btheta_1}(\xb', t) - \sbb_{V_2, \btheta_2}(\xb', t)}_2 \norm{\sbb_{V_1, \btheta_1}(\xb', t) - \sbb_{V_2, \btheta_2}(\xb', t) - 2\nabla \log \phi_t(\xb' | \xb)}_2\right] \diff t \\
& \leq \sup_{\norm{\xb}_2 \leq R} \frac{2(K + R)}{T-t_0} \int_{t_0}^T \frac{1}{h(t)} \EE_{\xb' \sim \phi_t(\xb' | \xb)} \left[\norm{\sbb_{V_1, \btheta_1}(\xb', t) - \sbb_{V_2, \btheta_2}(\xb', t)}_2  \mathds{1}\{\norm{\xb'}_2 \leq 3R + \sqrt{D\log D}\}\right] \diff t \\
& \quad + \sup_{\norm{\xb}_2 \leq R} \frac{2(K + R)}{T-t_0} \int_{t_0}^T \frac{1}{h(t)} \EE_{\xb' \sim \phi_t(\xb' | \xb)} \left[\norm{\sbb_{V_1, \btheta_1}(\xb', t) - \sbb_{V_2, \btheta_2}(\xb', t)}_2 \mathds{1}\{\norm{\xb'}_2 > 3R + \sqrt{D\log D}\}\right] \diff t \\
& \leq \frac{2\iota}{T-t_0} (K + R) \int_{t_0}^T \frac{1}{h(t)} \diff t \\
& \quad + \sup_{\norm{\xb}_2 \leq R} \frac{2(K + R)}{T-t_0} \int_{t_0}^T \frac{1}{h(t)} \EE_{\xb' \sim \phi_t(\xb' | \xb)} \left[\norm{\sbb_{V_1, \btheta_1}(\xb', t) - \sbb_{V_2, \btheta_2}(\xb', t)}_2 \mathds{1}\{\norm{\xb'}_2 > 3R + \sqrt{D\log D}\}\right] \diff t \\
& \leq \frac{\iota}{T-t_0} (K + R) \int_{t_0}^T \frac{1}{h(t)} \diff t + \frac{4(K+R)K}{T-t_0} \int_{t_0}^T \frac{1}{h^2(t)} \diff t \int_{\norm{\xb'}_2 > 3R + \sqrt{D\log D}} \phi_t(\xb' | \xb) \diff \xb' \\
& \overset{(i)}{=} \cO\left(\frac{\iota}{T-t_0} (K + R) \log \frac{T}{t_0} + \frac{4K(K+R)}{t_0(T - t_0)} D (3R + 2\sqrt{D\log D})^{D-2} \exp\left(-\frac{1}{2h(t)} \left(2R^2 + \frac{1}{2} D \log D\right)\right)\right) \\
& = \cO\left(\frac{\iota}{T-t_0} (K + R) \log \frac{T}{t_0} + \frac{4K(K+R)}{t_0(T - t_0)} (R/D)^{D-2} \exp\left(-\frac{1}{h(t)} R^2\right)\right),
\end{align*}
where in $(i)$, we upper bound $\phi_t(\xb' | \xb) \leq (2\pi h(t))^{-D/2} \exp\left(-\frac{1}{2h(t)}\left(\frac{1}{2}\norm{\xb'}_2^2 - \norm{\xb}_2^2\right)\right)$ and invoke Lemma \ref{lemma:gaussian_tail}. Denote $\eta = \frac{4K(K+R)}{t_0(T - t_0)} (R/D)^{D-2} \exp\left(-\frac{1}{h(t)} R^2\right)$. The last display above indicates that an $\iota$-covering of $\cS_{\rm NN}$ induces a $\frac{\iota}{T-t_0} (K + R) \log \frac{T}{t_0} + \eta$-covering of $\cG$. Now we apply Lemma \ref{lemma:concentration_bernstein} and obtain with probability $1 - \delta$,
\begin{align*}
(A) = \cO\left(\frac{(1+3/a)(K^2 + R^2)}{n t_0(T-t_0)} \log \frac{\cN\left(\frac{(T-t_0)(\iota-\eta)}{(K + R)\log (T/t_0)}, \cS_{\rm NN}, \norm{\cdot}_2 \right)}{\delta} + (2+a)\tau\right).
\end{align*}
We emphasize that norm in the covering of $\cS_{\rm NN}$ is $\sup_{\norm{\xb}_2 \leq 3R + \sqrt{D\log D}} \norm{\sbb_{V, \btheta}(\xb, t)}_2$.

\noindent $\star$ {\bf Bounding term $(B)$}. By the truncation, we have
\begin{align*}
(B) & = \EE_{\xb \sim P_{\rm data}}\left[\ell(\xb; \hat{\sbb}_{V, \btheta}) \mathds{1}\{\norm{\xb}_2 > R\}\right] \\
& = \frac{1}{T - t_0}\int_{t_0}^T \EE_{\xb \sim P_{\rm data}} \left[\EE_{\xb' \sim \phi_t(\xb' | \xb)} \left[\norm{\hat{\sbb}_{V, \btheta}(\xb', t) - \nabla \log \phi_t(\xb' | \xb)}_2^2 \right] \mathds{1}\{\norm{\xb}_2 > R\}\right] \diff t \\
& \leq \frac{2}{T-t_0} \int_{t_0}^T \EE_{\xb \sim P_{\rm data}} \left[\EE_{\xb' \sim \phi_t(\xb' | \xb)} \left(\norm{\hat{\sbb}_{V, \btheta}(\xb', t) + \frac{1}{h(t)}\xb'}_2^2 + \norm{\frac{\alpha(t)}{h(t)}\xb}_2^2\right) \mathds{1}\{\norm{\xb}_2 > R\} \right] \diff t \\
& \leq \frac{2}{T-t_0} \int_{t_0}^T \frac{1}{h^2(t)} \EE_{\xb \sim P_{\rm data}} \left[\left(K^2 + \norm{\xb}_2^2\right) \mathds{1}\{\norm{\xb}_2 > R\} \right] \diff t \\
& \overset{(i)}{\leq} \frac{2}{T-t_0}\left(C_1 K^2 R^{d-2} \frac{d2^{-d/2 + 1}}{C_2 \Gamma(d/2+1)}\exp(-C_2 R^2 / 2) + C_1 \frac{d 2^{-d/2+1}}{C_2\Gamma(d/2 + 1)} R^{d} \exp(-C_2 R^2 / 2)\right) \int_{t_0}^T \frac{1}{h^2(t)} \diff t \\
& = \cO\left(\frac{1}{t_0(T - t_0)} K^2 R^d \frac{2^{-2/d+2}d}{\Gamma(d/2+1)} \exp\left(-C_2 R^2 / 2\right)\right).
\end{align*}
where the last inequality follows from $\xb = A\zb$ and applying Lemma \ref{lemma:gaussian_tail}, since $p_z(\zb) \leq (2\pi)^{-d/2} C_1 \exp(-C_2 \norm{\zb}_2^2 / 2)$ for $\norm{\zb}_2 > B$.

\noindent $\star$ {\bf Bounding term $(C)$}.
For any $\epsilon > 0$, denote $\bar{\sbb}_{V, \btheta}$ as the constructed network approximator to the score function in Theorem \ref{thm:score_approximation}. Then we have
\begin{align*}
(C) \leq \underbrace{\hat{\cL}(\bar{\sbb}_{V, \btheta}) - (1+a)\cL^{\rm trunc}(\bar{\sbb}_{V, \btheta})}_{(C_1)} + (1+a) \underbrace{\cL^{\rm trunc}(\bar{\sbb}_{V, \btheta})}_{(C_2)},
\end{align*}
where $(C_1)$ is the statistical error and $(C_2)$ is the approximation error.

As data distribution $P_{\rm data}$ has sub-Gaussian tail, $\hat{\cL}(\bar{\sbb}_{V, \btheta}) = \hat{\cL}^{\rm trunc}(\bar{\sbb}_{V, \btheta})$ holds with high probability. In fact, Lemma \ref{lemma:gaussian_tail} yields
\begin{align*}
\PP_{\rm data}\left(\norm{\xb}_2 > R \right) \leq C_1 \frac{d2^{-d/2+1}}{C_2 \Gamma(d/2+1)} R^{d-2} 
 \exp(-C_2 R^2 / 2).
\end{align*}
Applying union bound leads to
\begin{align*}
\PP_{\rm data}\left(\norm{\xb_i}_2 \leq R ~\text{for~all}~i = 1, \dots, n \right) \geq 1 - n C_1 \frac{d2^{-d/2+1}}{C_2 \Gamma(d/2+1)} R^{d-2} 
 \exp(-C_2 R^2 / 2).
\end{align*}
Therefore, $(C_1)$ is equal to
\begin{align*}
(C_1) = \hat{\cL}^{\rm trunc}(\bar{\sbb}_{V, \btheta}) - (1+a)\cL^{\rm trunc}(\bar{\sbb}_{V, \btheta})
\end{align*}
with high probability. Since $\bar{\sbb}_{V, \btheta}$ is a fixed function, Lemma \ref{lemma:concentration_bernstein} implies
\begin{align*}
\hat{\cL}^{\rm trunc}(\bar{\sbb}_{V, \btheta}) - (1+a)\cL^{\rm trunc}(\bar{\sbb}_{V, \btheta}) = \cO \left(\frac{(1+6/a)(K^2 + R^2)}{n t_0(T-t_0)} \log \frac{1}{\delta}\right).
\end{align*}
with probability $1 - \delta$. For $(C_2)$, we have
\begin{align*}
\cL^{\rm trunc}(\bar{\sbb}_{V, \btheta}) & \leq \cL(\bar{\sbb}_{V, \btheta}) \\
& = \frac{1}{T-t_0} \int_{t_0}^T \norm{\bar{\sbb}_{V, \btheta}(\cdot, t) - \nabla \log p_t(\cdot)}_{L^2(P_t)}^2 \diff t \\
& \quad + \underbrace{\cL(\bar{\sbb}_{V, \btheta}) - \frac{1}{T-t_0} \int_{t_0}^T \norm{\bar{\sbb}_{V, \btheta}(\cdot, t) - \nabla \log p_t(\cdot)}_{L^2(P_t)}^2 \diff t}_{(\cE)}.
\end{align*}
Recall that the two terms in $(\cE)$ are equivalent score matching objective functions. Their difference is an absolute constant, denoted as $(\cE) = E$. By Theorem \ref{thm:score_approximation}, we have
\begin{align*}
(C_2) = \cO\left(\frac{d}{t_0(T-t_0)} \epsilon^2\right) + E.
\end{align*}

\noindent $\bullet$ {\bf Putting $(A), (B), (C)$ together}. We first take $R = \cO\left(\sqrt{d\log d + \log K + \log \frac{n}{\delta}}\right)$ such that $\eta \leq \frac{1}{n t_0(T - t_0)}$, $(B) \leq \frac{1}{n t_0(T - t_0)}$ and $\PP_{\rm data}\left(\norm{\xb_i}_2 \leq R ~\text{for~all}~i = 1, \dots, n \right) \geq 1 - \delta$. Next, we set $\iota = \frac{2}{nt_0(T-t_0)}$, which gives rise to
\begin{align*}
(A) = \cO\left(\frac{(1+3/a) \left((1+\beta)^2d^2\log \frac{d}{t_0 \epsilon} + \log \frac{n}{\delta}\right)}{n t_0 (T-t_0)} \log \frac{\cN\left(\frac{1}{n (K + R)t_0\log (T/t_0)}, \cS_{\rm NN}, \norm{\cdot}_2 \right)}{\delta} + \frac{1}{n}\right)
\end{align*}
with probability $1 - \delta$. For term $(C)$, we have
\begin{align*}
(C) = \cO\left(\frac{(1+6/a) \left((1+\beta)^2d^2\log \frac{d}{t_0 \epsilon} + \log \frac{n}{\delta}\right)}{nt_0(T-t_0)} \log \frac{1}{\delta} + \frac{1}{n} + \frac{d}{t_0(T-t_0)} \epsilon^2\right) + (1+a) E
\end{align*}
with probability $1 - 2\delta$. Summing up error terms $(A), (B)$ and $(C)$, we derive
\begin{align*}
\cL(\hat{\sbb}_{V, \btheta}) & \leq (A) + (B) + (1+a) \cdot (C) \\
& = \cO\left(\frac{(1+6/a) \left((1+\beta)^2d^2\log \frac{d}{t_0 \epsilon} + \log \frac{n}{\delta}\right)}{n t_0(T - t_0)}\log \frac{\cN\left(\frac{1}{n(K + R)t_0\log (T/t_0)}, \cS_{\rm NN}, \norm{\cdot}_2 \right)}{\delta} + \frac{1}{n} + \frac{d}{t_0(T-t_0)} \epsilon^2 \right) \\
& \quad + (1+a)^2 E
\end{align*}
with probability $1 - 3\delta$. Using the relation $\frac{1}{T-t_0} \int_{t_0}^T \norm{\bar{\sbb}_{V, \btheta}(\cdot, t) - \nabla \log p_t(\cdot)}_{L^2(P_t)}^2 \diff t = \cL(\bar{\sbb}_{V, \btheta}) - E$, with probability $1 - 3\delta$, we can bound
\begin{align*}
& \quad \frac{1}{T-t_0} \int_{t_0}^T \norm{\bar{\sbb}_{V, \btheta}(\cdot, t) - \nabla \log p_t(\cdot)}_{L^2(P_t)}^2 \diff t \\
& = \cO\left(\frac{(1+6/a) \left((1+\beta)^2d^2\log \frac{d}{t_0 \epsilon} + \log \frac{n}{\delta}\right)}{n t_0(T - t_0)}\log \frac{\cN\left(\frac{1}{n(K + R)t_0\log (T/t_0)}, \cS_{\rm NN}, \norm{\cdot}_2 \right)}{\delta} + \frac{1}{n} + \frac{d}{t_0(T-t_0)} \epsilon^2 \right) \\
& \quad + (2a+a^2) E.
\end{align*}
Setting $a = \epsilon^2$ leads to
\begin{align}\label{eq:bound_with_covering}
& \quad \frac{1}{T-t_0} \int_{t_0}^T \norm{\bar{\sbb}_{V, \btheta}(\cdot, t) - \nabla \log p_t(\cdot)}_{L^2(P_t)}^2 \diff t \nonumber \\
& = \cO\left(\frac{\left((1+\beta)^2d^2\log \frac{d}{t_0 \epsilon} + \log \frac{n}{\delta}\right)}{\epsilon^2 n t_0(T - t_0)}\log \frac{\cN\left(\frac{1}{n(K + R)t_0\log (T/t_0)}, \cS_{\rm NN}, \norm{\cdot}_2 \right)}{\delta} + \frac{1}{n} + \frac{d}{t_0(T-t_0)} \epsilon^2 \right)
\end{align}
with probability $1 - 3\delta$.

\noindent $\star$ {\bf Covering number of $\cS_{\rm NN}$}. The only remaining task is to find the covering number of $\cS_{\rm NN}$. $\cS_{\rm NN}$ consists of two components: 1) matrix $V$ with orthonormal columns; 2) network function $\fb_{\btheta}$. Suppose we have $V_1, V_2$ and $\btheta_1, \btheta_2$ such that $\norm{V_1 - V_2}_{\rm F} \leq \delta_1$ and $\sup_{\norm{\xb}_2 \leq 3R+ \sqrt{D\log D}, t \in [t_0, T]} \norm{\fb_{\btheta_1}(\xb, t) - \fb_{\btheta_2}(\xb, t)}_2 \leq \delta_2$. Then we evaluate
\begin{align*}
& \quad \sup_{\norm{\xb}_2 \leq 3R + \sqrt{D\log D}, t \in [t_0, T]} \norm{\sbb_{V_1, \btheta_1}(\xb, t) - \sbb_{V_2, \btheta_2}(\xb, t)}_2 \\
& = \frac{1}{h(t)} \sup_{\norm{\xb}_2 \leq 3R + \sqrt{D\log D}, t \in [t_0, T]} \norm{V_1 \fb_{\btheta_1}(V_1^\top \xb, t) - V_2 \fb_{\btheta_2}(V_2^\top \xb, t)}_2 \\
& = \frac{1}{h(t)} \sup_{\norm{\xb}_2 \leq 3R + \sqrt{D\log D}, t \in [t_0, T]} \Big[\norm{V_1 \fb_{\btheta_1}(V_1^\top \xb, t) - V_1 \fb_{\btheta_1}(V_2^\top \xb, t)}_2 + \norm{V_1 \fb_{\btheta_1}(V_2^\top \xb, t) - V_1 \fb_{\btheta_2}(V_2^\top \xb, t)}_2 \\
& \hspace{2in} + \norm{V_1 \fb_{\btheta_2}(V_2^\top \xb, t) - V_2 \fb_{\btheta_2}(V_2^\top \xb, t)}_2 \Big] \\
& \leq \frac{1}{h(t)} \left(\gamma \delta_1 
\sqrt{d}(3R + \sqrt{D\log D}) + \delta_2 + \delta_1 K\right),
\end{align*}
where we recall $\gamma$ upper bounds the Lipschitz constant of $\fb_{\btheta_1}$. For set $\{V \in \RR^{D \times d}: \norm{V}_{\rm 2} \leq 1\}$, its $\delta_1$-covering number is $\left(1 + 2 \frac{\sqrt{d}}{\delta_1}\right)^{Dd}$ \citep[Lemma 8]{chen2019generalization}. For the $\delta_2$-covering number of $\fb_{\btheta}$, we follow the upper bound in \citet[Lemma 5.3]{chen2022nonparametric}:
\begin{align*}
\left(\frac{2L^2 M (3R + \sqrt{D \log D})) \kappa^L M^{L+1}}{\delta_2}\right)^{J}.
\end{align*}
To this end, with $R = \cO\left(\sqrt{d\log d + \log K + \log \frac{n}{\delta}}\right)$, we compute the log covering number of $\cS_{\rm NN}$ as
\begin{align*}
\log \cN(\iota, \cS_{\rm NN}, \norm{\cdot}_2) & = \cO\Bigg( 2Dd \cdot \log \left(1+\frac{6 K \gamma\sqrt{d}(3R+\sqrt{D\log D})}{t_0 \iota}\right) \\
& \quad + J \log \frac{6L^2M (3R + \sqrt{D \log D})) \kappa^L M^{L+1}}{t_0\iota} \Bigg) \\
& = \cO\left(\left((1+\beta)^d T\tau d^{d/2} \epsilon^{-(d+1)} \log^{d/2} \frac{d}{t_0 \epsilon} + D d\right) \left(d\log \frac{1}{\epsilon} + d^2\right) \log \frac{T\tau D d \log D}{t_0\iota \epsilon} \right).
\end{align*}
Substituting the log covering number into \eqref{eq:bound_with_covering}, we have
\begin{align*}
& \quad \frac{1}{T-t_0} \int_{t_0}^T \norm{\bar{\sbb}_{V, \btheta}(\cdot, t) - \nabla \log p_t(\cdot)}_{L^2(P_t)}^2 \diff t \\
& = \cO\Bigg(\frac{\left((1+\beta)^2d^2\log \frac{d}{t_0 \epsilon} + \log \frac{n}{\delta}\right)}{\epsilon^2 n t_0(T - t_0)} \left((1+\beta)^d T\tau d^{d/2} \epsilon^{-(d+1)} \log^{d/2} \frac{d}{t_0 \epsilon} + D d\right) \left(d\log \frac{1}{\epsilon} + d^2\right) \log \frac{nT\tau D d \log D}{(T-t_0)\epsilon} \\
& \quad + \frac{1}{n} + \frac{d}{t_0(T-t_0)} \epsilon^2 \Bigg).
\end{align*}
\noindent $\bullet$ {\bf Balancing error terms}. Note that $\log^{d/2} \frac{1}{\epsilon} \leq \left(\frac{1}{\epsilon}\right)^{\frac{d \log \log (1/\epsilon)}{2\log (1/\epsilon)}}$. We set $\epsilon = n^{-\frac{1 -\delta(n)}{d+5}}$, which implies $\frac{1}{n} \epsilon^{-d-3} \log^{d/2} \frac{1}{\epsilon} \leq n^{-\frac{2-2\delta(n)}{d+5}}$. Then with probability $1 - 3\delta$, it holds
\begin{align*}
& \frac{1}{T-t_0} \int_{t_0}^T \norm{\bar{\sbb}_{V, \btheta}(\cdot, t) - \nabla \log p_t(\cdot)}_{L^2(P_t)}^2 \diff t \\
& \hspace{0.7in} = \cO\left(\frac{\tau (1+\beta)^{d+2} 
 d^{d/2+4}}{t_0} \left(n^{-\frac{2-2\delta(n)}{d+5}} + D d n^{-\frac{d+3+2\delta(n)}{d+5}}\right)\log^{d/2+3} \left(\frac{d}{\delta t_0}\right) \log D \log^3 n\right).
\end{align*}
Setting $\delta = \frac{1}{3n}$ gives rise to
\begin{align*}
& \frac{1}{T-t_0} \int_{t_0}^T \norm{\bar{\sbb}_{V, \btheta}(\cdot, t) - \nabla \log p_t(\cdot)}_{L^2(P_t)}^2 \diff t \\
& \hspace{0.7in} = \cO\left(\frac{\tau (1+\beta)^{d+2} 
 d^{d/2+4} }{t_0} \left(n^{-\frac{2-2\delta(n)}{d+5}} + D d n^{-\frac{d+3+2\delta(n)}{d+5}}\right)\log^{d/2+3} \left(\frac{d}{t_0}\right) \log D \log^3 n\right)
\end{align*}
with probability $1 - \frac{1}{n}$. Omitting factors in $d, \beta, \tau, \log D, \log t_0$ yields the bound in Theorem \ref{thm:score_estimation}.
\end{proof}

\subsection{Conditional covariance bound}\label{pf:conditional_cov}
We repeat the on-support score expression for reference:
\begin{align}\label{eq:score1}
\sbb_{\parallel}(\zb', t) = \frac{\alpha(t)}{h(t)} A \int \frac{\zb \cdot \phi_t(\zb' | \zb) p_z(\zb)}{\int \phi_t(\zb' | \zb) p_z(\zb) \diff \zb} \diff \zb - \frac{1}{h(t)} A\zb'.
\end{align}

Using \eqref{eq:score1} and taking derivative with respect to $\zb'$, we have
\begin{align*}
\frac{\partial}{\partial \zb'} \sbb_{\parallel}(\zb', t) & = \left(\frac{\alpha(t)}{h(t)}\right)^2 A \left[\int  \frac{\zb\zb^\top \phi_t(\zb' | \zb) p_z(\zb)}{\int \phi_t(\zb' | \zb) p_z(\zb) \diff \zb} \diff \zb - \int \frac{\zb \phi_t(\zb' | \zb) p_z(\zb)}{\int \phi_t(\zb' | \zb) p_z(\zb) \diff \zb} \diff \zb \int \frac{\zb^\top \phi_t(\zb' | \zb) p_z(\zb)}{\int \phi_t(\zb' | \zb) p_z(\zb) \diff \zb} \diff \zb \right] - \frac{1}{h(t)} A \\
& = \left(\frac{\alpha(t)}{h(t)}\right)^2 A \left[\Cov(\zb | \zb') - \frac{1}{h(t)}I_d\right],
\end{align*}
which implies
\begin{align*}
\norm{\Cov(\zb | \zb')}_{\rm op} \leq \frac{h^2(t)}{\alpha^2(t)} \left(\beta + \frac{1}{h(t)}\right).
\end{align*}

\subsection{Truncation error}
\begin{lemma}\label{lemma:truncation_error}
Suppose Assumption \ref{assumption:pz} holds. Let $\gb$ be defined in \eqref{eq:score_sbb}. Given $\epsilon > 0$, with $R = c \left(\sqrt{d\log \frac{d}{t_0} + \log \frac{1}{\epsilon}} \right)$ for an absolute constant $c$, it holds
\begin{align*}
\norm{\gb(A^\top \xb, t)\mathds{1}\{\norm{A^\top \xb}_2 \geq R\}}_{L^2(P_t)} \leq \epsilon \quad \text{for}\quad t \in [t_0, T].
\end{align*}
\end{lemma}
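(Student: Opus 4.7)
The plan is to combine a linear growth estimate on $\gb(\zb', t)$ with a sub-Gaussian tail bound on $A^\top \Xb_t$ under $P_t$, which reduces the truncated $L^2$-integral to standard Gaussian-type tail moments that can be forced below $\epsilon^2$ by choosing $R$ polylogarithmic in $d$, $1/t_0$, and $1/\epsilon$.

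First, I would show $\norm{\gb(\zb', t)}_2 \leq C_0 + (1+\beta)\norm{\zb'}_2$ uniformly on $t \in [t_0, T]$. By the identity $\gb(\zb', t) = h(t)\nabla \log p_t^{\sf LD}(\zb') + \zb'$ used in the proof of Theorem \ref{thm:score_approximation}, Assumption \ref{assumption:score_lip} gives that $\gb(\cdot, t)$ is $(1+\beta)$-Lipschitz (using $h(t) \leq 1$). It then suffices to bound $\norm{\gb(0, t)}_2$, which by Tweedie's identity equals $\alpha(t)\norm{\EE[\Zb_0 \mid \alpha(t)\Zb_0 + \sqrt{h(t)}\boldsymbol{\xi} = 0]}_2$. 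This is the norm of the mean of the tilted density $q_t(\zb) \propto e^{-\alpha^2(t)\norm{\zb}_2^2/(2h(t))} p_z(\zb)$, which concentrates at $\boldsymbol{0}$ as $t \to 0$ and tends to $P_z$ as $t$ grows; Assumption \ref{assumption:pz} (sub-Gaussian tails of $p_z$) yields uniform boundedness by an absolute constant $C_0$.

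Second, I would control the tail of $A^\top \Xb_t$ under $P_t$. Writing $A^\top \Xb_t = \alpha(t)\Zb_0 + \sqrt{h(t)}\boldsymbol{\xi}$ with $\Zb_0 \sim P_z$ and $\boldsymbol{\xi} \sim \mathsf{N}(\boldsymbol{0}, I_d)$ independent, the event $\{\norm{A^\top \Xb_t}_2 \geq R\}$ is contained in $\{\norm{\Zb_0}_2 \geq R/2\} \cup \{\sqrt{h(t)}\norm{\boldsymbol{\xi}}_2 \geq R/2\}$ (using $\alpha(t) \leq 1$). Combining Assumption \ref{assumption:pz} with Lemma \ref{lemma:gaussian_tail} (chi-type tails) and the independence-based decomposition
\begin{align*}
\EE\left[\norm{A^\top\Xb_t}_2^2 \mathds{1}\{\norm{A^\top\Xb_t}_2 \geq R\}\right] & \lesssim \EE\left[\norm{\Zb_0}_2^2 \mathds{1}\{\norm{\Zb_0}_2 \geq R/2\}\right] + \EE[\norm{\Zb_0}_2^2]\,\PP\bigl(\sqrt{h(t)}\norm{\boldsymbol{\xi}}_2 \geq R/2\bigr) \\
& \hspace{-0.2in}+ \EE\left[h(t)\norm{\boldsymbol{\xi}}_2^2 \mathds{1}\{\sqrt{h(t)}\norm{\boldsymbol{\xi}}_2 \geq R/2\}\right] + \EE[\norm{\boldsymbol{\xi}}_2^2]\,\PP\bigl(\norm{\Zb_0}_2 \geq R/2\bigr)
\end{align*}
yields contributions of the form $C R^d e^{-C_2 R^2 / 8}$ for the $\Zb_0$-terms and $C (R/\sqrt{h(t)})^d e^{-R^2/(8 h(t))}$ for the $\boldsymbol{\xi}$-terms, with the worst case at $t = t_0$ where $h(t_0) \asymp t_0$ for small $t_0$.

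Finally, choosing $R = c\sqrt{d\log(d/t_0) + \log(1/\epsilon)}$ for a sufficiently large absolute constant $c$ forces every displayed tail factor below $\epsilon^2/4$ uniformly on $t \in [t_0, T]$, and combined with the linear growth bound on $\gb$ yields the claim. The main technical obstacle is absorbing the polynomial-in-$R/\sqrt{h(t_0)}$ prefactors into the exponential at the worst time $t = t_0$, where $1/h(t)$ is largest: the requirement $R^2/h(t_0) \gtrsim d\log(R/\sqrt{t_0}) + \log(1/\epsilon)$, together with $h(t_0) \gtrsim t_0$, is exactly what forces the $d\log(d/t_0)$ factor in the stated choice of $R$.
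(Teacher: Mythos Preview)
Your approach is valid but differs from the paper's in both the hypotheses used and the decomposition chosen. The paper works directly with the integral representation $\gb(\zb',t)=\int \zb\,\phi_t(\zb'\mid\zb)p_z(\zb)\,d\zb\big/\int\phi_t(\zb'\mid\zb)p_z(\zb)\,d\zb$, applies Jensen, and splits the \emph{inner} integral according to whether $\norm{\zb}_2\le \eta\norm{\zb'}_2$ or not: on the ``small $\zb$'' piece the Gaussian kernel $\phi_t(\zb'\mid\zb)$ supplies decay in $\norm{\zb'}$, while on the ``large $\zb$'' piece the sub-Gaussian tail of $p_z$ (Assumption~\ref{assumption:pz}) is invoked. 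This argument uses only Assumption~\ref{assumption:pz}, matching the lemma's hypotheses. Your route instead bounds $\norm{\gb(\zb',t)}_2\le C_0+(1+\beta)\norm{\zb'}_2$ via Assumption~\ref{assumption:score_lip} and then controls $\EE[\norm{A^\top\Xb_t}_2^2\mathds{1}\{\norm{A^\top\Xb_t}_2\ge R\}]$ by splitting $A^\top\Xb_t=\alpha(t)\Zb_0+\sqrt{h(t)}\boldsymbol{\xi}$. This is cleaner and in fact yields a slightly sharper requirement on $R$, but it imports Assumption~\ref{assumption:score_lip}, which the lemma as stated does not assume (though it is in force wherever the lemma is applied).

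One correction to your final paragraph: the worst case for the $\boldsymbol{\xi}$-terms is \emph{not} at $t=t_0$. Writing $u=R^2/h(t)$, the expression $(R/\sqrt{h(t)})^d e^{-R^2/(8h(t))}=u^{d/2}e^{-u/8}$ is maximized at $u=4d$ and is decreasing for $u>4d$; since $R^2\gtrsim d$ under the stated choice, the minimum $u$ (attained at $t=T$, where $h(T)\approx 1$) already exceeds $4d$, so the supremum over $t\in[t_0,T]$ occurs at $t=T$, giving a bound $\sim R^d e^{-R^2/8}$ with no $t_0$ dependence. The same monotonicity shows the constraint you wrote, $R^2/h(t_0)\gtrsim d\log(R/\sqrt{t_0})+\log(1/\epsilon)$, is automatically satisfied and does not drive the choice of $R$. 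This does not break your proof---the lemma's $R=c\sqrt{d\log(d/t_0)+\log(1/\epsilon)}$ is more than sufficient for your bounds---but your explanation of \emph{why} that $R$ is needed is inverted.
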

\begin{proof}
Let $\eta \in (0, 1/2)$ to be chosen later. Plugging in the expression of $\gb$, we have
\begin{align*}
& \quad \int \norm{\int \frac{\zb \phi_t(A^\top \xb | \zb) p_z(\zb) \diff \zb}{\int \phi_t(A^\top \xb | \zb) p_z(\zb) \diff \zb}}_2^2 \mathds{1}\{\norm{A^\top \xb}_2 > R\} p_t(\xb) \diff \xb \\
& \leq \int_{\norm{A^\top \xb}_2 > R} \int_{\norm{\zb}_2 \leq \eta \norm{A^\top\xb}_2} \norm{\zb}_2^2 \frac{\phi_t(A^\top\xb | \zb) p_z(\zb)}{\int \phi_t(A^\top \xb | \zb) p_z(\zb) \diff \zb} p_t(\xb) \diff \xb \\
& \quad + \int_{\norm{A^\top \xb}_2 > R} \int_{\norm{\zb}_2 
> \eta \norm{A^\top \xb}_2} \norm{\zb}_2^2 \frac{\phi_t(A^\top \xb | \zb) p_z(\zb)}{\int \phi_t(A^\top \xb | \zb) p_z(\zb) \diff \zb} p_t(\xb) \diff \xb \\
& \leq \int_{\norm{A^\top \xb}_2 > R} \int_{\norm{\zb}_2 \leq \eta \norm{A^\top \xb}_2} \norm{\zb}_2^2 \phi_t(A^\top \xb | \zb) \phi_t((I_D - AA^\top) \xb)p_z(\zb) \diff \zb \diff \xb \\
& \quad + \int_{\norm{A^\top \xb}_2 > R} \int_{\norm{\zb}_2 > \eta \norm{A^\top \xb}_2} \norm{\zb}_2^2 \phi_t(A^\top\xb | \zb) \phi_t((I_D - AA^\top) \xb)p_z(\zb) \diff \zb \diff \xb \\
& \overset{(i)}{=} \underbrace{\int_{\norm{\zb'}_2 > R} \int_{\norm{\zb}_2 \leq \eta \norm{\zb'}_2} \norm{\zb}_2^2 \phi_t(\zb' | \zb) p_z(\zb) \diff \zb \diff \zb'}_{(A)} \\
& \quad + \underbrace{\int_{\norm{\zb'}_2 > R} \int_{\norm{\zb}_2 > \eta \norm{\zb'}_2} \norm{\zb}_2^2 \phi_t(\zb' | \zb) p_z(\zb) \diff \zb \diff \zb'}_{(B)},
\end{align*}
where we recall Gaussian density $\phi_t((I_D - AA^\top)\xb) = (2\pi)^{-(D-d)/2} h^{-(D-d)/2}(t) \exp\left(-\frac{1}{2h(t)}\norm{\left(I_D - AA^\top \right)\xb}_2^2\right)$, and in $(i)$, we observe $\phi_t(A^\top \xb | \zb)$ and $\phi_t((I_D - AA^\top)\xb)$ are independent Gaussians for any fixed $\zb$. 

In term $(A)$, when $\norm{\zb}_2 \leq \eta \norm{\zb'}_2$, we have $\norm{\zb' - \alpha(t) \zb}_2^2 \geq \frac{1}{2} \norm{\zb'}_2^2 - \alpha^2(t) \norm{\zb}_2^2 \geq \left(\frac{1}{2} - \eta\right)\norm{\zb'}_2^2$. As a result, we have
\begin{align*}
(A) & \leq \int_{\norm{\zb'}_2 > R} \int_{\norm{\zb}_2 \leq \eta \norm{\zb'}_2} \norm{\zb}_2^2 (2\pi h(t))^{-d/2} \exp\left(-\frac{\frac{1}{2} - \eta}{2h(t)} \norm{\zb'}_2^2\right) p_z(\zb) \diff \zb \diff \zb' \\
& \leq \EE[\norm{\zb}_2^2] \int_{\norm{\zb'}_2 > R} (2\pi h(t))^{-d/2} \exp\left(-\frac{\frac{1}{2} - \eta}{2h(t)} \norm{\zb'}_2^2\right) \diff \zb' \\
& \leq \EE[\norm{\zb}_2^2] \frac{2^{-d/2 + 2} d h^{-d/2+1}(t)}{(1/2 - \eta) \Gamma(d/2 + 1)} R^{d-2} \exp\left(-\frac{\frac{1}{2} - \eta}{2h(t)} R^2\right).
\end{align*}
For term $(B)$, under the condition $R > \eta^{-1}B \vee 1$, we have
\begin{align*}
(B) & = \int_{\norm{\zb'}_2 > R} \int_{\norm{\zb}_2 > \eta \norm{\zb'}_2} \norm{\zb}_2^2 \phi_t(\zb' | \zb) (2\pi)^{-d/2} C_1 \exp(-C_2 \norm{\zb}_2^2 / 2) \diff \zb \diff \zb' \\
& \leq C_1 \int_{\norm{\zb'}_2 > R} \int_{\norm{\zb}_2 > \eta \norm{\zb'}_2} \norm{\zb}_2^2 (2\pi h(t))^{-d} \exp\left(-\frac{C_2}{2(\alpha^2(t) + C_2h(t))} \norm{\zb'}_2^2 \right) \\
& \quad \cdot \exp\left(-\frac{\alpha^2(t) + C_2 h(t)}{2h(t)} \norm{\zb - \frac{\alpha(t)}{\alpha^2(t) + C_2 h(t)}\zb'}_2^2\right) \diff \zb \diff \zb' \\
& \leq C_1 (\alpha^2(t) + C_2h(t))^{-d/2} (2\pi h(t))^{-d/2} \\
& \quad \cdot \int_{\norm{\zb'}_2 > R} \left[\frac{\alpha^2(t)}{(\alpha^2(t) + C_2 h(t))^2} \norm{\zb'}_2^2 + \frac{h(t) d}{\alpha^2(t) + C_2 h(t)} \right] \exp\left(-\frac{C_2}{2(\alpha^2(t) + C_2h(t))} \norm{\zb'}_2^2 \right) \diff \zb' \\
& \leq C_1 (\alpha^2(t) + C_2h(t))^{-d/2} \frac{2^{-d/2 + 2}d h^{-d/2}(t)}{C_2 \Gamma(d/2 + 1)} R^d \exp\left(-\frac{C_2}{2(\alpha^2(t) + C_2h(t))} R^2\right).
\end{align*}
It suffices to choose $\eta = \frac{1}{4}$. Combining $(A)$ and $(B)$, we conclude
\begin{align*}
\norm{\gb(A^\top \xb, t)\mathds{1}\{\norm{A^\top \xb}_2 \geq R\}}_{L^2(P_t)}^2 \leq c' \frac{2^{-d/2+3} d h^{-d/2}(t)}{\Gamma(d/2+1)} R^d \exp\left(-\frac{C_2}{8(\alpha^2(t) + C_2h(t))} R^2\right)
\end{align*}
for an absolute constant $c'$. In order for $\norm{\gb(A^\top \xb, t)\mathds{1}\{\norm{A^\top \xb}_2 \geq R\}}_{L^2(P_t)}^2 \leq \epsilon$, we deduce
\begin{align*}
R = c \left(\sqrt{d\log \frac{d}{t_0} + \log \frac{1}{\epsilon}} \right),
\end{align*}
where $c$ is an absolute constant.
\end{proof}

\section{Omitted proofs in Section~\ref{sec:distro_guarantee}} \label{append:proof:de}
    
\subsection{Subspace Error and Latent Score Matching Error}
For simplicity, we define the (unnormalized) expectation $\myEE$ as
\[
    \myEE [\phi(\xb, t)] = \int_{t_0}^T \frac{1}{h^2(t)} \EE_{\xb \sim P_t} [\phi(\xb,t)] \mathrm{d} t.
\]
During the analysis, we also denote $\zb = A^\top \xb$ and 
\[
    \myEE [\phi(\zb, t)] = \int_{t_0}^T \frac{1}{h^2(t)} \EE_{\xb \sim P_t} [\phi(A^\top \xb,t)] \mathrm{d} t.
\]

Define 
\[
    \gb(\zb, t) = h(t) \nabla  \log \ptz(\zb) + \zb,
\]
Then the objective of diffusion models is 
\[
    \int_{t_0}^T  \EE_{\Xb_t \sim P_t} \| \sbb_{V, \theta}(\Xb_t, t) -     \nabla \log p_t(\Xb_t) \|^2_2 \mathrm{d} t = \myEE \| V \fb_{\btheta}(V^\top \xb, t) - A \gb (A^\top \xb, t) \|^2_2.  
\]

\begin{lemma}\label{lem:main}
Assume that the following holds 
    \begin{align*}
        \EE_{\zb \sim P_z} \| \nabla  \log p_z(\zb)\|^2_2 &\leq \CE,\\
       \lambda_{\min} \EE_{\zb \sim P_z}   [\zb \zb^\top]  &\geq \Czero,\\
        \EE_{\zb \sim P_z} \| \zb \|^2_2 &\leq \Ctwo . 
    \end{align*}
We set  $t_0  \leq \min\Big\{ \log(d/\CE   + 1 ) , 1, \log(1+\Czero), \frac{\Czero}{4e\log(4e)}\Big\}$ and $T \geq \max\{\log( \Ctwo / d+1), 1\}$.
Suppose we have 
\[ 
    \myEE \| V \fb_{\btheta}(V^\top \xb, t) - A \gb (A^\top \xb, t) \|^2_2 \leq \epsilon.
\]
Then we have
\[
    \|VV^\top - AA^\top\|_{\rm F}^2 =  \mathcal{O}\big(\frac{t_0}{c_0} \epsilon \big), 
\]
and there exists an orthonormal matrix $U \in \RR^{d\times d}$, such that:
    \begin{align*} 
        &\quad \myEE \| U ^\top \fb_{\btheta}(U \zb, t) -  \gb(\zb, t) \|_2^2 \\ 
        & \lesssim \epsilon \cdot \Big[ 1+ \frac{t_0}{\Czero}\Big((T-\log t_0) d\cdot \max_t \|\fb_{\btheta}(\cdot,t)\|^2_{Lip} + \CE \Big) + \frac{\max_t \|\fb_{\btheta}(\cdot,t)\|^2_{Lip}\cdot\Ctwo}{\Czero} \Big].
    \end{align*} 
\end{lemma}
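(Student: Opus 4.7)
The plan is to chain two arguments: (I) project the score-matching error onto directions orthogonal to $V$ to force the range of $V$ close to that of $A$, and (II) align $V$ with $A$ by an orthogonal transformation $U$, then bound the latent-space error through a three-term decomposition.

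\textbf{Step I (subspace recovery).} Multiplying the score-matching discrepancy on the left by $I-VV^\top$ kills the $V\fb_{\btheta}(V^\top\xb,t)$ piece, so with $M:=(I-VV^\top)A$,
\[
\myEE\|M\gb(A^\top\xb,t)\|_2^2 \;\le\; \epsilon.
\]
Tweedie's identity gives $\gb(\zb,t)=\alpha(t)\EE[\zb_0\mid\zb]$, hence $\EE_{P_t^{\sf LD}}[\gb\gb^\top]=\alpha^2(t)(\EE[\zb_0\zb_0^\top]-\EE[\Cov(\zb_0\mid\zb)])$. Differentiating Tweedie yields $\Cov(\zb_0\mid\zb)=\tfrac{h(t)}{\alpha^2(t)}[I_d+h(t)\nabla^2\log p_t^{\sf LD}(\zb)]$, and the Fisher-information identity $\EE[\nabla^2\log p_t^{\sf LD}]\preceq 0$ gives $\EE[\Cov(\zb_0\mid\zb)]\preceq\tfrac{h(t)}{\alpha^2(t)}I_d$. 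For the $t_0$ in the hypothesis, $\tfrac{h(t)}{\alpha^2(t)}\le c_0/2$ on $[t_0,2t_0]$, so $\lambda_{\min}(\EE[\gb\gb^\top])\ge\alpha^2(t)c_0/2$. Integrating $\alpha^4(t)/h^2(t)$ over that short interval produces a $1/t_0$ factor, yielding $\myEE\|M\gb\|^2\gtrsim(c_0/t_0)\|M\|_F^2$ and thus $\|M\|_F^2=\cO(t_0\epsilon/c_0)$. The identity $\|VV^\top-AA^\top\|_F^2=2\|M\|_F^2$ then completes~(1).

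\textbf{Step II (alignment and decomposition).} Let $U$ minimize $\|V-AU^\top\|_F^2$ over $d\times d$ orthogonal matrices, and set $\Delta:=V-AU^\top$. Since each singular value of $V^\top A$ lies in $[0,1]$, $\|V^\top A\|_F^2\le\|V^\top A\|_*$, and Procrustes gives $\|\Delta\|_F^2\le\|VV^\top-AA^\top\|_F^2\lesssim t_0\epsilon/c_0$. Writing $AU^\top=V-\Delta$ and using that $A$ is an isometry on $\RR^d$,
\[
U^\top\fb_{\btheta}(U\zb,t)-\gb(\zb,t)=A^\top\bigl(V[\fb_{\btheta}(U\zb,t)-\fb_{\btheta}(V^\top\xb,t)]+[V\fb_{\btheta}(V^\top\xb,t)-A\gb(A^\top\xb,t)]-\Delta\fb_{\btheta}(U\zb,t)\bigr).
\]
The middle piece integrates to $\cO(\epsilon)$. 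For the first, Lipschitzness gives $\|\fb_{\btheta}(U\zb,t)-\fb_{\btheta}(V^\top\xb,t)\|\le\gamma\|\Delta^\top\xb\|$; splitting $\Delta^\top\xb=\Delta^\top A\zb+V^\top(I-AA^\top)\xb$ (using $A^\top(I-AA^\top)=0$) and using $\EE\|V^\top(I-AA^\top)\xb\|^2=\tfrac{h(t)}{2}\|VV^\top-AA^\top\|_F^2$ under $P_t$, integration against $1/h^2(t)$ yields the $\gamma^2[\Ctwo+t_0(T-\log t_0)d]/c_0$ contribution (the $d$ coming from a crude trace bound). For the third, $\|\Delta\fb_{\btheta}(U\zb,t)\|^2\le\|\Delta\|_F^2\|\fb_{\btheta}(U\zb,t)\|^2$, and splitting $\fb_{\btheta}=(\fb_{\btheta}-U\gb)+U\gb$ bounds it by $\|\Delta\|_F^2(2\Phi^2+2\|\gb\|^2)$, where $\Phi^2$ is the target quantity. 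Controlling $\myEE\|\gb\|^2$ via $\gb=\zb+h(t)\nabla\log p_t^{\sf LD}$ together with Stam's inequality $I(p_t^{\sf LD})\le dC_E/(\alpha^2(t)d+h(t)C_E)$ routes the $\CE$ contribution into the final bound. Since $\|\Delta\|_F^2=\cO(t_0\epsilon/c_0)$ is small under the hypothesis, the self-referential $\|\Delta\|_F^2\Phi^2$ term is absorbed into $\tfrac12\Phi^2$ on the left, yielding the stated inequality.

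\textbf{Main obstacle.} The delicate step is Step~I's lower bound on $\lambda_{\min}(\EE[\gb\gb^\top])$ via the Tweedie--Fisher identity; this is precisely what dictates the smallness conditions on $t_0$ in the hypothesis. A secondary nuisance is the bookkeeping of the time integrals in Step~II so that they produce exactly the $1/t_0$, $T-\log t_0$, and constant scalings appearing in the statement, and so that the residual $\|\Delta\|_F^2\Phi^2$ is genuinely absorbable under $t_0\le\min(\Czero,1/\beta,\ldots)$.
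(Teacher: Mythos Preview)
Your overall structure---project orthogonally to $V$ to control the subspace, then align via an orthogonal $U$ and decompose---matches the paper. The Tweedie/second-order-Tweedie route in Step~I is a legitimate alternative to the paper's direct computation (Lemma~\ref{lem:score} gives $\EE_t[\nabla\log p_t^{\sf LD}\,\zb^\top]=-I_d$, from which $\EE_t[\gb\gb^\top]\succeq(1-h)\EE_0[\zb\zb^\top]-hI$); both land on the same PSD lower bound. Your restriction to $[t_0,2t_0]$ in place of the paper's optimized $[t_0,T^*]$ still yields $\lambda_{\min}(\myEE[\gb\gb^\top])\gtrsim c_0/t_0$. (The ``$\alpha^4/h^2$'' should read $\alpha^2/h^2$; this is immaterial since $\alpha^2(t)\approx1$ on that window.) Likewise your Stam-type bound on $I(p_t^{\sf LD})$ substitutes for the paper's Jensen-based Lemma~\ref{lem:score}.

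The substantive difference is your three-term decomposition in Step~II. The paper inserts $\fb_\theta(V^\top A\zb,t)$ and $V^\top A\gb$ as intermediates, so its third residual is $(V^\top A-U)\gb$, bounded directly by $\epsilon_V\myEE\|\gb\|^2$ with no reference to the target $\Phi^2$. You instead substitute $AU^\top=V-\Delta$, producing a residual $\Delta\fb_\theta(U\zb,t)$ that you then bound by $\|\Delta\|_F^2(2\Phi^2+2\myEE\|\gb\|^2)$. The absorption step $\|\Delta\|_F^2\Phi^2\le\tfrac12\Phi^2$ requires $\|\Delta\|_F^2\lesssim t_0\epsilon/c_0$ to be below a fixed constant, which is an \emph{$\epsilon$-smallness} condition, not a $t_0$ condition; the hypotheses on $t_0$ alone do not guarantee it. In the paper's intended regime $\epsilon$ is small and the absorption is harmless, but as stated the lemma places no upper bound on $\epsilon$, so your argument has a genuine (if minor) extra assumption that the paper's decomposition avoids.

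On the plus side, your decomposition handles the discrepancy between $V^\top\xb$ and $V^\top A\zb$ (the orthogonal-noise contamination) explicitly through the $V^\top(I-AA^\top)\xb$ piece, whereas the paper's middle term silently replaces $V^\top\xb$ by $V^\top A\zb$ when invoking the hypothesis. So each route has a rough edge the other smooths over. A small notational fix: your displayed identity $U^\top\fb_\theta(U\zb,t)-\gb=A^\top(\cdots)$ is not literally an equality in $\RR^d$; what you mean is that $A[U^\top\fb_\theta(U\zb,t)-\gb]$ equals the bracketed $D$-vector, and then $\|A\,\cdot\|_2=\|\cdot\|_2$.
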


\subsection{Backward processes}\label{appen:backward} 
In this section, we provide the distribution estimation error of the learned backward SDEs. The objects of our arguments are all in the latent space. Specifically, we consider the following decomposition of the ground-truth backward process: $\bXb_t = A \bZb_t + \bXbo$, where
\begin{align*}
\bZb_t = A^\top \bXb_t \quad \text{and} \quad \bXbo = (I - AA^\top) \bXb_t.
\end{align*}
We know that the forward SDE for $(\Zb_t)_{t \geq 0}$ is 
\[
    \diff \Zb_t = -\frac{1}{2} \Zb_t \diff t + \diff (A^\top \Wb_t),
\]
where $\Zb_0 \sim P_z$. Denote $P_t^{\sf LD}$ as the marginal distribution of $\Zb_t$  .  The backward SDE for $\bZb_t$ is
\begin{align*}
\diff  \bZb_t = \left[\frac{1}{2} \bZb_t  +  \nabla  \log \pld _{T-t} (\bZb_t )  \right] \diff t +   \diff (A^\top\overline{\Wb}_t).
\end{align*}

For the learned process ${\tildebXb}_t$, we consider a similar decomposition ${\tildebXb}_t = V {\tildebZb}_{t } + {\tildebXb}_{t, \perp}$, where
\begin{align*}
{\tildebZb}_{t } = V^\top {\tildebXb}_t  \quad \text{and} \quad      {\tildebXb}_{t, \perp} = (I-VV^\top) {\tildebXb}_t.
\end{align*}

For any orthogonal matrix $U \in \RR^{d \times d}$, define the $U$ transformed version of ${\tildebZb}_{t }$ as ${\tildebZbRot}_{t } = U^\top {\tildebZb}_{t }$. The backward SDEs for   ${\tildebZbRot}_{t }$ is 
\begin{align}
\diff {\tildebZbRot}_{t }  = \left[\frac{1}{2} {\tildebZbRot}_{t }  + \sld_{U, \btheta}({\tildebZbRot}_{t },T-t)  \right] \diff t +   \diff  (U^\top V^\top\overline{\Wb}_t), \label{eq:latentLearned}
\end{align}
where 
\[
    \sld_{U, \btheta}(\zb,t) =  \frac{1}{h(t)}\Big[ -\zb  + U^\top \fb_{\btheta} (U \zb  , t) \Big] .    
\]
When ${\tildebXb}_0 \sim {\sf N}(0,I)$, we have ${\tildebZbRot}_0 \sim {\sf N}(0,I_d)$. We define $\hat{P}^{\sf LD}_{t_0}$ to be the marginal distribution of ${\tildebZbRot}_{T-t_0}$.

The discretized backward SDE is 
\begin{align*}
    \diff {\tildebZbRotDis}_{t }  = \left[\frac{1}{2} {\tildebZbRotDis}_{k \eta }  + \sld_{U, \btheta}({\tildebZbRotDis}_{k \eta},T-k \eta)  \right] \diff t +   \diff  (U^\top V^\top\overline{\Wb}_t) \text{ for } t \in [ k\eta,  (k+1)\eta).
\end{align*}
We define $\hat{P}^{\sf LD, dis}_{t_0}$ to be the marginal distribution of ${\tildebZbRotDis}_{T-t_0}$.

\begin{lemma}\label{lem:de}
Assume that $P_z$ is subGaussian. $\fb_{\btheta}(\zb, t)$ and $\nabla \log \ptz(\zb)$ is Lipschitz in both $\zb$ and $t$. Assume we have the latent score matching error bound 
\[
    \int_{t_0}^T  \EE_{\Zb_t \sim P_t^{\sf LD}} \|\sld_{U, {\btheta}} (\Zb_t,t) - \nabla \log \ptz (\Zb_t) \|_2^2 \diff t \leq \epsilon_{latent}(T-t_0).
\]
Then we have the following latent distribution estimation error for the undiscretized backward SDE 
\[
    \TV({P}^{\sf LD}_{t_0}, \hat{P}^{\sf LD}_{t_0}) \lesssim \sqrt{\epsilon_{latent}(T-t_0)}  + \sqrt{\KL(P_z||N(0,I_d))}\exp(-T) . 
\] 
Furthermore, we have the following latent distribution estimation error for the discretized backward SDE 
\[
    \TV({P}^{\sf LD}_{t_0}, \hat{P}^{\sf LD, dis}_{t_0}) \lesssim \sqrt{\epsilon_{latent}(T-t_0)}  + \sqrt{\KL(P_z||N(0,I_d))}\exp(-T) + \sqrt{\epsilon_{dis}(T-t_0)}, 
\]
where 
    \begin{align*}
        \epsilon_{dis} &= \Big(\frac{\max_{\zb} \| \fb_{\btheta} (\zb , \cdot) \|_{Lip} }{h(t_0)} +\frac{\max_{\zb,t} \| \fb_{\btheta} (\zb,t) \|_2}{t_0^2} \Big)^2 \eta^2  + \Big( \frac{\max_t \| \fb_{\btheta} (\cdot , t) \|_{Lip}}{h(t_0)}   \Big)^2  \eta^2 \max\{ \EE \|\Zb_0 \|^2 , d\}   + \eta  d .
\end{align*} 
\end{lemma}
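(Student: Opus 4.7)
The target is a three-source decomposition of $\TV({P}^{\sf LD}_{t_0}, \hat{P}^{\sf LD, dis}_{t_0})$: (i) initialization mismatch, since the learned process is started from ${\sf N}(\boldsymbol{0},I_d)$ rather than $P_T^{\sf LD}$; (ii) score-approximation error incurred over the continuous-time reverse SDE; and (iii) discretization error of the Euler--Maruyama step. Concretely, introducing the auxiliary process $({\tildebZbRot}_t)$ of \eqref{eq:latentLearned} started from $P_T^{\sf LD}$ (rather than ${\sf N}(\boldsymbol{0},I_d)$), the data-processing inequality and the triangle inequality for $\TV$ give
\begin{align*}
\TV({P}^{\sf LD}_{t_0},\hat{P}^{\sf LD,dis}_{t_0}) \le \TV(P_T^{\sf LD},{\sf N}(\boldsymbol{0},I_d)) + \TV(\mathrm{Law}(\bZb_{\cdot}),\mathrm{Law}({\tildebZbRot}_{\cdot})) + \TV(\mathrm{Law}({\tildebZbRot}_{\cdot}),\mathrm{Law}({\tildebZbRotDis}_{\cdot})),
\end{align*}
where the latter two path-space TVs are controlled by Pinsker, then Girsanov, and finally collapsed to the marginals at time $T-t_0$.

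\textbf{Step 1: initialization term.} Because $(\Zb_t)$ evolves under an OU semigroup in $\RR^d$ with stationary measure ${\sf N}(\boldsymbol{0},I_d)$, KL contracts exponentially: $\KL(P_T^{\sf LD}\,\|\,{\sf N}(\boldsymbol{0},I_d)) \le e^{-T}\KL(P_z\,\|\,{\sf N}(\boldsymbol{0},I_d))$. Pinsker then yields $\TV(P_T^{\sf LD},{\sf N}(\boldsymbol{0},I_d))\lesssim \sqrt{\KL(P_z\,\|\,{\sf N}(\boldsymbol{0},I_d))}\,e^{-T/2}$; a routine sharpening via the log-Sobolev inequality for the OU semigroup gives the stated $e^{-T}$ factor.

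\textbf{Step 2: continuous-time score error.} Coupling $(\bZb_t)$ and $({\tildebZbRot}_t)$ to the same driving Brownian motion and using $\bZb_0,{\tildebZbRot}_0 \stackrel{d}{=} P_T^{\sf LD}$, Girsanov's theorem gives
\begin{align*}
\KL\big(\mathrm{Law}(\bZb_{[0,T-t_0]})\,\big\|\,\mathrm{Law}({\tildebZbRot}_{[0,T-t_0]})\big) = \tfrac{1}{2}\int_0^{T-t_0}\!\EE\big\|\nabla\log p^{\sf LD}_{T-t}(\bZb_t) - \sld_{U,\btheta}(\bZb_t,T-t)\big\|_2^2\,\diff t,
\end{align*}
which, after the change of variable $s=T-t$ and using the marginal law $P_s^{\sf LD}$, equals $\tfrac{1}{2}\int_{t_0}^{T}\EE_{\Zb_s\sim P_s^{\sf LD}}\|\sld_{U,\btheta}(\Zb_s,s)-\nabla\log p_s^{\sf LD}(\Zb_s)\|_2^2\,\diff s \le \tfrac{1}{2}\epsilon_{latent}(T-t_0)$. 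Pinsker and the data-processing inequality deliver the $\sqrt{\epsilon_{latent}(T-t_0)}$ contribution to $\TV$.

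\textbf{Step 3: discretization error (the main obstacle).} Now couple $({\tildebZbRot}_t)$ and $({\tildebZbRotDis}_t)$, which share identical diffusion coefficients and initial law and whose drifts differ only by the frozen vs.\ instantaneous evaluation of $\sld_{U,\btheta}$. Girsanov again yields
\begin{align*}
\KL\big(\mathrm{Law}({\tildebZbRot})\,\|\,\mathrm{Law}({\tildebZbRotDis})\big) \lesssim \sum_k \int_{k\eta}^{(k+1)\eta}\!\EE\big\|\sld_{U,\btheta}({\tildebZbRotDis}_t,T-t) - \sld_{U,\btheta}({\tildebZbRotDis}_{k\eta},T-k\eta)\big\|_2^2\,\diff t,
\end{align*}
which I split by triangle inequality into a spatial and a temporal piece. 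Using $\sld_{U,\btheta}(\zb,t)=\tfrac{1}{h(t)}[-\zb + U^\top\fb_{\btheta}(U\zb,t)]$, the spatial Lipschitz constant is $\mathcal{O}(\max_t\|\fb_\btheta(\cdot,t)\|_{Lip}/h(t_0))$, and the temporal Lipschitz constant picks up two contributions: $\max_\zb\|\fb_\btheta(\zb,\cdot)\|_{Lip}/h(t_0)$ from differentiating $\fb_\btheta$ in $t$, and $\max_{\zb,t}\|\fb_\btheta(\zb,t)\|_2/h(t_0)^2 \lesssim \max_{\zb,t}\|\fb_\btheta(\zb,t)\|_2/t_0^2$ from differentiating $1/h(t)$ (using $|h'(t)|\lesssim 1$ and $h(t_0)\gtrsim t_0$). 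The spatial piece multiplies by a second-moment bound on the increment $\EE\|{\tildebZbRotDis}_t - {\tildebZbRotDis}_{k\eta}\|_2^2 \lesssim \eta^2\max\{\EE\|\Zb_0\|_2^2,d\} + \eta d$, which I establish by a Gronwall argument on $\EE\|{\tildebZbRotDis}_t\|_2^2$ using linear growth of the score (guaranteed by the shortcut $-\zb/h(t)$ and the uniform bound on $\fb_\btheta$) together with subGaussianity of $P_z$. Summing over $k$ produces exactly $\epsilon_{dis}(T-t_0)$; Pinsker then converts KL to TV.

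\textbf{Wrap-up.} The hardest accounting is the second-moment bound on $\|{\tildebZbRotDis}_t-{\tildebZbRotDis}_{k\eta}\|_2^2$ and the careful separation of the temporal-Lipschitz constant into its two $1/h(t_0)$ and $1/t_0^2$ components; everything else is a clean application of OU contraction, Girsanov, and Pinsker, and combining the three bounds yields the first claim by dropping the $\epsilon_{dis}$ term (since ${\tildebZbRot}_{T-t_0}$ is the undiscretized marginal) and the full statement for $\hat{P}^{\sf LD, dis}_{t_0}$ by keeping it.
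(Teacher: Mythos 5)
Your overall skeleton coincides with the paper's for two of the three error sources: the initialization term is handled by exponential convergence of the forward OU process plus a data-processing step, and the latent score error is handled by Girsanov plus Pinsker along the true backward process, exactly as in the paper. The genuine divergence, and the gap, is in the discretization term. The paper applies Girsanov \emph{once}, comparing the true latent backward process $(\bZb_t)$ against the frozen drift $\frac12\bZb_{k\eta}+\sld_{U,\btheta}(\bZb_{k\eta},T-k\eta)$, so every expectation in the resulting KL bound is taken along $(\bZb_t)$, i.e.\ along the time-reversal of the forward OU process started at $P_z$; the increment bound $\EE\|\bZb_{k\eta}-\bZb_t\|_2^2\lesssim \eta^2\max\{\EE\|\Zb_0\|_2^2,d\}+\eta d$ then follows directly from the explicit OU transition, and this is precisely where the constant $\max\{\EE\|\Zb_0\|_2^2,d\}$ in $\epsilon_{dis}$ comes from. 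You instead compare the continuous-time learned process with its discretization, so your Girsanov integrand is evaluated along a \emph{learned} process, and you need increment and second-moment bounds for ${\tildebZbRotDis}_t$. Your justification ("Gronwall plus subGaussianity of $P_z$") does not deliver the bound you assert: the law of ${\tildebZbRotDis}$ is determined by $\fb_{\btheta}$ and by its initialization (${\sf N}(\boldsymbol{0},I_d)$ or $P_T^{\sf LD}$), not by $P_z$, so a Gronwall/contraction argument produces constants involving $\max_{\zb,t}\|\fb_{\btheta}(\zb,t)\|_2$ and $1/h(t_0)$ rather than $\EE_{P_z}\|\zb\|_2^2$; as written, your route would yield a different (and unproven) $\epsilon_{dis}$. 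Relatedly, your displayed drift difference omits the $\frac12\big({\tildebZbRotDis}_t-{\tildebZbRotDis}_{k\eta}\big)$ piece of the drift, which is what generates the $\eta d$ contribution in the paper's bound.

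A second omission: you invoke Girsanov's theorem (twice) without verifying its applicability. Because the drift difference involves $1/h(t)$, which blows up as $t\to 0$, the identity "path KL $=\frac12\int\EE\|\text{drift difference}\|_2^2$" is not automatic; the paper devotes two lemmas to checking Novikov's condition (for both the continuous and the discretized comparison), using the explicit solution of the forward OU process, sub-Gaussianity of $P_z$, and maximal inequalities for Brownian motion. Your Lipschitz hypotheses make such a verification possible, but it must be carried out (or replaced by a localization argument). Finally, your Step 1 honestly gives $e^{-T/2}$ after Pinsker, and the promised "routine sharpening to $e^{-T}$ via log-Sobolev" is not routine: entropy contraction for this half-speed OU gives $\KL(P_T^{\sf LD}\,\|\,{\sf N}(\boldsymbol{0},I_d))\le e^{-T}\KL(P_z\,\|\,{\sf N}(\boldsymbol{0},I_d))$, hence $e^{-T/2}$ in TV; this matches the looseness already present in the paper and is immaterial once $T=\Theta(\log n)$, but you should not claim it as a proved step.
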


\subsection{Orthogonal process}
\begin{lemma}\label{lem:ortho:process}
Consider the following SDE 
\[
    \diff \Yb_t = \Big[\frac{1}{2} - \frac{1}{h(T-t)} \Big] \Yb_t \diff t + \diff \Bb_t,
\]
where $\Yb_0 \sim {\sf N}(0, I)$.
Then when $T>1$ and $t_0 \leq 1$, we have $\Yb_{T-t_0} \sim {\sf N}(0, \sigma^2 I)$ with $\sigma^2 \leq e t_0$. 
\end{lemma}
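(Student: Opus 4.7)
}

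The SDE is one-dimensional per coordinate and linear in $\Yb_t$ with a deterministic time-dependent coefficient, so each coordinate evolves independently and identically. The plan is to solve the linear SDE coordinate-wise in closed form, which immediately shows that $\Yb_{T-t_0}$ is mean-zero Gaussian with isotropic covariance $\sigma^2 I$, and then to compute $\sigma^2$ explicitly and bound it.

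First I would introduce the drift coefficient $a(s) = \tfrac{1}{2} - \tfrac{1}{h(T-s)}$ and the integrating factor
\[
\Phi(t) \;=\; \exp\!\left(\int_0^t a(s)\, ds\right).
\]
Using $h(u) = 1 - e^{-u}$ together with the substitution $u = T-s$, the identity $\int \tfrac{e^u}{e^u - 1} du = \ln(e^u - 1)$ gives
\[
\int_0^t \frac{ds}{h(T-s)} \;=\; \ln\!\frac{e^T - 1}{e^{T-t} - 1},
\]
which simplifies to the clean form $\Phi(t) = e^{-t/2}\, h(T-t)/h(T)$. By variation of constants,
\[
\Yb_t \;=\; \Phi(t)\, \Yb_0 \;+\; \int_0^t \frac{\Phi(t)}{\Phi(s)}\, d\Bb_s,
\]
so $\Yb_t \sim \mathsf{N}(\mathbf{0}, \sigma^2(t) I)$ with $\sigma^2(t) = \Phi(t)^2 + \int_0^t (\Phi(t)/\Phi(s))^2\, ds$ by It\^o isometry, using the fact that $\Yb_0 \sim \mathsf{N}(\mathbf{0}, I)$ contributes variance $\Phi(t)^2$.

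Next I would carry out the integral in $\sigma^2(t)$ explicitly. The substitution $u = 1 - e^{-(T-s)}$ turns $\int_0^t e^s/h^2(T-s)\, ds$ into an elementary antiderivative, yielding
\[
\int_0^t \frac{e^s}{h^2(T-s)}\, ds \;=\; e^T\!\left[\frac{1}{h(T-t)} - \frac{1}{h(T)}\right].
\]
Setting $t = T - t_0$ and collecting terms, one obtains
\[
\sigma^2 \;=\; \sigma^2(T-t_0) \;=\; (e^{t_0} - 1) \;+\; \frac{h(t_0)^2}{h(T)}\, e^{t_0}\!\left[\frac{e^{-T}}{h(T)} - 1\right].
\]

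Finally I would bound the two pieces. For $T > 1 > \ln 2$ we have $e^{-T} < 1 - e^{-T} = h(T)$, so the bracketed factor is negative and the correction term is $\leq 0$, giving $\sigma^2 \leq e^{t_0} - 1$. Since $s \mapsto e^s$ is convex and $t_0 \leq 1$, the mean value theorem yields $e^{t_0} - 1 \leq t_0 \cdot e^{t_0} \leq e\, t_0$, completing the proof.

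The main obstacle is purely computational: the integrating factor involves $h(T-s)$ rather than a constant, so one has to carry out the two substitutions carefully and verify that the residual term beyond $e^{t_0} - 1$ has the right sign under the hypothesis $T > 1$. Everything else is routine linear SDE theory.
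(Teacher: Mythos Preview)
Your proof is correct and follows essentially the same route as the paper: both solve the linear SDE via an integrating factor (your $\Phi(t)$ is exactly $1/\psi(t)$ in the paper's notation), express the variance as $\Phi(t)^2\big(1+\int_0^t \Phi(s)^{-2}\,ds\big)$, and then use $T>1>\ln 2$ together with $t_0\le 1$ to bound it by $e^{t_0}-1\le e\,t_0$. The only cosmetic difference is that you compute the exact variance $\sigma^2=(e^{t_0}-1)+\tfrac{h(t_0)^2}{h(T)}e^{t_0}\big[\tfrac{e^{-T}}{h(T)}-1\big]$ and then observe the correction term is nonpositive, whereas the paper bounds the numerator $1+\int_0^t\psi(s)^2\,ds$ by $(e^T-1)^2/(e^T-e^t)$ before dividing; both arguments use the same inequality $e^{-T}<h(T)$ at the same step.
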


\begin{lemma}[Discretized version]\label{lem:ortho:process:dis}
    Consider the following discretized SDE with step size $\eta$ satisfying $T-t_0 = K_T \eta$.  
    \[
        \diff \Yb_t = \Big[\frac{1}{2} - \frac{1}{h(T-k\eta)} \Big] \Yb_{k\eta} \diff t + \diff \Bb_t, ~\text{for } t \in [k\eta, (k+1)\eta),
    \]
    where $\Yb_0 \sim {\sf N}(0, I)$.
    
    Then when $T>1$ and $t_0 +\eta \leq 1$, we have $\Yb_{T-t_0} \sim {\sf N}(0, \sigma^2 I)$ with $\sigma^2 \leq e (t_0+\eta)$. 
\end{lemma}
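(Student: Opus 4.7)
The plan is to mimic the continuous-time proof of Lemma \ref{lem:ortho:process} with explicit bookkeeping for the Euler--Maruyama discretization. Within each subinterval $[k\eta,(k+1)\eta)$ the drift coefficient is frozen at $\Yb_{k\eta}$, so the SDE integrates exactly and
\[
\Yb_{(k+1)\eta} = a_k \Yb_{k\eta} + \Delta\Bb_k, \qquad a_k := 1 + \eta\Bigl[\tfrac{1}{2} - \tfrac{1}{h(T-k\eta)}\Bigr],
\]
with $\Delta\Bb_k\sim\mathcal{N}(0,\eta I)$ independent of $\Yb_{k\eta}$. Since $\Yb_0$ is isotropic Gaussian and each step preserves isotropy, induction gives $\Yb_{k\eta}\sim\mathcal{N}(0,\sigma_k^2 I)$ with the recurrence $\sigma_{k+1}^2 = a_k^2\sigma_k^2 + \eta$, $\sigma_0^2 = 1$. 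Everything then reduces to bounding $\sigma_{K_T}^2 \leq e(t_0+\eta)$, where $K_T\eta = T-t_0$.

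Write $s_k = T-k\eta$, so $s_{K_T}=t_0$ and $s_k\geq t_0+\eta$ for $k\leq K_T-1$. Two coefficient estimates drive the argument. First, $h(s_k)\leq 1$ gives $a_k\leq 1-\eta/2$, and $a_k$ is monotonically decreasing in $k$. Second, using the identities $a_k h(s_k) = (1+\eta/2)h(s_k) - \eta$ and $h(s_{k+1}) = e^\eta h(s_k) - (e^\eta-1)$, a short algebraic check establishes the telescoping inequality
\[
 a_k\, h(s_k) \leq h(s_{k+1}),
\]
which holds trivially when $a_k<0$ and reduces to $h(s_k)\geq (e^\eta-1-\eta)/(e^\eta-1-\eta/2)$ when $a_k\geq 0$, a bound verified within the allowed range $s_k\geq t_0+\eta$ together with $t_0+\eta\leq 1$. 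In the small minority of indices (if any) where $a_k<0$, one instead uses the crude bound $|a_k|\leq \eta/h(s_k)$.

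Unrolling the recurrence and applying the telescope separately on each partial product, $\prod_{k=j}^{K_T-1} a_k^2 \leq h^2(t_0)/h^2(s_j)$ (modulo the small-$a_k$ correction), yields
\[
\sigma_{K_T}^2 \leq \frac{h^2(t_0)}{h^2(T)} + \eta\sum_{j=1}^{K_T} \frac{h^2(t_0)}{h^2(s_j)} + (\text{Euler slack}).
\]
The first term is $O(t_0^2)$ for $T\geq 1$ since $h(T)\geq 1-1/e$. The Riemann sum in the second term approximates $h^2(t_0)\int_{t_0}^T e^{-(u-t_0)}/h^2(u)\,\diff u$ once one tightens the telescope by the factor $e^{-\eta}$ coming from $a_k\leq 1-\eta/2\leq e^{-\eta/2}$; using the antiderivative $-1/h$ of $e^{-u}/h^2(u)$, this integral equals $h(t_0)e^{t_0}(1-h(t_0)/h(T))\leq e^{t_0}-1\leq e\,t_0$ for $t_0\leq 1$. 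The $O(\eta)$ slack absorbs (i) the terminal noise injection $\eta$, (ii) the gap between the discrete $(1-\eta/2)^2$ and continuous $e^{-\eta}$ contraction, and (iii) the Riemann-sum discretization error. Altogether $\sigma_{K_T}^2 \leq e(t_0+\eta)$.

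The main obstacle lies in the regime where $a_k$ switches sign, which can occur only at the tail end of the iteration when $h(s_k)<2\eta/(2+\eta)$. Here the clean telescope does not bound $|a_k|$, and one must fall back on $|a_k|\leq \eta/h(s_k)$; the parameter constraint $t_0+\eta\leq 1$ is exactly what prevents this tail regime from contributing more than $O(\eta)$ to the variance, allowing the extra $e\eta$ slack in the final bound to absorb it rather than the $t_0$ budget.
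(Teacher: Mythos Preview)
Your outline diverges from the paper's argument and, as written, has a real gap at the ``tightening'' step. After the variance recursion $\sigma_{k+1}^2=a_k^2\sigma_k^2+\eta$, the paper does \emph{not} use your telescoping inequality $a_k h(s_k)\le h(s_{k+1})$. Instead it writes $a_k=1-\alpha(k\eta)\eta$ with $\alpha(t)=\tfrac{1}{h(T-t)}-\tfrac12$, invokes the universal bound $1-x\le e^{-x}$ to get $\prod_k a_k\le\exp\!\bigl(-\sum_k\alpha(k\eta)\eta\bigr)$, and then uses monotonicity of $\alpha$ to dominate the Riemann sum by $\int_{(k_1-1)\eta}^{k_2\eta}\alpha$. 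This produces $\prod_{k=k_1}^{k_2}a_k\le\psi((k_1-1)\eta)/\psi(k_2\eta)$ with $\psi(t)=\tfrac{e^T-1}{e^T-e^t}\,e^{t/2}$; the closed form $\int_0^t\psi^2=(e^T-1)^2\bigl[\tfrac{1}{e^T-e^t}-\tfrac{1}{e^T-1}\bigr]$ then gives $V_{K_T}\le e^{t_0+\eta}-1\le e(t_0+\eta)$ directly, with the $+\eta$ arising from the single index shift in the monotonicity step. No sign analysis, no Riemann-sum error tracking, and no separate treatment of a ``tail regime'' are needed.

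The problem with your route is that the bare telescope only yields $\prod_{k=j}^{K_T-1}a_k^2\le h^2(t_0)/h^2(s_j)$, hence
\[
\sigma_{K_T}^2\;\le\;\frac{h^2(t_0)}{h^2(T)}+\eta\,h^2(t_0)\sum_{j=1}^{K_T}\frac{1}{h^2(s_j)}\;\approx\;\frac{h^2(t_0)}{h^2(T)}+h^2(t_0)\!\int_{t_0}^{T}\frac{\diff u}{h^2(u)}.
\]
The integral $\int_{t_0}^T h^{-2}$ contains a term of order $T$ (from the region $u\ge 1$ where $h(u)\asymp 1$), so the right-hand side carries a contribution of order $t_0^2 T$, which is \emph{not} bounded by $e(t_0+\eta)$ under the sole hypothesis $T>1$. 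Your fix is to ``tighten the telescope by the factor $e^{-\eta}$'' to recover the weighted integral $h^2(t_0)\int e^{-(u-t_0)}/h^2(u)\,\diff u=e^{t_0}-1$, but the justification is not valid as stated: from $a_k\le h(s_{k+1})/h(s_k)$ and $a_k\le e^{-\eta/2}$ you can conclude $a_k^2\le e^{-\eta/2}\,h(s_{k+1})/h(s_k)$, which leads to $h(t_0)\!\int e^{-(u-t_0)/2}/h(u)\,\diff u\asymp t_0\log(1/t_0)$, still too large. Obtaining the full factor $e^{-\eta}\,(h(s_{k+1})/h(s_k))^2$ on $a_k^2$ would require the sharper pointwise bound $a_k\le e^{-\eta/2}\,h(s_{k+1})/h(s_k)$, which you have not established (and which fails near the sign-change threshold). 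In short, the exponential damping that kills the $T$-dependence does not fall out of your telescope; it is exactly what the paper's $\psi$-based bound supplies for free.
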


\subsection{Proof of Theorem~\ref{thm:distro_estimation}}
\begin{proof}
In Lemma~\ref{lem:main}, we replace $\epsilon$ to be $\epsilon(T-t_0)$ and we set $\CE = \beta d$ by Lemma~\ref{lem:score2bound}, we have 
    \[
        \|V V^\top-  AA^\top\|_{\rm F}^2 = \epsilon \cdot \mathcal{O}\Big( \frac{t_0 T }{\Czero}\Big).  
    \]
Substituting the score estimation error in Theorem \ref{thm:score_estimation} and $T = \cO(\log n)$ into the bound above, we deduce
\begin{align*}
\|V V^\top-  AA^\top\|_{\rm F}^2 = \tilde{\cO}\left(\frac{1}{c_0} n^{-\frac{2-2\delta(n)}{d+5}} \log^{7/2} n\right).
\end{align*}
The first item in Theorem \ref{thm:distro_estimation} is proved.

Lemma~\ref{lem:score2bound} also implies
    \[
        \myEE \| U ^\top \fb_{\btheta}(U \zb, t) -  \gb(\zb, t) \|_2^2 \lesssim \epsilon_{latent}(T-t_0),
    \]
    where 
    \[
        \epsilon_{latent}  = \epsilon  \cdot \mathcal{O}\Big( \Big[\frac{t_0}{\Czero}\Big((T-\log t_0) d\cdot \gamma^2 + d \beta \Big)  + \frac{\gamma^2\cdot\Ctwo}{\Czero} \Big] \Big)  .
    \]
Some algebra yields
    \[
        \myEE \| U ^\top \fb_{\btheta}(U \zb, t) -  \gb(\zb, t) \|_2^2      = \int_{t_0}^T \EE_{\zb \sim P_t^{\sf LD}} \Big \| \frac{U ^\top \fb_{\btheta}(U \zb, t) -\zb }{h(t)} - \nabla \log \ptz (\zb) \Big \|_2^2  \diff t \leq \epsilon_{latent} (T - t_0).
    \]
    Therefore, by Lemma~\ref{lem:de}, we obtain
    \begin{align*}
    \TV({P}^{\sf LD}_{t_0}, \hat{P}^{\sf LD, dis}_{t_0}) & \lesssim \sqrt{\epsilon_{latent}(T-t_0)}  + \sqrt{\KL(P_z||{\sf N}(\boldsymbol{0},I_d))}\exp(-T) + \sqrt{\epsilon_{dis}(T-t_0)} \\
    & = \tilde{\cO}\left(\frac{1}{\sqrt{t_0 c_0}} n^{-\frac{1-\delta(n)}{d+5}} \log^2 n + \frac{1}{n} + \eta \frac{\sqrt{d\log d}}{t_0^2} + \sqrt{\eta} \sqrt{d} \right).
    \end{align*}
    With $\eta \lesssim \frac{t_0^2}{\sqrt{d\log d}} n^{-\frac{2-2\delta(n)}{d+5}}$, we deduce
    \begin{align*}
    \TV({P}^{\sf LD}_{t_0}, \hat{P}^{\sf LD, dis}_{t_0}) = \tilde{\cO}\left(\frac{1}{\sqrt{c_0t_0}} n^{-\frac{1-\delta(n)}{d+5}} \log^2 n \right).
    \end{align*}
    By definition, $\hat{P}^{\sf LD, dis}_{t_0} = (UV)^\top_{\sharp} \hat{P}^{\sf dis}_{t_0}$. The total variation distance bound in item 2 is proved. The Wasserstein-2 distance ${\sf W}_2(P_{t_0}^{\sf LD}, P_z)$ is bounded using the same technique as \citet[Lemma 16]{chen2022sampling}. Although they require bounded support, the proof only relies on finite second moment of $P_z$, which is verified under our Assumption \ref{assumption:pz}. As a result, we have
    \begin{align*}
    {\sf W}_2(P_{t_0}^{\sf LD}, P_z) = \cO\left(\sqrt{d t_0}\right).
    \end{align*}

    Lastly, in item 3, due to our score decomposition, the orthogonal process follows that in Lemma~\ref{lem:ortho:process:dis}. Invoking the marginal distribution at time $T - t_0$ and observing $\eta \ll t_0$, we obtain the desired result.
\end{proof}

\section{Omitted proofs in Section~\ref{append:proof:de}}

\subsection{Proof of Lemma~\ref{lem:main}}
We introduce several lemmas in preparation for the proof of Lemma~\ref{lem:main}.
\begin{lemma}\label{lem:subspace}
    Let $X, Y$ be random variables, $A, V \in \RR^{D \times d}$ have orthonormal columns. Then $ \myEE \norm{V X -  A Y }^2_2 \leq \epsilon$ implies
    \[
        \| (I_D - VV^\top) A \|_{\rm F}^2 \leq \epsilon_V = \frac{1}{\lambda_{\min}} \epsilon, 
    \]
    where $\lambda_{\min}$ is the smallest eigenvalue of $\myEE [YY^\top]$.
\end{lemma}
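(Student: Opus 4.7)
The plan is to project the error $VX - AY$ onto the orthogonal complement of the column span of $V$, isolating a term involving only $Y$ and $A$, and then convert a quadratic bound into a Frobenius-norm bound via the spectral lower bound on $\myEE[YY^\top]$. Concretely, first I would multiply $VX - AY$ by $I_D - VV^\top$ and use $(I_D - VV^\top)V = 0$ to obtain the identity $(I_D - VV^\top)(VX - AY) = -(I_D - VV^\top) AY$. Because $I_D - VV^\top$ is an orthogonal projection, and hence a contraction, this yields the pointwise inequality $\norm{(I_D - VV^\top)AY}_2^2 \leq \norm{VX - AY}_2^2$. Applying $\myEE$ (which is a nonnegative-weighted integral of expectations, so it preserves inequalities and commutes with trace) gives $\myEE\, \norm{(I_D - VV^\top)AY}_2^2 \leq \epsilon$.

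Next I would expand the projected quantity as a trace: using idempotency of $I_D - VV^\top$, $\norm{(I_D - VV^\top)AY}_2^2 = \mathrm{tr}\!\left((I_D - VV^\top) A Y Y^\top A^\top \right)$. Swapping $\myEE$ with the trace (linearity) and cycling yields
\[
\mathrm{tr}\!\left(A^\top (I_D - VV^\top) A \cdot \myEE[YY^\top]\right) \leq \epsilon.
\]
Setting $M = A^\top (I_D - VV^\top) A \succeq 0$ and $\Sigma = \myEE[YY^\top] \succeq \lambda_{\min} I_d$, I would invoke the standard PSD trace inequality $\mathrm{tr}(M\Sigma) \geq \lambda_{\min}(\Sigma)\, \mathrm{tr}(M)$, which holds for any two PSD matrices. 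Finally, idempotency of $I_D - VV^\top$ gives $\mathrm{tr}(M) = \mathrm{tr}\!\left(A^\top (I_D - VV^\top)^\top (I_D - VV^\top) A\right) = \norm{(I_D - VV^\top) A}_{\rm F}^2$, so dividing through by $\lambda_{\min}$ produces the claimed bound $\norm{(I_D - VV^\top) A}_{\rm F}^2 \leq \epsilon/\lambda_{\min}$.

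There is no real obstacle here: the argument is a short algebraic manipulation once one spots the right projection. The only minor point to verify is that $\myEE$ behaves like an ordinary expectation for the purposes of linearity, nonnegativity, and interchange with the trace, which is immediate from its definition as a nonnegative-weighted integral of ordinary expectations. Consequently the proof requires no additional assumptions beyond the orthonormality of the columns of $A$ and $V$ and the definition $\lambda_{\min} = \lambda_{\min}\!\left(\myEE[YY^\top]\right)$.
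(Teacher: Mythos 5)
Your proof is correct and follows essentially the same route as the paper's: both reduce to the key inequality $\|VX - AY\|_2^2 \geq \|(I_D - VV^\top)AY\|_2^2$, then apply $\myEE$, pass to a trace, and lower-bound $\mathrm{tr}(M\,\myEE[YY^\top])$ by $\lambda_{\min}\,\mathrm{tr}(M)$. The only cosmetic difference is how the key inequality is established — you apply the projector $(I_D - VV^\top)$ and invoke that it is a contraction, whereas the paper expands $\|VX-AY\|_2^2$ as a Pythagorean sum of the components parallel and orthogonal to $\mathrm{Im}(V)$ and drops the first term; these are the same observation.
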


\begin{proof}[proof of Lemma~\ref{lem:subspace}]
    Notice that the best $L^2$ approximation in the subspace $Im(V)$ to $AY$ is $V^\top AY$, which can be verified through the following calculation:
    \begin{align*}
        \|VX- AY\|^2_2 &= \| VX -VV^\top AY \|^2_2   + \| VV^\top AY - AY \|^2_2 + 2 \langle VX -VV^\top AY,   VV^\top AY - AY\rangle \\
        &=  \| VX -VV^\top AY \|^2_2   + \| VV^\top AY - AY \|^2_2 + 2 \langle X -V^\top AY,   V^\top (VV^\top AY - AY)\rangle  \\
        &=  \| VX -VV^\top AY \|^2_2   + \| VV^\top AY - AY \|^2_2.
    \end{align*}
    Therefore, we have 
    \[
        \|VX- AY\|^2_2 \geq \| VV^\top AY - AY \|^2_2 = \| (I_D - VV^\top) A Y\|^2_2.
    \]
    Then 
    \begin{align*}
        \epsilon &\geq \myEE \|VX- AY\|^2_2 \\
        &\geq \myEE \| (I_D - VV^\top) A Y\|^2_2\\
        &=  \tr  \left[ A^\top (I_D - VV^\top) (I_D - VV^\top) A \cdot  \myEE YY^\top \right]\\
        &\geq \lambda_{\min}  \tr  \left[  A^\top (I_D - VV^\top) (I_D - VV^\top) A \right]\\
        &\geq \lambda_{\min} \|(I_D - VV^\top) A\|_{\rm F}^2.
    \end{align*}
\end{proof}

\begin{lemma}\label{lem:main1}
    Assume that we have 
    \[ 
        \myEE \| V \fb_{\btheta}(V^\top \xb, t) - A \gb (A^\top \xb, t) \|^2_2 \leq \epsilon.
\]
    There exists an orthonormal matrix $U \in \RR^{d\times d}$, such that:
    \begin{align*} 
        &\quad \myEE \| U ^\top \fb_{\btheta}(U \zb, t) -  \gb(\zb, t) \|_2^2 \\
        & \lesssim \epsilon +  \frac{\epsilon}{\lambda_{\min}} \cdot \myEE \|\gb (\zb,t)\|_2^2 + \frac{\epsilon}{\lambda_{\min}} \myEE \|\zb\|_2^2\cdot \max_t \|\fb_{\btheta}(\cdot,t)\|^2_{Lip}.
    \end{align*}
    where  $\lambda_{\min} = \lambda_{\min} (\myEE [\gb(\zb, t)\gb(\zb, t)^\top])$.
\end{lemma}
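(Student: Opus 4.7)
The plan is to first use the hypothesis to pin down an orthogonal rotation $U\in\RR^{d\times d}$ for which $V^\top A$ is close to $U$, and then to decompose the target quantity into three pieces controlled by $\epsilon$, the subspace mismatch $\epsilon/\lambda_{\min}$, and the Lipschitz modulus of $\fb_{\btheta}$, respectively.

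Since $\|A^\top\|_{\rm op}\le 1$, the hypothesis descends to $\myEE\|A^\top V\fb_{\btheta}(V^\top\xb,t)-\gb(\zb,t)\|_2^2\le\epsilon$. Applying Lemma~\ref{lem:subspace} with $X=\fb_{\btheta}(V^\top\xb,t)$ and $Y=\gb(A^\top\xb,t)$ yields $\|(I_D-VV^\top)A\|_F^2\le\epsilon/\lambda_{\min}$, and by the Pythagorean identity $\|V^\top A\|_F^2=d-\|(I_D-VV^\top)A\|_F^2=d-\|(I_D-AA^\top)V\|_F^2$, symmetrically $\|(I_D-AA^\top)V\|_F^2\le\epsilon/\lambda_{\min}$. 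I then take $U=O_1 O_2^\top$ to be the orthogonal polar factor of the SVD $V^\top A=O_1\Sigma O_2^\top$: since the singular values $\sigma_i\in[0,1]$ satisfy $\sum_i(1-\sigma_i^2)\le\epsilon/\lambda_{\min}$ and $(1-\sigma_i)^2\le 1-\sigma_i^2$, I obtain $\|V^\top A-U\|_F^2\le\epsilon/\lambda_{\min}$, whence $\|A^\top V-U^\top\|_{\rm op}^2\le\epsilon/\lambda_{\min}$ and $W^\top W=A^\top V V^\top A\succeq(1-\epsilon/\lambda_{\min})I_d$.

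Next I split
\begin{align*}
U^\top\fb_{\btheta}(U\zb,t)-\gb(\zb,t)&=\bigl[A^\top V\fb_{\btheta}(V^\top\xb,t)-\gb(\zb,t)\bigr]\\
&\quad+(U^\top-A^\top V)\fb_{\btheta}(V^\top\xb,t)\\
&\quad+U^\top\bigl[\fb_{\btheta}(U\zb,t)-\fb_{\btheta}(V^\top\xb,t)\bigr],
\end{align*}
and bound each piece. The first is directly $\le\epsilon$ in $\myEE$. For the second, the $\epsilon/\lambda_{\min}$ bound on $\|U^\top-A^\top V\|_{\rm op}^2$ combines with $\myEE\|\fb_{\btheta}(V^\top\xb,t)\|_2^2\lesssim\epsilon+\myEE\|\gb(\zb,t)\|_2^2$, where the latter follows from $\|\fb_{\btheta}\|_2^2\le 2\|A^\top V\fb_{\btheta}\|_2^2$ (a consequence of $W^\top W\succeq(1-\epsilon/\lambda_{\min})I_d$ in the small-$\epsilon$ regime) together with the triangle inequality against the first piece. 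For the third, Lipschitzness of $\fb_{\btheta}(\cdot,t)$ reduces matters to bounding $\myEE\|V^\top\xb-U\zb\|_2^2$; I split $V^\top\xb-U\zb=(V^\top A-U)\zb+V^\top(I_D-AA^\top)\xb$, and the on-support piece immediately produces a $\tfrac{\epsilon}{\lambda_{\min}}\myEE\|\zb\|_2^2$ contribution.

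The main obstacle is the orthogonal-noise piece $\myEE\|V^\top(I_D-AA^\top)\xb\|_2^2$, which ostensibly introduces a fourth term absent from the claimed bound. The structure of the forward SDE resolves this: $\xb\sim P_t$ decomposes as $\alpha(t)A\zb_0+\sqrt{h(t)}\zeta$ with $\zeta\sim{\sf N}(\boldsymbol 0,I_D)$, so a trace computation gives $\EE_{P_t}\|V^\top(I_D-AA^\top)\xb\|_2^2=h(t)\|(I_D-AA^\top)V\|_F^2\le h(t)\cdot\epsilon/\lambda_{\min}$. After the $1/h^2(t)$ weighting inside $\myEE$, this becomes $\tfrac{\epsilon}{\lambda_{\min}}\int_{t_0}^T dt/h(t)$, which is dominated by $\tfrac{1}{d}\myEE\|\zb\|_2^2$ because $\EE_{P_t}\|A^\top\xb\|_2^2\ge h(t)d$ at every $t$. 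Hence this piece is absorbed into the $\tfrac{\epsilon}{\lambda_{\min}}\myEE\|\zb\|_2^2$ contribution already present, and collecting the three pieces (using $\epsilon^2/\lambda_{\min}\lesssim\epsilon$ in the regime of interest) delivers the claimed inequality.
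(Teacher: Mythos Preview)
Your proposal is correct and follows the same overall strategy as the paper: invoke Lemma~\ref{lem:subspace} to bound $\|(I_D-VV^\top)A\|_F^2\le\epsilon/\lambda_{\min}$, extract the orthogonal polar factor $U$ of $V^\top A$ via SVD (the paper packages this as Lemma~\ref{lem:ortho}), and then perform a three-term split controlled respectively by $\epsilon$, the subspace mismatch, and the Lipschitz constant of $\fb_\btheta$.

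The split itself is organized differently. The paper writes $\fb_\btheta(U\zb,t)-U\gb(\zb,t)$ and interpolates through $\fb_\btheta(V^\top A\zb,t)$ and $V^\top A\gb(\zb,t)$, so the Lipschitz step only compares $U\zb$ against $V^\top A\zb$ and never sees the full $V^\top\xb$. You instead interpolate through $\fb_\btheta(V^\top\xb,t)$ and $A^\top V\fb_\btheta(V^\top\xb,t)$; this lets you apply the hypothesis verbatim (it is stated for $V^\top\xb$, not $V^\top A\zb$), at the cost of having to control the orthogonal noise $V^\top(I_D-AA^\top)\xb$. Your treatment of that piece---writing $\xb=\alpha(t)A\zb_0+\sqrt{h(t)}\zeta$ so that $\EE_{P_t}\|V^\top(I_D-AA^\top)\xb\|_2^2=h(t)\|(I_D-AA^\top)V\|_F^2$, and then absorbing $\int_{t_0}^T dt/h(t)$ into $\tfrac{1}{d}\myEE\|\zb\|_2^2$ via $\EE_{P_t}\|\zb\|_2^2\ge h(t)d$---is a clean way to make this rigorous and in fact makes explicit a step the paper's proof handles only implicitly. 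One small correction: for the step $\|\fb_\btheta\|_2^2\lesssim\|A^\top V\fb_\btheta\|_2^2$ the matrix you need to lower-bound is $V^\top AA^\top V$, not $A^\top VV^\top A$; both satisfy the same estimate by Lemma~\ref{lem:ortho}(a), so the argument is unaffected.
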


\begin{proof}[Proof of Lemma~\ref{lem:main1}]
Since 
\[
    \myEE \| V \fb_{\btheta}(V^\top \xb, t) - A \gb(A^\top \xb, t) \|^2_2 \leq \epsilon,
\]
by Lemma~\ref{lem:subspace}, we have 
\[
    \| (I_D - VV^\top) A \|_{\rm F}^2 \leq \epsilon_V \overset{def}{=} \frac{1}{\lambda_{\min}} \epsilon, 
\]
where $\lambda_{\min}$ is the smallest eigenvalue of $\myEE [\gb( \zb, t)\gb(\zb, t)^\top]$.

Then by Lemma~\ref{lem:ortho}, we know that there exists an orthonormal matrix $U \in \RR^{d \times d}$, such that 
\[
    \| U - V^\top A \|_{\rm F}^2 \leq 2 \epsilon_V.
\]

We have the following error decomposition
\begin{align*}
\myEE \| U^\top \fb_{\btheta}(U \zb, t) -  \gb(\zb, t) \|_2^2 &= \myEE \|   \fb_{\btheta}(U \zb, t) - U \gb(\zb, t) \|_2^2  \\
& \lesssim \myEE \| \fb_{\btheta}(U \zb, t) - \fb_{\btheta}(UU^\top V^\top A \zb, t)  \|_2^2 \\
&\quad + \myEE \|  \fb_{\btheta}(UU^\top V^\top A \zb, t) - V^\top A \gb(A^\top \xb , t ) \|_2^2 \\
&\quad + \myEE \|   V^\top A \gb(A^\top \xb , t ) -   U\gb(\zb, t) \|_2^2.
\end{align*}
Next we provide upper bounds on the three terms.
\begin{align*}
    \myEE \| \fb_{\btheta}(U \zb, t) - \fb_{\btheta}(UU^\top V^\top A \zb, t)  \|_2^2 & \leq \myEE  \|\fb_{\btheta}(\cdot, t)\|_{Lip}^2\cdot \|U (I_d - U^\top V^\top A) \zb \|_2^2 \\
    &\leq \max_{t}  \|\fb_{\btheta}(\cdot, t)\|_{Lip}^2 \cdot\myEE \|U (I_d - U^\top V^\top A) \zb \|_2^2  \\
    &\leq \max_{t}  \|\fb_{\btheta}(\cdot, t)\|_{Lip}^2\cdot \|I_d - U^\top V^\top A\|_2^2\cdot \myEE \|   \zb \|_2^2  \\
    &= \max_{t}  \|\fb_{\btheta}(\cdot, t)\|_{Lip}^2 \cdot\|U -  V^\top A\|_2^2 \cdot\myEE \|   \zb \|_2^2  \\
    &\leq 2 \max_{t}  \|\fb_{\btheta}(\cdot, t)\|_{Lip}^2 \cdot\myEE \| \zb \|_2^2 \cdot\epsilon_V.
\end{align*}
\begin{align*}
    \myEE \|  \fb_{\btheta}(UU^\top V^\top A \zb, t) - V^\top A \gb(A^\top \xb , t ) \|_2^2 &= \myEE \|  \fb_{\btheta}( V^\top A \zb, t) - V^\top A \gb(A^\top \xb , t ) \|_2^2 \\
    &\leq  \myEE \| V  \fb_{\btheta}( V^\top A \zb, t) - A \gb(A^\top \xb , t ) \|_2^2 \\
    &\leq \epsilon.
\end{align*}
\begin{align*}
    \myEE \|   V^\top A \gb(A^\top \xb , t ) -   U\gb(\zb, t) \|_2^2 &\leq \|V^\top A - U\|_2^2 \cdot \myEE \| \gb(\zb, t) \|_2^2\\
    &\leq 2 \epsilon_V \cdot \myEE \| \gb(\zb, t) \|_2^2.
\end{align*}
\end{proof}

\begin{proof}[Proof of Lemma~\ref{lem:main}]
The proof is dedicated to compute the problem constants in Lemma~\ref{lem:main1}.

Denote $\EE_t \phi(\xb) = \EE_{\xb\sim P_t} \phi(\xb)$ and $\EE_t \phi(\zb) = \EE_{\xb\sim P_t, \zb=A^\top \xb} \phi( \zb)$. Specifically, $\EE_0 \phi(\zb)  = \EE_{\zb \sim P_z} \phi(\zb)$.

\paragraph{Properties of $h(t)$.}

We set $g(t) = 1$. Then $h(t) = 1 - \exp(-t)$, $h^{-1}(w) = -\log(1-w)$. And we have 
\[
    \int \frac{1-h(t)}{h^2(t)} \mathrm{d} t = \frac{1}{1-\exp(t)} + { \rm Constant}. 
\]
\[
    \int \frac{1}{h(t)} \mathrm{d} t = \log (  \exp(t) -1) +  { \rm Constant}. 
\]
\[
    \int \frac{1}{1-h(t)} \mathrm{d} t =  \exp(t)  +  { \rm Constant}. 
\]
We have the following bounds 
\[
    \int_{t_1}^{t_2} \frac{1-h(t)}{h^2(t)} \mathrm{d} t \leq \frac{1}{t_1}  .
\]
\[
    \int_{t_1}^{t_2} \frac{1}{h(t)} \mathrm{d} t  \leq t_2 -\log t_1 .
\]
\[
    \int_{t_1}^{t_2}  \frac{1}{1-h(t)} \mathrm{d} t   \leq \exp(t_2) - t_1 - 1.
\] 

\paragraph{Upper bounds for $\myEE \|\zb\|^2_2$.}
\begin{align*}
    \myEE \|\zb\|^2_2 &= \int_{t_0}^T \frac{1}{h^2(t)} \EE_t \| \zb\|^2_2 \mathrm{d}t \\
    &=\int_{t_0}^T \frac{1}{h^2(t)}  [(1-h(t))\EE_0\|\zb\|^2_2 + h(t)d ] \mathrm{d}t\\
    &=\int_{t_0}^T \frac{1-h(t)}{h^2(t)}\mathrm{d}t \cdot  \EE_0\|\zb\|^2_2  + \int_{t_0}^T  \frac{1 }{h(t)}\mathrm{d}t \cdot    d \\
    &\leq \frac{1}{t_0} \EE_0\|\zb\|^2 + (T-\log t_0)\cdot d\\
    &\leq \frac{1}{t_0} \Ctwo + (T-\log t_0)\cdot d.
\end{align*}

\paragraph{Upper bounds for $\myEE \|\gb(\zb,t)\|^2_2 $.}
\[
    \myEE \|g(\zb,t)\|^2_2 \leq 2 \myEE h(t)^2 \| \nabla \log \ptz(\zb)\|^2_2 + 2 \myEE \|\zb\|^2_2. 
\]
By Lemma~\ref{lem:score}, we have 
\begin{align*}
\myEE h(t)^2 \| \nabla \log \ptz(\zb)\|^2_2 &= \int_{t_0}^T \EE_t \| \nabla \log \ptz(\zb)\|^2_2\mathrm{d}t  \\
&\leq \int_{t_0}^T  \min \left\{\frac{1}{1-h(t)}\EE_0 \| \nabla \log p_z(\zb)\|^2_2 , \frac{1}{h(t)}d \right\} \mathrm{d}t.
\end{align*}
We see that when $t$ increases, $1/(1-h(t))$ increases and $1/h(t)$ decreases. By setting 
\[
     \frac{1}{1-h(t^*)}\EE_0 \| \nabla \log p_z(\zb)\|^2_2  = \frac{1}{h(t^*)}d
\]
we have 
\[
    t^* =    h^{-1}\left( \frac{d}{d+\EE_0 \| \nabla_{\zb} \log p_z(\zb)\|^2_2 }  \right).
\]
Notice that we have chosen $t_0 \leq \log(d/\CE   + 1 )$, where $\EE_0 \| \nabla \log p_z(\zb)\|^2_2\leq \CE$. Then we have 
\[
    t_0 \leq   \log(d/\CE   + 1 ) \leq  \log(d/\EE_0 \| \nabla \log p_z(\zb)\|^2_2    + 1 )  = t^*.
\]
Therefore
\begin{align*}
    \myEE h(t)^2 \| \nabla \log \ptz(\zb)\|^2_2 & \leq \int_{t_0}^{t^* \wedge T} \frac{1}{1-h(t)}\mathrm{d}t \cdot   \EE_0 \| \nabla \log p_z(\zb)\|^2_2  +  \int_{t^* \wedge T}^{T} \frac{1}{h(t)}\mathrm{d}t \cdot  d \\
    &\leq \exp(t^*)\cdot \EE_0 \| \nabla \log p_z(\zb)\|^2_2  + (T - \log (t^* \wedge T))\cdot d \\
    &\leq (d+\EE_0 \| \nabla \log p_z(\zb)\|^2_2 ) +  d(T-\log t_0) \\
    &\lesssim \EE_0 \| \nabla \log p_z(\zb)\|^2_2 + d(T-\log t_0) .
\end{align*}

\paragraph{Lower bounds for $\lambda_{\min} (\myEE \gb(\zb,t)\gb(\zb,t)^\top) $.}
By Lemma~\ref{lem:score}, we have 
\begin{align*}
\EE_t \gb(\zb,t)\gb(\zb,t)^\top &= \EE_t \zb\zb ^\top +  h(t)^2 \EE_t \nabla \log \ptz(\zb)\nabla \log \ptz(\zb)^\top  \\
& \quad + h(t)  \EE_t \nabla \log \ptz(\zb) \zb  ^\top  +  h(t)  \EE_t  \zb \nabla \log \ptz(\zb)  ^\top \\
&= (1-h(t)) \EE_0 \zb \zb^\top - h(t) I  + h^2(t) \EE_t \nabla \log \ptz(\zb)\nabla \log \ptz(\zb)^\top \\
&\succeq  (1-h(t)) \EE_0 \zb \zb^\top - h(t) I.
\end{align*}
Denote $\lambda_0 = \lambda_{\min} (\EE_0 \zb \zb^\top) $, then we have for any $t_0 \leq T^* \leq T$,
\begin{align*}
\lambda_{\min} (\myEE \gb(\zb,t)\gb(\zb,t)^\top) & \geq \int_{t_0}^{T^*} \Big( \frac{1-h(t)}{h^2(t)} \lambda_0 - \frac{1}{h(t)} \Big) \mathrm{d}t .
\end{align*}
Taking maximum w.r.t. to $T^*$ and we get:
\[
    T^* = h^{-1}(\lambda_0 / (\lambda_0+1)).
\]
We need to verify that the above $T^*$ lies in $[t_0,T]$. Notice that we have $d\lambda_0 \leq \EE_{0} \|\zb\|^2  \leq \Ctwo$. By the assumptions that $t_0 \leq \log(1+\Czero)$ and $T \geq \log( \Ctwo / d+1) $, we have 
\[
    T \geq \log( \Ctwo / d+1) \geq  \log(1+\lambda_0)   = T^* ,
\]
and
\[
    t_0 \leq \log(1+\Czero) \leq  \log(1+\lambda_0)  = T^*.
\]  

Therefore
\begin{align*}
    \lambda_{\min} (\myEE \gb(\zb,t)\gb(\zb,t)^\top) & \geq \int_{t_0}^{T^*} \Big( \frac{1-h(t)}{h^2(t)} \lambda_0 - \frac{1}{h(t)} \Big) \mathrm{d} t \\
    & \geq \Big [\frac{1}{1-\exp(T^*)} - \frac{1}{1 - \exp(t_0) } \Big ]\lambda_0 - (T^* - \log t_0) \\
    & =  \frac{1}{\exp(t_0)-1} \lambda_0 -1  - \log(1+\lambda_0) + \log t_0 \\
    & \overset{(i)}{\geq} \frac{\lambda_0}{e}\frac{1}{t_0} -1  - \log(1+\lambda_0) + \log t_0 \\
    &  \overset{(ii)}{\geq} \frac{1}{2e}\frac{\lambda_0}{t_0} \\
    & \geq \frac{1}{2e}\frac{\Czero}{t_0},
\end{align*}
where we use $\exp(t_0) - 1 \leq et_0$ for $t_0\leq 1$ in $(i)$.

Then by Lemma~\ref{lem:subspace} and Lemma~\ref{lem:ortho} we know that 
\[
    \|VV^\top - AA^\top \|_{\rm F}^2 \leq \epsilon \cdot \mathcal{O} \big( \frac{t_0}{\Czero} \big)    
\]

Next we show that $(ii)$ holds. Since we have chosen $t_0 \leq \frac{\Czero}{4e\log(4e)}$, one can show that 
\[
    \frac{1}{t_0} \geq  \frac{4e}{\Czero} \log \Big( \frac{4e(1+\Czero)}{\Czero}\Big).
\]  
Then 
\begin{equation}
    \frac{1}{t_0} \geq \frac{4e}{\Czero} \log \Big( \frac{4e(1+\Czero)}{\Czero}\Big) \geq \frac{4e}{\lambda_0} \log \Big( \frac{4e(1+\lambda_0)}{\lambda_0}\Big). \label{eq:c1}
\end{equation}
By $\log(x_2/x_1) \leq x_2/x_1 - 1$, we have  
\[
    \log (\frac{e(1+\lambda_0)}{t_0})   - \log \frac{4e^2(1+\lambda_0)}{\lambda_0} \leq \frac{\lambda_0}{4e t_0} -1 .
\]
Then 
\begin{align*}
    1  + \log(1+\lambda_0) - \log t_0  = \log (\frac{e(1+\lambda_0)}{t_0})  &\leq \log \frac{4e^2(1+\lambda_0)}{\lambda_0} + \frac{\lambda_0}{4e t_0} -1  \\
    &= \log \frac{4e (1+\lambda_0)}{\lambda_0} + \frac{\lambda_0}{4e t_0}  \\
    &\leq   \frac{\lambda_0}{4et_0} + \frac{\lambda_0}{4e t_0} \tag{By \eqref{eq:c1}}  \\
    &=   \frac{\lambda_0}{2et_0}.
\end{align*}
By substituting the above bounds into Lemma~\ref{lem:main1}, we have 
\begin{align*} 
    &\quad \myEE \| U ^\top \fb_{\btheta}(U \zb, t) -  \gb(\zb, t) \|_2^2 \\
    & \lesssim \epsilon +  \frac{\epsilon}{\lambda_{\min}} \cdot \myEE \|\gb(\zb,t)\|_2^2 + \frac{\epsilon}{\lambda_{\min}} \myEE \|\zb\|_2^2\cdot \max_t \|\fb_{\btheta}(\cdot,t)\|^2_{Lip} \\
    & \lesssim \epsilon \cdot \Big[ 1+ \frac{t_0}{\Czero}\Big((T-\log t_0) d\cdot \max_t \|\fb_{\btheta}(\cdot,t)\|^2_{Lip} + \CE \Big) + \frac{\max_t \|\fb_{\btheta}(\cdot,t)\|^2_{Lip}\cdot\Ctwo}{\Czero} \Big],
\end{align*} 
where we assume $\max_t \|\fb_{\btheta}(\cdot,t)\|^2_{Lip} = \Omega(1)$.
\end{proof}

\subsubsection{Evolution of score function}
In the subsection we analyze the property of $ \nabla \log \ptz (\zb)$ in terms of the assumptions made on $ \nabla \log p_z(\zb)$. Specifically, at time $t$, the distribution  $\ptz (\zb)$ is given by 
\[
    \zb_0 \sim P_z, \quad \zb|\zb_0 \sim {\sf N}(\sqrt{1-h(t)}\zb_0, h(t) I_d).
\]
\begin{lemma}\label{lem:score}
We have the following holds 
\[
\int \ptz(\zb) \| \nabla \log \ptz(\zb) \|^2_2 \diff \zb  \leq \min \{\frac{1}{{1-h(t)}} 
     \int p_z(\zb_0)   \|\nabla \log p_z(\zb_0) \|^2_2  \mathrm{d} \zb_0, \  \frac{d}{h(t)}  \},
\]
and 
\[
    \int \ptz(\zb)  \nabla \log \ptz(\zb)  \zb ^\top  \diff \zb = -  I_d  .
\]
\end{lemma}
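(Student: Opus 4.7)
The plan is to express the score $\nabla\log \ptz$ as a conditional expectation in two different ways (each giving one half of the minimum), and then handle the matrix identity by a single integration by parts.

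Set $\alpha=\sqrt{1-h(t)}$ and $\sigma^2=h(t)$, so that $\phi_t(\zb\mid\zb_0)\propto\exp(-\|\zb-\alpha\zb_0\|_2^2/(2\sigma^2))$ and $\ptz(\zb)=\int\phi_t(\zb\mid\zb_0)p_z(\zb_0)\diff\zb_0$. For the $d/h(t)$ bound I will use the standard identity (Tweedie's formula)
\[
\nabla\log\ptz(\zb)=\EE\bigl[\nabla_{\zb}\log\phi_t(\zb\mid\zb_0)\bigm|\zb\bigr]=-\sigma^{-2}\,\EE[\zb-\alpha\zb_0\mid\zb],
\]
followed by conditional Jensen, $\|\EE[\zb-\alpha\zb_0\mid\zb]\|_2^2\le \EE[\|\zb-\alpha\zb_0\|_2^2\mid\zb]$, together with the fact that $\zb-\alpha\zb_0\sim{\sf N}(\boldsymbol{0},\sigma^2 I_d)$, so total expectation yields $\sigma^{-4}\cdot\sigma^2 d=d/h(t)$.

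For the $(1-h(t))^{-1}$ bound I would exploit the elementary identity $\nabla_{\zb}\phi_t(\zb\mid\zb_0)=-\alpha^{-1}\nabla_{\zb_0}\phi_t(\zb\mid\zb_0)$: differentiating $\ptz$ under the integral and then integrating by parts in $\zb_0$ transfers the gradient onto $p_z$, yielding
\[
\nabla\log\ptz(\zb)=\alpha^{-1}\,\EE\bigl[\nabla\log p_z(\zb_0)\bigm|\zb\bigr].
\]
Conditional Jensen plus the tower property then give $\EE\|\nabla\log\ptz(\zb)\|_2^2\le\alpha^{-2}\,\EE_{\zb_0\sim P_z}\|\nabla\log p_z(\zb_0)\|_2^2$, which is exactly the other half of the minimum.

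For the second identity, I note $\ptz\nabla\log\ptz=\nabla\ptz$, so the $(i,j)$-entry of the matrix is $\int\partial_i\ptz(\zb)\,z_j\diff\zb$, which by integration by parts in $z_i$ equals $-\int\ptz(\zb)\delta_{ij}\diff\zb=-\delta_{ij}$; hence the matrix is $-I_d$. The boundary terms in all integration-by-parts steps vanish thanks to Assumption~\ref{assumption:pz}: $p_z$ is sub-Gaussian, so $\ptz$ (a Gaussian convolution of $p_z$) decays faster than any polynomial, and $\nabla p_z$ inherits integrability from the density bound. These technical verifications, together with justifying differentiation under the integral sign, are the only real obstacle; they reduce to routine applications of dominated convergence given the sub-Gaussian tail assumption.
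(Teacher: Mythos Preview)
Your proposal is correct and mirrors the paper's proof: both obtain the two bounds by expressing $\nabla\log\ptz$ as a conditional expectation (once via Tweedie's formula, once via the identity $\nabla_{\zb}\phi_t=-\alpha^{-1}\nabla_{\zb_0}\phi_t$ followed by integration by parts in $\zb_0$) and then applying conditional Jensen plus the tower property. For the matrix identity the paper instead substitutes the Tweedie representation and evaluates $\iint p_z(\zb_0)\phi_t(\zb\mid\zb_0)\tfrac{-(\zb-\alpha\zb_0)}{h(t)}\zb^\top\diff\zb_0\diff\zb$ via the Gaussian covariance of $\zb-\alpha\zb_0$, whereas your direct integration by parts on $\int(\partial_i\ptz)z_j\diff\zb$ is slightly cleaner and does not rely on the Gaussian structure.
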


\begin{proof}
In the proof, we drop the superscript in $\ptz$ for simplicity and denote $p_t$ as the probability density function of $\zb$ at time $t$. We use $\phi_t(\zb|\zb_0)$ to represent the density function of $\zb|\zb_0 \sim {\sf N}(\sqrt{1-h(t)}\zb_0 , h(t)I_d)$. By Integration by parts, one can verify that 
\[
    \nabla \log p_t(\zb) = \frac{1}{\sqrt{1-h(t)}} \frac{\int p_0(\zb_0) \phi_t(\zb|\zb_0) \nabla \log p_0(\zb_0) \mathrm{d} \zb_0}{\int p_0(\zb_0) \phi_t(\zb|\zb_0)  \mathrm{d} \zb_0}.
\]
\begin{align*}
    \int p_t(\zb)  \| \nabla \log p_t(\zb) \|^2_2  \mathrm{d} \zb  &= \frac{1}{{1-h(t)}} 
    \int \frac{ \| \int p_0(\zb_0) \phi_t(\zb|\zb_0) \nabla \log p_0(\zb_0) \mathrm{d} \zb_0 \|^2_2}{ p_t( \zb)} \mathrm{d} \zb \\
    &= \frac{1}{{1-h(t)}} 
    \int \frac{ \| \EE_{p_t(\zb_0|\zb)}  [p_t(\zb) \nabla \log p_0(\zb_0)] \|^2_2}{ p_t( \zb)} \mathrm{d} \zb \\
    &\leq  \frac{1}{{1-h(t)}} 
    \int \frac{  \EE_{p_t(\zb_0|\zb)}  [p_t^2(\zb) \|\nabla \log p_0(\zb_0) \|^2_2 ]}{ p_t( \zb)} \mathrm{d} \zb \\
    &= \frac{1}{{1-h(t)}} 
    \iint {   p_t(\zb_0|\zb)  [p_t( \zb) \|\nabla \log p_0(\zb_0) \|^2_2 ]} \mathrm{d} \zb_0 \mathrm{d} \zb \\
    &= \frac{1}{{1-h(t)}} 
     \int p_0(\zb_0)   \|\nabla \log p_0(\zb_0) \|^2_2  \mathrm{d} \zb_0 .
\end{align*}
Further, we have
\begin{align*}
    \nabla \log p_t(\zb) &= \frac{\nabla p_t(\zb) }{p_t(\zb)} \\
    &=  \frac{\int  p_0(\zb_0) \nabla \phi_t(\zb|\zb_0) \diff \zb_0 }{p_t(\zb)} \\
    &=  \frac{\int  p_0(\zb_0)  \phi_t(\zb|\zb_0) \frac{-(\zb-\sqrt{1-h(t)}\zb_0)}{h(t)} \diff \zb_0 }{p_t(\zb)} .
\end{align*}
Therefore, 
\begin{align*}
    \int p_t(\zb) \| \nabla \log p_t(\zb) \|^2 \diff \zb  &= \int p_t(\zb)\frac{ \| \int  p_0(\zb_0)  \phi_t(\zb|\zb_0) \frac{-(\zb-\sqrt{1-h(t)}\zb_0)}{h(t)} \diff  \zb_0  \|^2_2}{p_t^2(\zb)}   \diff \zb \\
    &= \int p_t(\zb)\frac{ \| \int  p_t(\zb)  p_t(\zb_0|\zb) \frac{-(\zb-\sqrt{1-h(t)}\zb_0)}{h(t)} \diff  \zb_0  \|^2_2}{p_t^2(\zb)}   \diff \zb \\
    &= \int p_t(\zb){ \left\| \int  p_t(\zb_0|\zb) \frac{-(\zb-\sqrt{1-h(t)}\zb_0)}{h(t)} \diff  \zb_0  \right\|^2_2}  \diff \zb \\
    &\leq \int p_t(\zb){  \int  p_t(\zb_0|\zb)\left\| \frac{-(\zb-\sqrt{1-h(t)}\zb_0)}{h(t)} \right\|^2_2 \diff  \zb_0  }  \diff \zb \\
    &=  \int p_0(\zb_0)  \int    \phi_t(\zb|\zb_0)\left\| \frac{-(\zb-\sqrt{1-h(t)}\zb_0)}{h(t)} \right\|^2_2  \diff \zb \diff  \zb_0   \\
    &= \frac{d}{h(t)},
\end{align*}
where we use the fact that $\zb|\zb_0 \sim {\sf N}(\sqrt{1-h(t)}\zb_0, h(t) I_d)$ in the last equality.

To summarize, we have 
\[
\int p_t(\zb) \| \nabla \log p_t(\zb) \|^2_2 \diff \zb  \leq \min \{\frac{1}{{1-h(t)}} 
     \int p_0(\zb_0)   \|\nabla \log p_0(\zb_0) \|^2_2  \mathrm{d} \zb_0, \  \frac{d}{h(t)}  \}.
\]
This is tight for Gaussian.

Next we prove that 
\[
    \int p_t(\zb)  \nabla \log p_t(\zb)  \zb ^\top  \diff \zb = -  I_d.
\]
We have 
\begin{align*}
    \int p_t(\zb)  \nabla \log p_t(\zb)  \zb ^\top  \diff \zb & = \int p_t(\zb)      \frac{\int  p_0(\zb_0)  \phi_t(\zb|\zb_0) \frac{-(\zb-\sqrt{1-h(t)}\zb_0)}{h(t)} \diff \zb_0 }{p_t(\zb)} \zb ^\top  \diff \zb  \\
    & = \iint p_0(\zb_0)  \phi_t(\zb|\zb_0) \frac{-(\zb-\sqrt{1-h(t)}\zb_0)}{h(t)} \zb ^\top  \diff \zb_0   \diff \zb  \\
    & = - I_d.
\end{align*}
where we use the fact that $\zb|\zb_0 \sim {\sf N}(\sqrt{1-h(t)}\zb_0, h(t) I_d)$ in the last equality. 

\end{proof}

\subsubsection{Other lemmas.}

\begin{lemma}\label{lem:score2bound}
    Assume that $\nabla \log p_z(\zb)$ is $\beta$-Lipschitz. Then we have $\EE_{\zb \sim P_z} \| \nabla  \log p_z(\zb)\|^2_2 \leq d \beta$.
\end{lemma}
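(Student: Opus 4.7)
The plan is to use the Fisher information identity (second Bartlett identity) to relate the expected squared norm of the score to the negative expected Hessian of $\log p_z$, and then bound the latter using the Lipschitz assumption.

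First, I would observe that since $\nabla \log p_z$ is $\beta$-Lipschitz, its Jacobian $\nabla^2 \log p_z(\zb)$ exists almost everywhere and satisfies $\|\nabla^2 \log p_z(\zb)\|_{\rm op} \leq \beta$ pointwise (Rademacher's theorem). In particular $-\nabla^2 \log p_z(\zb) \preceq \beta I_d$ as a symmetric matrix inequality, so taking the trace and then expectation over $P_z$ yields
\begin{equation*}
-\EE_{\zb \sim P_z}\bigl[\tr\bigl(\nabla^2 \log p_z(\zb)\bigr)\bigr] \leq d\beta.
\end{equation*}

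Next I would establish the Fisher-information-type identity
\begin{equation*}
\EE_{\zb \sim P_z}\bigl[\nabla \log p_z(\zb)\,\nabla \log p_z(\zb)^\top\bigr] = -\EE_{\zb \sim P_z}\bigl[\nabla^2 \log p_z(\zb)\bigr].
\end{equation*}
This follows by componentwise integration by parts: writing $\partial_{ij}\log p_z = \partial_{ij} p_z/p_z - (\partial_i p_z)(\partial_j p_z)/p_z^2$ and integrating against $p_z$, the first term becomes $\int \partial_{ij} p_z\,d\zb$, which vanishes under the sub-Gaussian tail assumption on $p_z$ (Assumption~\ref{assumption:pz}) that ensures $p_z$ and its first derivatives decay sufficiently fast so the boundary terms vanish. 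The surviving term is exactly $\EE[(\partial_i \log p_z)(\partial_j \log p_z)]$.

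Taking the trace of both sides of the identity and combining with the previous display gives
\begin{equation*}
\EE_{\zb \sim P_z}\|\nabla \log p_z(\zb)\|_2^2 = \tr \EE_{\zb \sim P_z}\bigl[\nabla \log p_z(\zb)\,\nabla \log p_z(\zb)^\top\bigr] = -\tr \EE_{\zb \sim P_z}\bigl[\nabla^2 \log p_z(\zb)\bigr] \leq d\beta,
\end{equation*}
which is the desired bound. The only mildly subtle step is the integration-by-parts justification; this is routine given Assumption~\ref{assumption:pz}, which guarantees the required tail decay so that no boundary contributions appear.
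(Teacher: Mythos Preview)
Your proposal is correct and follows essentially the same approach as the paper: both establish the Fisher-information identity $\EE[\nabla\log p_z\,\nabla\log p_z^\top]=-\EE[\nabla^2\log p_z]$ via integration by parts, then take the trace and use the Lipschitz bound $\|\nabla^2\log p_z\|_{\rm op}\le\beta$ to conclude. The paper's proof is slightly terser (it does not spell out the boundary-term justification you supply), but the argument is the same.
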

\begin{proof}
We have 
    \begin{align*}
        \EE_{\zb \sim P_z}   \nabla  \log p_z(\zb)  \nabla  \log p_z(\zb)^\top  &= \int p_z(\zb)\nabla  \log p_z(\zb)  \nabla  \log p_z(\zb)^\top \diff \zb \\
        &= \int \nabla p_z(\zb)  \nabla  \log p_z(\zb)^\top \diff \zb \\
        &= - \int p_z(\zb) \nabla  \nabla  \log p_z(\zb)^\top \diff \zb. \tag{Integration by parts.} \\
    \end{align*}
    Therefore 
    \begin{align*}
        \EE_{\zb \sim P_z} \| \nabla  \log p_z(\zb)\|^2   &=   \tr \Big[ - \int p_z(\zb) \nabla  \nabla  \log p_z(\zb)^\top \diff \zb  \Big]  \leq \beta d.
    \end{align*} 
\end{proof}
    
\subsection{Proof of Lemma~\ref{lem:de}, Undiscretized Setting}

First, we show that the Novikov's condition holds
\begin{lemma}[Novikov's condition]\label{lem:novikov}
We have 
\[
\EE  \exp\Big(\frac{1}{2} \int_{0}^{T-t_0}  \|\sld_{{\btheta}, U} (\bZb_t,T-t) - \nabla \log \pld_{T-t} (\bZb_t) \|_2^2 \mathrm{d}t \Big) < \infty,
\]
where the expectation is taken over the ground-truth latent backward diffusion process $(\bZb_t)_t$.
\end{lemma}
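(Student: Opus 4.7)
The plan is to verify Novikov's condition by combining a deterministic polynomial growth bound on the integrand with a standard localization argument based on non-explosion of $(\bZb_t)$. The key ingredients are the $L^\infty$ bound $K$ on the network output $\fb_{\btheta}$, the $\beta$-Lipschitz property of the on-support score from Assumption~\ref{assumption:score_lip}, the sub-Gaussianity of $P_z$ from Assumption~\ref{assumption:pz}, and the equivalence in law between the backward process $(\bZb_t)_{t\in[0,T-t_0]}$ and the time-reversed forward latent process $(\Zb_{T-t})_{t\in[0,T-t_0]}$.

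First I would establish the pointwise bound
\[
\|\sld_{\btheta, U}(\bZb_t, T-t) - \nabla \log \pld_{T-t}(\bZb_t)\|_2^2 \leq \frac{C(1 + \|\bZb_t\|_2^2)}{h(t_0)^2}, \qquad t \in [0, T-t_0].
\]
The network side is immediate from $\|\fb_{\btheta}(\cdot, t)\|_2 \leq K$, giving $\|\sld_{\btheta, U}(\zb, t)\|_2 \leq (K + \|\zb\|_2)/h(t)$. The true-score side combines the closed form $\nabla \log \pld_t(\zb) = -h(t)^{-1}(\zb - \alpha(t)\EE[\Zb_0 \mid \Zb_t=\zb])$ with the $\beta$-Lipschitz property in Assumption~\ref{assumption:score_lip} and the finite second moment of $P_z$ implied by Assumption~\ref{assumption:pz}, yielding $\|\nabla \log \pld_t(\zb)\|_2 \leq (\|\zb\|_2 + c_z)/h(t)$ for a constant $c_z$ depending only on $P_z$. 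Since $h(T-t) \geq h(t_0)$ for $t \in [0, T-t_0]$, these two estimates combine into the displayed bound.

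The main obstacle is that a direct application of Jensen's inequality and Fubini to pass the exponential inside the time integral requires the coefficient $C(T-t_0)/(2h(t_0)^2)$ to sit below the sub-Gaussian threshold of $\bZb_t \stackrel{d}{=} \Zb_{T-t}$, which may fail when $t_0$ is small. I would circumvent this by a standard localization: set $\tau_n = \inf\{t \in [0, T-t_0]: \|\bZb_t\|_2 \geq n\}$ so that on $[0, \tau_n]$ the integrand is deterministically bounded by $C(1+n^2)/h(t_0)^2$; Novikov then holds trivially for the stopped process, so the associated stochastic exponential is a true martingale. Because the backward SDE has Lipschitz-in-space drift on the finite horizon $[0, T-t_0]$, standard moment estimates give $\sup_{t \in [0, T-t_0]} \|\bZb_t\|_2 < \infty$ almost surely, hence $\tau_n \to T-t_0$ a.s., and passing to the limit delivers the required finiteness of the exponential moment. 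This qualitative Novikov bound is precisely what legitimizes the application of Girsanov's theorem in the proof of Lemma~\ref{lem:de}.
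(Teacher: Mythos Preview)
Your localization argument does not prove the stated Novikov condition. You correctly bound the integrand by $C(1+n^2)/h(t_0)^2$ on $[0,\tau_n]$, so each stopped exponential moment $\EE\exp\big(\tfrac{1}{2}\int_0^{\tau_n}\|\cdot\|_2^2\,\mathrm{d}t\big)$ is finite; but the only available bound is $\exp\big(\tfrac{1}{2}(T-t_0)C(1+n^2)/h(t_0)^2\big)$, which diverges as $n\to\infty$. Monotone convergence then identifies the full exponential moment as the limit of the stopped ones, with no finiteness conclusion whatsoever. The non-explosion statement $\tau_n\to T-t_0$ a.s.\ yields only almost-sure finiteness of the time integral, which is far weaker than finiteness of its exponential moment. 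At best, localization shows the Dol\'eans exponential is a local martingale --- which it always is --- and upgrading to a true martingale would require a separate uniform-integrability argument you have not supplied. Even then you would be \emph{bypassing} Novikov rather than proving the lemma as stated.

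The paper proves the exponential moment is finite directly. It passes to the forward OU process $(\Zb_t)$, bounds the integrand by $C^2\|\Zb_t\|_2^2$ plus constants via the Lipschitz assumptions on $\fb_{\btheta}$ and $\nabla\log\ptz$, and then exploits the explicit decomposition of $\Zb_t$ into $e^{-t/2}\Zb_0$ and an \emph{independent} Gaussian stochastic integral. The $\Zb_0$ contribution is handled by sub-Gaussianity of $P_z$; the stochastic-integral contribution is handled by a Dambis--Dubins--Schwarz time change to a Brownian motion, after which the reflection principle controls the running supremum $\sup_s|B_s^{(1)}|$. If you want to salvage an argument in the spirit of yours, the standard route is ``Novikov in small'': partition $[0,T-t_0]$ into subintervals short enough that the quadratic coefficient falls below the sub-Gaussian threshold of $\Zb_t$, verify Novikov on each piece, and iterate via the tower property to obtain $\EE[\mathcal{E}_{T-t_0}]=1$ --- though strictly speaking this establishes that Girsanov applies rather than the global exponential-moment bound in the lemma.
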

\begin{proof}[Proof of Lemma~\ref{lem:novikov}]
We consider the forward process $(\Zb_t)_{0 \leq t \leq T}$, which is an O-U process. We know that $(\bZb_{T-t})_{t_0\leq t \leq T}$ and $(\Zb_t)_{t_0 \leq t \leq T}$ has the same distribution. Therefore, we have 
\begin{align*}
&\quad \EE_{(\bZb_t)_t} \exp\Big(\frac{1}{2} \int_{0}^{T-t_0} \|\sld_{{\btheta}, U} (\bZb_t,T-t) - \nabla \log \pld_{T-t} (\bZb_t) \|_2^2 \mathrm{d}t \Big)  \\
&= \EE_{(\Zb_t)_t} \exp\Big(\frac{1}{2} \int_{t_0}^{T}  \|\sld_{{\btheta}, U} (\Zb_t,t) - \nabla \log \pld_{t} (\Zb_t) \|_2^2 \mathrm{d}t \Big) .
\end{align*}

The solution of $(\Zb_t)$ can be explicitly calculated as 
\[
    \Zb_t = e^{-t/2} \Zb_0 + \int_{0}^t e^{s/2} \mathrm{d} \Wb_s.
\]
And the two terms $\Zb_0$ and  $\int_{0}^t e^{s/2} \mathrm{d} \Wb_s$ are independent.

Denote $C = \max_{t \in [t_0, T]} \| \sld_{{\btheta}, U} (\cdot,t)  \|_{Lip} +  \max_{t \in [t_0, T]}  \| \nabla \log \pld_{t} (\cdot)\|_{Lip}$ and $C_0 =  \max_{t \in [t_0, T]} \| \sld_{{\btheta}, U} (\boldsymbol{0},t) - \nabla \log \pld_{t} (\boldsymbol{0}) \|_2$. By our assumptions on the Lipschitz constants of the score network and the ground truth latent score function, we have $C, C_0 < \infty$, we have
\begin{align*}
    &\quad \EE \exp\Big(\frac{1}{2} \int_{t_0}^T   \|\sld_{{\btheta}, U} (\Zb_t,t) - \nabla \log \pld_{t} (\Zb_t) \|_2^2 \mathrm{d}t \Big) \\
    & \leq 
    \EE \exp\Big(\frac{1}{2} \int_{t_0}^T  C^2 \|  \Zb_t \|^2_2 \mathrm{d}t \Big) \cdot \exp\Big(\frac{1}{2} \int_{t_0}^T  C_0^2  \mathrm{d}t \Big)  \\
    &\lesssim \EE \exp\Big(  \int_{t_0}^T  C^2 \| e^{-t/2} \Zb_0 \|^2_2   \mathrm{d}t + \int_{t_0}^T C^2\Big\|\int_{0}^t e^{s/2} \mathrm{d} \Wb_s \Big\|^2_2 \mathrm{d}t \Big)  \\
    &= \EE \exp\Big(  \int_{t_0}^T  C^2 \| e^{-t/2} \Zb_0 \|^2_2   \mathrm{d}t\Big) \cdot \EE \exp\Big( \int_{t_0}^T C^2\Big\|\int_{0}^t e^{s/2} \mathrm{d} \Wb_s \Big\|^2_2 \mathrm{d}t \Big).
\end{align*}
Since by our assumption that $\Zb_0$ is Sub-Gaussian, we have the first term is finite.  

For the second term, by Theorem 5.13 of \citep{le2016brownian}, there exists a $d$ dimensional Brownian motion $\Bb_t = (B_t^{(1)},\cdots,B_t^{(d)})$ such that 
\[
    \int_0^t e^{s/2} \mathrm{d} \Wb_s \overset{\rm a.s.}{=} \Bb_{e^t-1}.
\]
Therefore,
\begin{align*}
    \EE \exp\Big( \int_{t_0}^T C^2 \Big\|\int_{0}^t e^{s/2} \mathrm{d} \Wb_s \Big\|^2_2 \mathrm{d}t \Big) &=  \EE \exp\Big(  C^2\int_{t_0}^T \| \Bb_{e^t-1} \|^2_2 \mathrm{d}t \Big)  \\
    &= \EE \exp\Big(  C^2\int_{e^{t_0}-1}^{e^T-1} \| \Bb_{s} \|^2_2 \frac{1}{s+1} \mathrm{d}s \Big) \\
    &= \EE \exp\Big( d  C^2\int_{e^{t_0}-1}^{e^T-1} | B^{(1)}_{s} |^2 \frac{1}{s+1} \mathrm{d}s \Big) \\
    &\leq \EE \exp\Big( d  C^2\int_{e^{t_0}-1}^{e^T-1}  \frac{1}{s+1} \mathrm{d}s \cdot \sup_{0\leq s \leq t} | B^{(1)}_{s} |^2 \Big).
\end{align*}
Denote $C_2 = d  C^2 \int_{e^{t_0}-1}^{e^T-1}  \frac{1}{s+1} \mathrm{d}s < \infty$.

By the property of Brownian Motion (Theorem 2.21 of \citep{le2016brownian}), $\sup_{0\leq s\leq t} B^{(1)}_s$ has the same distribution as $|B^{(1)}_t|$, which is sub-gaussian. Since $\sup_{0\leq s\leq t} |B^{(1)}_s| \leq \sup_{0\leq s\leq t}  B^{(1)}_s - \sup_{0\leq s\leq t} ( -B^{(1)}_s)$, we know that 
\begin{align*}
    \EE \exp\Big( C_2 \sup_{0\leq s \leq t} | B^{(1)}_{s} |^2 \Big) &\leq \EE \exp\Big( C_2  \Big| \sup_{0\leq s\leq t}  B^{(1)}_s - \sup_{0\leq s\leq t} ( -B^{(1)}_s) \Big|^2 \Big) \\ 
    &\leq \EE \exp\Big( 2C_2  \Big| \sup_{0\leq s\leq t}  B^{(1)}_s \Big|^2 + \Big| \sup_{0\leq s\leq t} ( -B^{(1)}_s) \Big|^2 \Big) \\ 
    &\leq \EE^{1/2} \exp\Big( 4C_2  \Big| \sup_{0\leq s\leq t}  B^{(1)}_s \Big|^2\Big) \cdot \EE^{1/2} \exp\Big( 4C_2  \Big| \sup_{0\leq s\leq t} ( -B^{(1)}_s) \Big|^2 \Big) < \infty.
\end{align*}
\end{proof}

Then we have the following result:
\begin{lemma}\label{lem:g1}
When both started with $\bZb_0 =_d  \tildebZbRot_0 \sim P^{\sf LD}_T $, the KL divergence between the laws of the paths of the processes $(\bZb_t)_{ 0\leq t\leq T-t_0}$ and $({\tildebZbRot}_t )_{ 0\leq t\leq T-t_0}$ can be bounded by
\[
    \KL =  \EE   \Big(\frac{1}{2} \int_{0}^{T-t_0}  \|\sld_{{\btheta}, U} (\bZb_t,T-t) - \nabla \log \pld_{T-t} (\bZb_t) \|_2^2 \mathrm{d}t \Big) 
    \leq \frac{1}{2} \epsilon_{latent}(T-t_0).
\]
\end{lemma}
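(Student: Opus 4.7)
The plan is a direct application of Girsanov's theorem, followed by a change-of-variables that converts the path expectation into the assumed score-matching error. Observe that, viewed in the latent $d$-dimensional space, both SDEs have unit volatility: $A^\top \overline{\Wb}_t$ and $U^\top V^\top \overline{\Wb}_t$ are both standard $d$-dimensional Brownian motions, so we may identify both processes as being driven by a common $\RR^d$-valued standard Brownian motion $\overline{\Wb}^{\sf LD}_t$. The two drifts share the identical term $\tfrac{1}{2}\zb$, and their difference is exactly
\[
\Delta_t \;:=\; \sld_{U,\btheta}(\bZb_t, T-t) - \nabla \log \pld_{T-t}(\bZb_t).
\]

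Since Novikov's condition is verified in Lemma~\ref{lem:novikov}, Girsanov's theorem applies. Denote by $\PP^\ast$ and $\hat\PP$ the path laws of $(\bZb_t)_{0\le t \le T-t_0}$ and $(\tildebZbRot_t)_{0\le t \le T-t_0}$ on $C([0,T-t_0];\RR^d)$, both started from the common initial distribution $P^{\sf LD}_T$. Girsanov yields
\[
\frac{d\hat\PP}{d\PP^\ast} \;=\; \exp\!\left( \int_0^{T-t_0} \langle \Delta_t,\, d\overline{\Wb}^{\sf LD}_t \rangle \;-\; \tfrac{1}{2}\int_0^{T-t_0} \|\Delta_t\|_2^2 \,\diff t\right).
\]
The stochastic integral is a true $\PP^\ast$-martingale (Novikov again), so taking logarithms and expectations under $\PP^\ast$ and applying optional stopping gives
\[
\KL(\PP^\ast \,\|\, \hat\PP) \;=\; \tfrac{1}{2}\,\EE_{\PP^\ast}\!\left[\int_0^{T-t_0} \|\Delta_t\|_2^2 \,\diff t\right].
\]

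Finally, because the time-reversed law of $(\bZb_t)_{0\le t\le T-t_0}$ coincides with the forward latent O--U process $(\Zb_s)_{t_0\le s \le T}$, the marginal of $\bZb_t$ is $P^{\sf LD}_{T-t}$. By Fubini and the substitution $s = T-t$,
\[
\EE_{\PP^\ast}\!\left[\int_0^{T-t_0} \|\Delta_t\|_2^2 \,\diff t\right] = \int_{t_0}^T \EE_{\Zb_s \sim P_s^{\sf LD}}\bigl[\|\sld_{U,\btheta}(\Zb_s,s) - \nabla\log \pld_s(\Zb_s)\|_2^2\bigr]\, \diff s \;\le\; \epsilon_{latent}(T-t_0),
\]
where the last step is exactly the hypothesis. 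Combining the two displays gives $\KL \le \tfrac{1}{2}\epsilon_{latent}(T-t_0)$. The only conceptual subtlety is the identification of $A^\top \overline{\Wb}_t$ and $U^\top V^\top \overline{\Wb}_t$ as one and the same driving Brownian motion — this is harmless because the KL divergence depends only on the law of the driving noise, and both are standard $d$-dimensional Brownian motions; no pathwise coupling in the ambient space is needed. The remainder of the argument is completely routine.
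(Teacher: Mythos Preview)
Your proposal is correct and follows exactly the paper's approach: the paper's own proof simply states that Novikov's condition holds by Lemma~\ref{lem:novikov} and invokes Girsanov's theorem (citing Theorem~6 of \citet{chen2022sampling}), while you have spelled out the standard Girsanov computation and the time-reversal/Fubini step explicitly. There is no substantive difference in strategy.
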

\begin{proof}[Proof of Lemma~\ref{lem:g1}]
Since by Lemma~\ref{lem:novikov} the Novikov's condition holds, we invoke Girsanov's Theorem~\citep{chen2022sampling} (Theorem 6).
\end{proof}

\begin{proof}[Proof of Lemma~\ref{lem:de}, part 1]
We use the same argument in \citep{chen2022sampling}. The subtlety here lies in that the initial distribution of the learned backward process~\eqref{eq:latentLearned} is ${\sf N}(0, I_d)$ rather than $P^{\sf LD}_T$. Recall that $\tilde{P}^{\sf LD}_{t_0}$ is the marginal distribution of $\tildebZbRot_{T-t_0}$ when started from ${\sf N}(0, I_d)$. We define $\tilde{Q}^{\sf LD}_{t_0}$ to be the marginal distribution of $\tildebZbRot_{T-t_0}$ when started from $\tildebZbRot_{ 0} \sim P^{\sf LD}_T$.

Then we have 
\[
    \TV({P}^{\sf LD}_{t_0}, \tilde{P}^{\sf LD}_{t_0}) \leq \TV({P}^{\sf LD}_{t_0}, \tilde{Q}^{\sf LD}_{t_0}) + \TV(\tilde{Q}^{\sf LD}_{t_0}, \tilde{P}^{\sf LD}_{t_0}) 
\]
For the first term, since marginalization only reduces the KL-divergence, we have by Lemma~\ref{lem:g1} and Pinsker's Inequality
\[
    \TV({P}^{\sf LD}_{t_0}, \tilde{Q}^{\sf LD}_{t_0})  \lesssim   \sqrt{\epsilon_{latent}(T-t_0)}.
\]
For the second term, $\tilde{P}^{\sf LD}_{t_0}$ and $\tilde{Q}^{\sf LD}_{t_0}$ are obtained through the same backward SDE but with different initial distributions. Therefore by Data Processing Inequality and Pinsker's Inequality, we know that 
\[
    \TV(\tilde{Q}^{\sf LD}_{t_0}, \tilde{P}^{\sf LD}_{t_0})  \lesssim \sqrt{\KL(\tilde{Q}^{\sf LD}_{t_0}||\tilde{P}^{\sf LD}_{t_0})} \leq \sqrt{\KL(P_T^{\sf LD}||{\sf N}(0,I_d))} \lesssim \sqrt{\KL(P_z||{\sf N}(0,I_d))} \exp(-T),
\]
where in the last inequality we use the exponential convergence of the O-U process.
\end{proof}

\subsection{Proof of Lemma~\ref{lem:de}, Discretized Setting}

Assume we choose $\eta$ as the time interval such that $T-t_0 = K_T \eta$. We first show the Novikov's condition holds.

\begin{lemma}[Novikov's condition]\label{lem:novikov2}
We have the Novikov's condition holds for the discretized setting. 
\begin{align*}
    \EE \left[\exp \Big(\sum_{k=0}^{K_T-1} \frac{1}{2} \int_{ k\eta}^{ (k+1)\eta} \left\| \frac{1}{2} \bZb_{k \eta }  + \sld_{U, \btheta}(\bZb_{k \eta},T-k \eta) -  \frac{1}{2} \bZb_t  - \nabla \log \pld_{T-t} (\bZb_t )   \right\|^2_2 \mathrm{d}t \Big)\right] < \infty,
\end{align*}
where the expectation is taken over $(\bZb_t)_{t\geq 0}$.
\end{lemma}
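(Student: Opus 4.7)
I will adapt the three-step argument of Lemma~\ref{lem:novikov} to the piecewise-constant drift. The steps are: (a) reduce the exponent to a quadratic function of $\sup_{0 \leq t \leq T - t_0}\|\bZb_t\|_2^2$ using the spatial Lipschitzness of $\sld_{U,\btheta}(\cdot,t)$ and $\nabla\log\pld_t(\cdot)$ (uniform in $t \in [t_0, T]$); (b) use time reversal to replace this by the supremum of the forward O-U process $(\Zb_s)_{t_0 \leq s \leq T}$; (c) apply the explicit O-U representation together with the Dambis--Dubins--Schwarz time change to verify the exponential moment is finite, exactly as in Lemma~\ref{lem:novikov}.

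The only genuinely new ingredient relative to the continuous-time case appears in step (a). Because the drift is frozen at $\bZb_{k\eta}$ on $[k\eta,(k+1)\eta)$, after invoking Lipschitzness the integrand admits a pointwise bound of the form
\[
\Big\|\tfrac{1}{2}\bZb_{k\eta} + \sld_{U,\btheta}(\bZb_{k\eta}, T-k\eta) - \tfrac{1}{2}\bZb_t - \nabla\log\pld_{T-t}(\bZb_t)\Big\|_2^2 \leq C(\|\bZb_{k\eta}\|_2^2 + \|\bZb_t\|_2^2) + C_0,
\]
for finite constants $C, C_0$ determined by the assumed Lipschitz constants and the values of $\sld_{U,\btheta}(\boldsymbol{0}, \cdot)$ and $\nabla\log\pld_{\cdot}(\boldsymbol{0})$. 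Summing over $k$ and integrating leaves the Riemann-like sum $\eta \sum_{k=0}^{K_T-1} \|\bZb_{k\eta}\|_2^2$, which I bound by $(T-t_0)\sup_{t\in[0,T-t_0]}\|\bZb_t\|_2^2$ using path continuity of $\bZb_t$; the same continuity bound applies to $\int_0^{T-t_0}\|\bZb_t\|_2^2\,dt$. Thus the entire exponent is dominated by $C'\sup_{0 \leq t \leq T-t_0}\|\bZb_t\|_2^2 + C''$ for finite constants $C', C''$ depending only on $C, C_0, T, t_0$.

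Steps (b) and (c) are then essentially verbatim from Lemma~\ref{lem:novikov}: time reversal gives $\sup_{0 \leq t \leq T - t_0}\|\bZb_t\|_2 \stackrel{d}{=} \sup_{t_0 \leq s \leq T}\|\Zb_s\|_2$, and the representation $\Zb_s = e^{-s/2}\Zb_0 + e^{-s/2}\int_0^s e^{u/2}\,d\Wb_u$ combined with the a.s.\ identification $\int_0^s e^{u/2}\,d\Wb_u = \Bb_{e^s-1}$ for an auxiliary Brownian motion $\Bb$ reduces finiteness of the exponential moment to the two facts used at the close of Lemma~\ref{lem:novikov}: that $\EE\exp(c\|\Zb_0\|_2^2) < \infty$ by sub-Gaussianity of $P_z$, and that $\EE\exp(c'\sup_{0 \leq r \leq e^T - 1}|B_r^{(j)}|^2) < \infty$ coordinatewise via the reflection principle $\sup_{r \leq t}B_r^{(j)} \stackrel{d}{=} |B_t^{(j)}|$, both separated by Cauchy--Schwarz.

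The main obstacle is step (a): the conversion from an integral in $t$ to a discrete sum in $k$ is the only new piece, and could in principle inflate the effective scale of the exponent. The path-continuity bound absorbs this difference into a harmless multiplicative factor, so that the remainder of the argument is identical to the undiscretized case and inherits exactly the same conditions on $T$ and $t_0$ that make the final exponential moment finite.
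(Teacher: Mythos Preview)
Your proposal is correct and follows the same three-step skeleton as the paper: reduce the exponent via Lipschitz bounds to a quadratic in the path, pass to the forward process by time reversal, and finish with the O--U representation and the DDS time change exactly as in Lemma~\ref{lem:novikov}.

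The one genuine difference is in step~(a), where you handle the piecewise-constant drift by bounding both the Riemann sum $\eta\sum_k\|\bZb_{k\eta}\|_2^2$ and the integral $\int_0^{T-t_0}\|\bZb_t\|_2^2\,dt$ by $(T-t_0)\sup_{t}\|\bZb_t\|_2^2$ via path continuity. The paper instead keeps the sum and the integral as separate terms and then uses the elementary inequality $\EE[A_1\cdots A_n]\leq n^{-1}\sum_i\EE[A_i^n]$ to turn the exponential of a sum into a sum of exponentials, each involving either $\int_{t_0}^T\|\Zb_t\|_2^2\,dt$ or a single $\|\Zb_{T-k\eta}\|_2^2$, with the constant inflated by a factor $K_T+2$. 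Your supremum route is more elementary and arguably cleaner; the paper's product-to-sum trick avoids ever passing to the running maximum of $\|\Zb_t\|_2$ for the discrete terms (since each $\Zb_{T-k\eta}$ is marginally Gaussian given $\Zb_0$). Both routes end up invoking the same sub-Gaussianity of $\Zb_0$ and the same reflection-principle argument for the Brownian supremum, so neither buys a sharper conclusion here.
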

\begin{proof}[Proof of Lemma~\ref{lem:novikov2}]
The proof is similar to the proof of Lemma~\ref{lem:novikov}. 
\begin{align*}
    &\quad \EE \exp \Big(\sum_{k=0}^{K_T-1} \frac{1}{2} \int_{ k\eta}^{ (k+1)\eta} \| \frac{1}{2} \bZb_{k \eta }  + \sld_{U, \btheta}(\bZb_{k \eta},T-k \eta) -  \frac{1}{2} \bZb_t  - \nabla \log \pld_{T-t} (\bZb_t )   \|^2_2 \mathrm{d}t \Big) \\
    & = \EE \exp \Big(\sum_{k=0}^{K_T-1}\frac{1}{2} \int_{ T- (k+1)\eta}^{T-k\eta} \| \frac{1}{2} \Zb_{T-k \eta }  + \sld_{U, \btheta}(\Zb_{T-k \eta}, T-k \eta) -  \frac{1}{2} \Zb_t  - \nabla \log \pld_{T-t} (\Zb_t )   \|^2_2 \mathrm{d}t \Big) \\
    & \leq  \EE \exp \Big(\sum_{k=0}^{K_T-1}\frac{3}{2} \int_{ T- (k+1)\eta}^{T-k\eta} \| \frac{1}{2} \Zb_{T-k \eta }  -  \frac{1}{2} \Zb_t \|^2_2  + \|\sld_{U, \btheta}(\Zb_{T-k \eta}, T-k \eta) \|^2_2  + \| \nabla \log \pld_{T-t} (\Zb_t )   \|^2_2 \mathrm{d}t \Big) \\
    & \leq  \EE \exp \Big(\sum_{k=0}^{K_T-1}\frac{3}{2} \int_{ T- (k+1)\eta}^{T-k\eta}  C_0^2 + C^2 \|\Zb_{T-k \eta}\|^2_2  + C^2 \| \Zb_t  \|^2_2 \mathrm{d}t \Big) \\
    & = \EE \exp \Big( \frac{3C^2}{2} \int_{t_0} ^T \| \Zb_t  \|^2_2 \mathrm{d}t + (T-t_0) \frac{3C_0^2}{2}+ \frac{3C^2}{2} \sum_{k=0}^{K_T-1} \|\Zb_{T-k \eta}\|^2_2   \Big) \\
    & \overset{(i)}{\lesssim} \EE \exp \Big( \frac{3C^2(K_T+2)}{2} \int_{t_0} ^T \| \Zb_t  \|^2_2 \mathrm{d}t \Big) + \EE \exp \Big( (T-t_0) \frac{3C_0^2(K_T+2)}{2} \Big) \\
    &\quad +\sum_{k=0}^{K_T-1} \EE \exp \Big(\frac{3C^2(K_T+2)}{2}  \|\Zb_{T-k \eta}\|^2_2   \Big)\\
    & \overset{(ii)}{<} \infty.
\end{align*}
where 
\[
C_0 \lesssim \max_t\| \nabla \log \ptz(\boldsymbol{0})\|_2 + \max_t \|\sld_{U, \btheta}(\boldsymbol{0}, t ) \|_2 < \infty ,
\] 
\[
     C \lesssim 1 +\max_t \| \nabla \log \ptz(\cdot)\|_{Lip} + \max_t \|\sld_{U, \btheta}(\cdot, t ) \|_{Lip}  < \infty.     
\]
and in $(i)$ we use 
\[
    \EE A_1\cdot \cdots \cdot A_n \leq \frac{\EE A_1^n +\cdots + \EE A_n^n}{n},
\]
and in $(ii)$ we use the fact that $\Zb_0$ is subGaussian, and a similar argument in the proof of Lemma~\ref{lem:novikov}.
\end{proof}

\begin{lemma} \label{lem:kl:2}
When both started with $\bZb_0 =_d  \tildebZbRotDis_0 \sim P^{\sf LD}_T$, the KL divergence between the laws of the paths of the processes $(\bZb_t)_{ 0\leq t\leq T-t_0}$ and $({\tildebZbRotDis}_t )_{ 0\leq t\leq T-t_0}$ can be bounded by
\begin{align*}
        \KL &= \sum_{k=0}^{K_T-1}\EE \Big( \int_{ k\eta}^{ (k+1)\eta} \| \frac{1}{2} \bZb_{k \eta }  + \sld_{U, \btheta}(\bZb_{k \eta},T-k \eta) -  \frac{1}{2} \bZb_t  - \nabla \log \pld_{T-t} (\bZb_t )   \|^2_2 \mathrm{d}t \Big) \\
        &\lesssim \Big(\frac{\max_{\zb} \| \fb_{\btheta} (\zb , \cdot) \|_{Lip} }{h(t_0)} +\frac{\max_{\zb,t} \| \fb_{\btheta} (\zb,t) \|_2}{t_0^2} \Big)^2 \eta^2 (T-t_0) + \Big( \frac{\max_t \| \fb_{\btheta} (\cdot , t) \|_{Lip}}{h(t_0)}   \Big)^2  \eta^2 (T-t_0)\max\{ \EE \|\Zb_0 \|^2_2, d\} \\
        & \quad + \eta (T-t_0) d + \epsilon_{latent}(T-t_0).
\end{align*}

\end{lemma}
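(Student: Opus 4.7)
The plan is first to invoke Girsanov's theorem for the identity and then to bound the resulting drift-squared integral via a four-way decomposition. Novikov's condition is verified in Lemma~\ref{lem:novikov2}, so Girsanov delivers the first (equality) line of the lemma, with a factor of $1/2$ absorbed into $\lesssim$.

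I would then fix $t \in [k\eta,(k+1)\eta]$ and split the drift difference as
\begin{align*}
\Delta(t) &= \underbrace{\tfrac{1}{2}(\bZb_{k\eta}-\bZb_t)}_{\Delta_0} + \underbrace{\bigl[\sld_{U,\btheta}(\bZb_{k\eta}, T-k\eta) - \sld_{U,\btheta}(\bZb_t, T-k\eta)\bigr]}_{\Delta_1}\\
&\quad + \underbrace{\bigl[\sld_{U,\btheta}(\bZb_t, T-k\eta) - \sld_{U,\btheta}(\bZb_t, T-t)\bigr]}_{\Delta_2} + \underbrace{\bigl[\sld_{U,\btheta}(\bZb_t, T-t) - \nabla\log\pld_{T-t}(\bZb_t)\bigr]}_{\Delta_3},
\end{align*}
and use $\|\Delta\|_2^2 \leq 4\sum_{i=0}^3 \|\Delta_i\|_2^2$. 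For $\Delta_2$, differentiating $\sld_{U,\btheta}(\zb,t)=\frac{1}{h(t)}(U^\top\fb_{\btheta}(U\zb,t)-\zb)$ in $t$ and using $h(t)\geq h(t_0)\asymp t_0$ together with $|h'(t)|/h^2(t)\lesssim 1/t_0^2$ on $[t_0,T]$ gives a time-Lipschitz constant of order $\frac{\max_\zb\|\fb_{\btheta}(\zb,\cdot)\|_{Lip}}{h(t_0)}+\frac{\max_{\zb,t}\|\fb_{\btheta}(\zb,t)\|_2}{t_0^2}$; since $|(T-k\eta)-(T-t)|\leq\eta$, integration and summation over $k$ produce the first bracketed term in the bound.

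For $\Delta_0+\Delta_1$, the spatial Lipschitz constant of the reverse drift $\frac{1}{2}\zb+\sld_{U,\btheta}(\zb,\cdot)$ is of order $\frac{\max_t\|\fb_{\btheta}(\cdot,t)\|_{Lip}}{h(t_0)}$. Combined with the standard SDE-increment bound
\begin{equation*}
\EE\|\bZb_t-\bZb_{k\eta}\|_2^2 \lesssim \eta^2\sup_{s\in[k\eta,t]}\EE\|b(\bZb_s,T-s)\|_2^2 + \eta d,
\end{equation*}
where $b$ denotes the true reverse drift, this yields the second bracketed term plus the $\eta(T-t_0)d$ summand after integration and summation. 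The relevant drift moments are $\EE\|\bZb_s\|_2^2\leq\max\{\EE\|\Zb_0\|_2^2,d\}$ via the OU identity applied to $\bZb_s\stackrel{d}{=}\Zb_{T-s}$, and $\EE\|\nabla\log\pld_{T-s}(\bZb_s)\|_2^2$ is controlled by Lemma~\ref{lem:score}. Finally, $\Delta_3$ integrates to $\epsilon_{latent}(T-t_0)$ by the score-matching hypothesis stated in Lemma~\ref{lem:de}.

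The main obstacle is propagating the Brownian-noise piece of $\EE\|\bZb_t-\bZb_{k\eta}\|_2^2$ through the spatial Lipschitz constant: a naive bound produces an unwanted factor $1/h^2(t_0)$ multiplying $\eta d$, whereas the lemma records only $\eta d$. Matching the stated coefficient requires exploiting the conditional mean-zero property of the Brownian increment given $\bZb_{k\eta}$, in the spirit of \citet{chen2022sampling}, or equivalently a re-grouping that absorbs the Lipschitz factor into the preceding bracket; the remaining pieces are routine SDE bookkeeping, so the entire argument hinges on this step.
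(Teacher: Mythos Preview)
Your approach is essentially the paper's: Girsanov via Lemma~\ref{lem:novikov2} followed by a triangle-inequality decomposition of the drift difference into time-Lipschitz, space-Lipschitz, and score-matching pieces. The order in which you split time and space (space first at $T-k\eta$, then time at $\bZb_t$) is the mirror image of the paper's (time first at $\bZb_{k\eta}$, then space at $T-t$), but this is immaterial. Two substantive remarks.

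First, the increment bound is simpler than you make it. The paper does not bound $\EE\|\bZb_t-\bZb_{k\eta}\|_2^2$ via the reverse drift $b$; instead it uses that $(\bZb_t)_{0\le t\le T-t_0}$ has the same law as the time-reversed forward OU process $(\Zb_{T-t})$, so
\[
\EE\|\bZb_t-\bZb_{k\eta}\|_2^2=\EE\|\Zb_{T-t}-\Zb_{T-k\eta}\|_2^2\lesssim (t-k\eta)^2\max\{\EE\|\Zb_0\|_2^2,d\}+(t-k\eta)d,
\]
because the forward drift is just $-\tfrac12\Zb$. This bypasses your appeal to Lemma~\ref{lem:score} for controlling $\EE\|\nabla\log\pld_{T-s}(\bZb_s)\|_2^2$.

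Second, the ``main obstacle'' you flag is real and is not resolved in the paper's proof either. Multiplying the increment bound above by the spatial Lipschitz factor $\bar L_z^2\sim 1/h^2(t_0)$ and summing over the $K_T$ windows produces $(1+\bar L_z^2)\,\eta(T-t_0)d$, whereas both the lemma statement and the paper's final displayed line record only $\eta(T-t_0)d$; no mean-zero trick or regrouping is performed---the factor is simply dropped. This is harmless downstream because Theorem~\ref{thm:distro_estimation} chooses $\eta\lesssim t_0^2/d$, which cancels the extra $1/t_0^2$. You may either carry the $(1+\bar L_z^2)$ factor through and note that it is absorbed once $\eta$ is specified, or accept the stated bound as an innocuous over-simplification; either way your argument is sound.
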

\begin{proof}[Proof of Lemma~\ref{lem:kl:2}]
 Since by Lemma~\ref{lem:novikov2} the Novikov's condition holds, we can invoke Girsanov's Theorem as in \citep{chen2022sampling} (Theorem 6). Next we provide an upper bound on the discretized score matching error.
\begin{align*}
    &\quad \EE \Big(\frac{1}{2} \int_{ k\eta}^{ (k+1)\eta} \| \frac{1}{2} \bZb_{k \eta }  + \sld_{U, \btheta}(\bZb_{k \eta},T-k \eta) -  \frac{1}{2} \bZb_t  - \nabla \log \pld_{T-t} (\bZb_t )   \|^2_2 \mathrm{d}t \Big) \\
    & \leq \EE \Big( \int_{ k\eta}^{ (k+1)\eta} \|  \sld_{U, \btheta}(\bZb_{k \eta},T-k \eta) - \nabla \log \pld_{T-t} (\bZb_t )   \|^2_2 \mathrm{d}t \Big) +  \EE  \int_{ k\eta}^{ (k+1)\eta} \|   \frac{1}{2} \bZb_{k \eta }  -  \frac{1}{2} \bZb_t  \|^2_2 \diff t \\ 
\end{align*}
We decompose the first term as 
\begin{align*}
    &\quad   \EE \Big( \int_{ k\eta}^{ (k+1)\eta} \|  \sld_{U, \btheta}(\bZb_{k \eta},T-k \eta) - \nabla \log \pld_{T-t} (\bZb_t )   \|^2_2 \mathrm{d}t \Big)  \\
    &\lesssim \EE \Big( \int_{ k\eta}^{ (k+1)\eta} \|  \sld_{U, \btheta}(\bZb_{k \eta},T-k \eta) -  \sld_{U, \btheta}(\bZb_{k \eta},T-t)   \|^2_2 \mathrm{d}t \Big)  \\
    &\quad + \EE \Big( \int_{ k\eta}^{ (k+1)\eta} \|  \sld_{U, \btheta}(\bZb_{k \eta},T-t) - \sld_{U, \btheta}(\bZb_{t},T-t)   \|^2_2 \mathrm{d}t \Big)  \\
    &\quad  + \EE \Big( \int_{ k\eta}^{ (k+1)\eta} \|  \sld_{U, \btheta}(\bZb_{t},T-t) - \nabla \log \pld_{T-t} (\bZb_t )   \|^2_2 \mathrm{d}t \Big)  \\
    &\lesssim \EE \Big( \int_{ k\eta}^{ (k+1)\eta} \| \bar{L}_t  (t-k\eta)  \|^2_2 \mathrm{d}t \Big)  \\
    &\quad + \EE \Big( \int_{ k\eta}^{ (k+1)\eta} \bar{L}_z^2 \|  \bZb_{k \eta} - \bZb_{t}   \|^2_2 \mathrm{d}t \Big)  \\
    &\quad  + \EE \Big( \int_{ k\eta}^{ (k+1)\eta} \|  \sld_{U, \btheta}(\bZb_{t},T-t) - \nabla \log \pld_{T-t} (\bZb_t )   \|^2_2 \mathrm{d}t \Big).
\end{align*}

For any $s\leq t$,
\[
    \EE  \|  \Zb_{s} - \Zb_{t}   \|^2 \mathrm{d}t \lesssim (t-s)^2 \EE \| \Zb_s\|^2_2 + (t-s) d  \leq (t-s)^2 \max\{ \EE \|\Zb_0 \|^2_2 , d\} + (t-s)d.
\]
Therefore
\begin{align*}
    &\quad \EE \Big( \int_{ k\eta}^{ (k+1)\eta}   \|  \bZb_{k \eta} - \bZb_{t}   \|^2_2 \mathrm{d}t \Big) \\
    & \leq \EE \Big( \int_{ k\eta}^{ (k+1)\eta} \big[ (t-k\eta)^2 \max\{ \EE \|\Zb_0 \|^2_2 , d\} + (t-k\eta)d \big] \mathrm{d}t \Big) \\
    &\lesssim \eta^3 \max\{ \EE \|\Zb_0 \|^2_2 , d\} + \eta^2 d.
\end{align*}

Finally we have 
\begin{align*}
    &\quad \sum_{k=0}^{K_T-1}\EE \Big( \int_{ k\eta}^{ (k+1)\eta} \| \frac{1}{2} \bZb_{k \eta }  + \sld_{U, \btheta}(\bZb_{k \eta},T-k \eta) -  \frac{1}{2} \bZb_t  - \nabla \log \pld_{T-t} (\bZb_t )   \|^2_2 \mathrm{d}t \Big) \\
    &\lesssim \bar{L}_t^2 \eta^2 (T-t_0) + (1+\bar{L}_z^2)  \eta^2 (T-t_0)\max\{ \EE \|\Zb_0 \|^2_2 , d\} + \eta (T-t_0) d + \epsilon_{latent}(T-t_0).
\end{align*}
where  
\[
    \bar{L}_z  \overset{def}{=} \max_t  \| \sld_{U, \btheta}(\cdot ,t) \|_{Lip} \leq  \frac{1}{h(t_0)} (1+\max_t \| \fb_{\btheta} (\cdot , t) \|_{Lip}),
\]
and 
\[
    \bar{L}_t \overset{def}{=}  \max_{\zb} \| \sld_{U, \btheta}(\zb ,\cdot)\|_{Lip} \leq \frac{\max_{\zb} \| \fb_{\btheta} (\zb , \cdot) \|_{Lip} }{h(t_0)} +\frac{\max_{\zb,t} \| \fb_{\btheta} (\zb,t) \|_2}{t_0^2}.
\]
To see why the above two bounds on the Lipschitz constants hold, notice that 
\[
    \sld_{U, \btheta}(\zb,t) =  \frac{1}{h(t)}\Big[ -\zb  + U^\top \fb_{\btheta} (U \zb  , t) \Big] .    
\]
To calculate the Lipschitz constant of $\frac{a(t)}{b(t)}$, notice that 
\[
    \left\vert \frac{a(t)}{b(t)} - \frac{a(s)}{b(s)} \right\vert \leq \left\vert \frac{a(t)}{b(t)} - \frac{a(s)}{b(t)} \right\vert + \left\vert \frac{a(s)}{b(t)} - \frac{a(s)}{b(s)} \right\vert \leq \frac{\|a\|_{Lip}|t-s|}{\min_t |b(t)|} + \max_t |a(t)| \cdot |t-s|\cdot  \|1/b\|_{Lip}.
\] 
We use the fact that 
\[
    \left\| \frac{1}{h(t)}\right\|_{Lip} = \max_{t\in[t_0,T]} \left | \frac{h'(t)}{h^2(t)}\right | = \frac{1}{e^{t_0} + e^{-t_0} - 2} \leq \frac{1}{t_0^2}.
\]
% \[
%    \|\sld_{U, \btheta}(\zb,\cdot)\|_{Lip} \leq  \frac{L_t}{h(t_0)} +\frac{C_f}{e^{t_0} + e^{-t_0} - 2} \leq \frac{L_t}{h(t_0)} +\frac{C_f}{t_0^2}.
% \]
\end{proof}

\begin{proof}[proof of Lemma~\ref{lem:de}, part 2]
 For the discretized setting, only notice that by Lemma~\ref{lem:kl:2} there is an additional error term $\epsilon_{dis}(T-t_0)$.
\end{proof} 
    \subsection{proof of Lemma~\ref{lem:ortho:process}}
\begin{proof}[proof of Lemma~\ref{lem:ortho:process}]

Define $\psi(t) = \exp \int_0^t \Big[\frac{1}{h(T-s)} - \frac{1}{2} \Big] \diff s$. Plug in $h(t)=1-\exp(-t)$, we have 
\[
    \psi(t)= \frac{e^T-1}{e^T -e^t}e^{t/2}.
\]

We know that the solution of $\Yb_t$ is 
\[
    \Yb_t  =   \frac{1}{\psi(t)}\Big[ \Yb_0 + \int_0^t \psi(s) \diff \Bb_s \Big].
\]

\[
    \int_{0}^t    \psi(s)^2 \diff s = (e^T-1)^2 [1/(e^T-e^t) - 1/(e^T-1)].
\]

When $\Yb_0 \sim {\sf N}(0, I)$, we have 
\[
    \Yb_t \sim {\sf N}\Big(0, \frac{1 + \int_{0}^t    \psi(s)^2 \diff s}{\psi(t)^2} I\Big).
\]

We provide an upper bound of 
\begin{align*}
    V_t \overset{def}{=} \frac{1 + \int_{0}^t    \psi(s)^2 \diff s}{\psi(t)^2} &\leq \frac{(e^T-1)^2 [1/(e^T-e^t)  ]}{\psi(t)^2} \tag{when $T>1$} \\
    & = (e^T-e^t)/e^t = e^{T-t} - 1.
\end{align*}
Therefore, we have when $t_0 \leq 1$
\[
    V_{T-t_0} \leq e^{t_0}-1 \leq et_0.
\]
To conclude, we know that $\Yb_{T-t_0}$ is a zero-mean Gaussian random variable with covariance bounded by $et_0 I$.

\end{proof}

\begin{proof}[proof of Lemma~\ref{lem:ortho:process:dis}]
Denote $\alpha(t) = \frac{1}{h(T-t)} - \frac{1}{2}$. We know that 
\[
    \Yb_{(k+1)\eta} - \Yb_{k\eta}     = - \eta \alpha(k\eta) \Yb_{k\eta}  + \Bb_{(k+1)\eta} - \Bb_{k\eta}.
\]
Denote by $V_{k}$ the variance of $\Yb_{k\eta}$. We know that $\Yb_{k\eta} \sim {\sf N}(0, V_k)$. And we have the following recursion
\[
    V_0 = 1, \text{ and } V_{k+1} = (1-\alpha(k\eta) \eta )^2 V_k + \eta.    
\]
By solving the recursion we know that 
\[
    V_{K_T} = \prod_{k=0}^{K_T-1} \Big[1-\alpha(k\eta) \eta\Big]^2 + \eta \sum_{i=1}^{K_T-1} \Big[ \prod_{k=i}^{K_T-1} \big[1-\alpha(k\eta) \eta\big]^2 \Big]
\]

Define $\psi(t) = \exp \int_0^t \alpha(s) \diff s$. We have  
\[
    \psi(t)= \frac{e^T-1}{e^T -e^t}e^{t/2}.
\]
Since $\alpha(t)$ is monotonically increasing, we have 
\begin{align*}
    \prod_{k=k_1}^{k_2} \Big[1-\alpha(k\eta) \eta\Big] &\leq \prod_{k=k_1}^{k_2} \exp \Big[-\alpha (k\eta) \eta\Big] \\
    &\leq  \exp \Big[ - \sum_{k=k_1}^{k_2} \alpha (k\eta) \eta\Big] \\
    &\leq \exp \Big[-\int_{(k_1 -1)\eta}^{k_2\eta} \alpha(t) \diff t \Big]\\
    &= \frac{\psi((k_1-1)\eta)}{\psi(k_2\eta)}.
\end{align*}
Therefore we have 
\begin{align*} 
    V_{K_T} \leq \frac{\psi^2(-\eta)}{\psi^2((K_T-1)\eta)} +  \eta \sum_{k=1}^{K_T-1}\frac{\psi^2((k-1)\eta)}{\psi^2((K_T-1)\eta)}.
\end{align*}
Since $\psi(t)\geq 0 $ and $\psi(t)$ monotonically increases, we have 
\begin{align*} 
    V_{K_T} &\leq \frac{\psi^2(-\eta)+  \eta \sum_{k=1}^{K_T-1} \psi^2((k-1)\eta)}{\psi^2((K_T-1)\eta)} \\
    &\leq \frac{\psi^2(-\eta)+   \int_{0}^{(K_T-1)\eta}   \psi^2(t) \diff t}{\psi^2((K_T-1)\eta)}.
\end{align*}
By 
\[
    \int_{0}^t    \psi(s)^2 \diff s = (e^T-1)^2 [1/(e^T-e^t) - 1/(e^T-1)]
\]
We have 
\begin{align*} 
    V_{K_T}  &\leq \frac{\psi^2(-\eta)+   \int_{0}^{(K_T-1)\eta}   \psi^2(t) \diff t}{\psi^2((K_T-1)\eta)} \\
    &\leq \frac{\psi^2(-\eta)+ (e^T-1)^2 [1/(e^T-e^{T-t_0-\eta}) - 1/(e^T-1)] }{\psi^2(T-t_0-\eta)} \\ 
    &\leq \frac{1+  (e^T-1)^2 [1/(e^T-e^{T-t_0-\eta}) - 1/(e^T-1)] }{\psi^2(T-t_0-\eta)} \tag{$\psi^2(-\eta) \leq 1$} \\ 
    &\leq \frac{ (e^T-1)^2 (e^T-e^{T-t_0-\eta})  }{\psi^2(T-t_0-\eta)} \tag{when $T\geq 1$} \\
    &\leq e^{t_0+\eta} - 1 \\
    &\leq e(t_0 + \eta). \tag{when $t_0 + \eta \leq 1$}
\end{align*}
\end{proof}

\section{Helper lemmas}
We collect technical results frequently used in previous proofs. We group them according to topics: concentration inequality, Gaussian integral tail bounds, matrix norm inequalities.

\paragraph{Bernstein-type concentration inequality} The following concentration bound is useful in the proof of Theorem \ref{thm:score_estimation}.
\begin{lemma}\label{lemma:concentration_bernstein}
Let $\cG$ be a bounded function class, i.e., there exists a constant $B$ such that any $g \in \cG : \RR^d \mapsto [0, B]$. Let $\zb_1, \dots, \zb_n \in \RR^d$ be i.i.d. random variables. For any $\delta \in (0, 1)$, $a \leq 1$, and $\tau > 0$, we have
\begin{align*}
& \PP\left(\sup_{g \in \cG} \frac{1}{n} \sum_{i=1}^n g(\zb_i) - (1+a)\EE[g(\zb)] > \frac{(1+3/a)B}{3n}\log \frac{\cN(\tau, \cG, \norm{\cdot}_\infty)}{\delta} + (2+a)\tau \right) \leq \delta \quad \text{and} \\
& \PP\left(\sup_{g \in \cG} \EE[g(\zb)] - \frac{1+a}{n} \sum_{i=1}^n g(\zb_i) > \frac{(1+6/a)B}{3n}\log \frac{\cN(\tau, \cG, \norm{\cdot}_\infty)}{\delta} + (2+a)\tau \right) \leq \delta.
\end{align*}
\end{lemma}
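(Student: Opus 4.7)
The plan is to establish the bound by a standard two-step argument: (i) a one-sided Bernstein bound for each element of a sup-norm cover of $\cG$, exploiting that bounded nonnegative random variables have variance controlled by their mean; (ii) a union bound over the cover combined with a pointwise approximation argument to promote the bound to all of $\cG$.

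First I would fix a minimal $\tau$-cover $\{g_1,\dots,g_N\}$ of $\cG$ under $\|\cdot\|_\infty$ with $N=\cN(\tau,\cG,\|\cdot\|_\infty)$. For each $g_j$ set $\mu_j=\EE[g_j(\zb)]$ and $\bar g_j=\tfrac1n\sum_i g_j(\zb_i)$. Since $g_j$ takes values in $[0,B]$, we have $\Var(g_j(\zb))\le \EE[g_j(\zb)^2]\le B\mu_j$. The one-sided Bernstein inequality then yields, for every $t>0$,
\begin{align*}
\PP(\bar g_j-\mu_j>t)\le \exp\!\Big(-\frac{nt^2}{2B\mu_j+2Bt/3}\Big).
\end{align*}

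Next I would plug in $t=a\mu_j+c$ with $c=\frac{(1+3/a)B}{3n}\log(N/\delta)$ so the event $\bar g_j-(1+a)\mu_j>c$ is exactly the event whose probability we just bounded. The heart of the calculation is verifying that with this choice the exponent is at least $\log(N/\delta)$, uniformly in $\mu_j$. Writing $u=a\mu_j$ and $v=c$ and clearing denominators, this reduces to
\begin{align*}
3u^2+\Big(3-\frac{6a}{3+a}\Big)v^2\ge 0,
\end{align*}
which holds because $3-\frac{6a}{3+a}=\frac{3(3-a)}{3+a}\ge0$ for $a\le 1\le 3$. A union bound over $j=1,\dots,N$ then gives $\bar g_j-(1+a)\mu_j\le c$ for every $j$ with probability at least $1-\delta$.

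To pass from the cover to all of $\cG$ I would pick, given $g\in\cG$, a cover element $g_j$ with $\|g-g_j\|_\infty\le\tau$. Then $\bar g\le \bar g_j+\tau$ and $-(1+a)\EE[g]\le -(1+a)\mu_j+(1+a)\tau$, so on the good event
\begin{align*}
\bar g-(1+a)\EE[g]\ \le\ \bar g_j-(1+a)\mu_j+(2+a)\tau\ \le\ c+(2+a)\tau,
\end{align*}
which is the first claimed bound. The second bound follows by the same recipe applied to the reverse deviation: Bernstein gives $\PP(\mu_j-\bar g_j>t)\le \exp(-nt^2/(2B\mu_j+2Bt/3))$, and rewriting $\mu_j-(1+a)\bar g_j>c$ as $\mu_j-\bar g_j>\frac{a\mu_j+c}{1+a}$ leads, after the analogous algebraic verification, to the choice $c=\frac{(1+6/a)B}{3n}\log(N/\delta)$ (the extra factor reflecting the $(1+a)^2$ that appears in the numerator after dividing); the covering-error term $(2+a)\tau$ is obtained exactly as above since $\EE[g]-\EE[g_j]\le\tau$ and $-\bar g\le -\bar g_j+\tau$.

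I expect the only non-bookkeeping step to be the algebraic verification in the second paragraph (and its analogue for the reverse deviation): checking that the specific constants $1+3/a$ and $1+6/a$ suffice to make the Bernstein exponent dominate $\log(N/\delta)$ for \emph{every} value of $\mu_j\in[0,B]$. All other ingredients—Bernstein, variance-bounded-by-mean for $[0,B]$ variables, union bound, and the sup-norm covering approximation—are routine.
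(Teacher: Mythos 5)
Your treatment of the first inequality is correct, and it is a cleaner route than the paper's: you apply the off-the-shelf one-sided Bernstein inequality with the variance bound $\Var(g_j)\le B\mu_j$ to each cover element and verify the exponent directly, whereas the paper runs a Chernoff argument with ghost samples and an explicit MGF computation before taking the union bound. Your algebraic verification for the constant $1+3/a$ checks out: the deficit reduces to $(a+3)a^2\mu^2+(3-a)c^2\ge 0$, which is exactly your $3u^2+\tfrac{3(3-a)}{3+a}v^2\ge 0$. The covering step with the $(2+a)\tau$ slack is also fine.

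The gap is in the second inequality, which you dismiss as ``the analogous algebraic verification'': with your stated tail bound it does not go through. Writing $t=\frac{a\mu+c}{1+a}$ and $c=\frac{(1+6/a)B}{3n}\log(N/\delta)$, the requirement $\frac{nt^2}{2B\mu+2Bt/3}\ge\log(N/\delta)$ reduces to $(a+6)(a\mu+c)^2\ge 6ac\mu(1+a)^2+2ac(a\mu+c)(1+a)$, and at $a=1$ this is $7\mu^2-14\mu c+3c^2\ge 0$, which fails for $\mu/c\in(0.25,1.75)$ --- a perfectly realizable regime (e.g.\ $a=1$, $\mu_j=c\le B$ gives exponent $\tfrac78\log(N/\delta)$, so the union bound yields $N^{1/8}\delta^{7/8}\not\le\delta$). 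So the factor $1+6/a$ is not enough for your route as written. The fix is to use a sharper lower-tail bound: since $e^{-x}\le 1-x+x^2/2$ for $x\ge 0$, one has $\PP\left(\mu_j-\bar g_j>t\right)\le\exp\left(-\frac{nt^2}{2\EE[g_j^2]}\right)\le\exp\left(-\frac{nt^2}{2B\mu_j}\right)$ (no linear correction term for nonnegative variables), after which the requirement becomes $(a+6)(a\mu+c)^2\ge 6a(1+a)^2\mu c$, which follows from $(a\mu+c)^2\ge 4a\mu c$ and $4(a+6)\ge 6(1+a)^2$ for $a\le 1$. Alternatively you can mirror the paper's device of bounding $\EE[g]-\frac{1+a}{n}\sum_i g(\zb_i)$ by $2\bigl(\EE[g]-\frac1n\sum_i g(\zb_i)-\frac{a}{2B}\EE[g^2]\bigr)$ on the relevant event, which is precisely where the $6$ (rather than $3$) in the paper's constant comes from. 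Either patch closes the gap; as submitted, the second half is not proved.
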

\begin{proof}
The proof utilizes Bernstein-type inequalities. Consider the deviation $\sup_{g \in \cG} \frac{1}{n} \sum_{i=1}^n g(\zb_i) - (1+a)\EE[g(\zb)]$ first. Let $\{g_k\}_{k=1}^{\cN(\tau, \cG, \norm{\cdot}_{\infty})}$ be a discretization of $\cG$, where $\cN(\tau, \cG, \norm{\cdot}_{\infty})$ is the covering number with respect to the function $L_{\infty}$ norm. Then we have
\begin{align*}
\sup_{g \in \cG} \frac{1}{n} \sum_{i=1}^n g(\zb_i) - (1+a)\EE[g(\zb)] \leq \max_{k} \frac{1}{n} \sum_{i=1}^n g_k(\zb_i) - 2\EE[g_k(\zb)] + (2+a)\tau,
\end{align*}
as for any $g \in \cG$, we can find some $g_{k^\star}$ such that $\norm{g - g_{k^\star}}_\infty \leq \tau$. Therefore, it is enough to show
\begin{align*}
\PP\left(\max_{k} \frac{1}{n} \sum_{i=1}^n g_k(\zb_i) - (1+a)\EE[g_k(\zb)] > \frac{(1+3/a)B}{3n}\log \frac{\cN(\tau, \cG, \norm{\cdot}_\infty)}{\delta} \right) \leq \delta.
\end{align*}
By union bound, we have
\begin{align*}
& \PP\left(\max_{k} \frac{1}{n} \sum_{i=1}^n g_k(\zb_i) - (1+a)\EE[g_k(\zb)] > \frac{(1+3/a)B}{3n}\log \frac{\cN(\tau, \cG, \norm{\cdot}_\infty)}{\delta} \right) \\
& \hspace{1in} \leq \cN(\tau, \cG, \norm{\cdot}_\infty) \PP\left(\frac{1}{n} \sum_{i=1}^n g_1(\zb_i) - (1+a)\EE[g_1(\zb)] > \frac{(1+3/a)B}{3n}\log \frac{\cN(\tau, \cG, \norm{\cdot}_\infty)}{\delta} \right).
\end{align*}
Therefore, it further suffices to provide an upper bound on
\begin{align*}
\PP\left(\frac{1}{n} \sum_{i=1}^n g(\zb_i) - (1+a)\EE[g(\zb)] > \frac{(1+3/a)B}{3n}\log \frac{\cN(\tau, \cG, \norm{\cdot}_\infty)}{\delta} \right),
\end{align*}
where $g \in \cG$ is any fixed function. Let $\lambda > 0$ be some parameter to be chosen later. Chernoff bound yields
\begin{align}\label{eq:chernoff}
& \PP\left(\frac{1}{n} \sum_{i=1}^n g(\zb_i) - (1+a)\EE[g(\zb)] > \frac{(1+3/a)B}{3n}\log \frac{\cN(\tau, \cG, \norm{\cdot}_\infty)}{\delta} \right) \nonumber \\
& \hspace{1.7in} \leq \frac{\EE\left[\exp\left(\lambda\left(\frac{1}{n} \sum_{i=1}^n g(\zb_i) - (1+a)\EE[g(\zb)]\right)\right)\right]}{\exp\left(\frac{(1+3/a)\lambda B}{3n}\log \frac{\cN(\tau, \cG, \norm{\cdot}_\infty)}{\delta}\right)}.
\end{align}
It remains to find $\EE\left[\exp\left(\lambda\left(\frac{1}{n} \sum_{i=1}^n g(\zb_i) - (1+a)\EE[g(\zb)]\right)\right)\right]$. We rewrite
\begin{align*}
\frac{1}{n} \sum_{i=1}^n g(\zb_i) - (1+a)\EE[g(\zb)] = \frac{1}{n} \sum_{i=1}^n g(\zb_i) - a\EE[g(\zb)] - \EE[g(\zb)] \leq \frac{1}{n} \sum_{i=1}^n g(\zb_i) - \EE[g(\zb)] - \frac{a}{B} \EE[g^2(\zb)].
\end{align*}
Introducing independent ghost samples $\bar{\zb}_1, \dots, \bar{\zb}_n$, we have
\begin{align*}
\frac{1}{n} \sum_{i=1}^n g(\zb_i) - \EE[g(\zb)] - \frac{a}{B} \EE[g^2(\zb)] & = \frac{1}{n} \sum_{i=1}^n g(\zb_i) - \EE_{\bar{\zb}} \left[\frac{1}{n} \sum_{i=1}^n g(\bar{\zb}_i)\right] - \frac{a}{B} \EE[g^2(\zb)] \\
& = \EE_{\bar{\zb}} \left[\frac{1}{n} \sum_{i=1}^n g(\zb_i) - g(\bar{\zb}_i) \right] - \frac{a}{2B} \EE[g^2(\zb) + g^2(\bar{\zb})] \\
& \overset{(i)}{\leq} \EE_{\bar{\zb}} \left[\frac{1}{n} \sum_{i=1}^n g(\zb_i) - g(\bar{\zb}_i) \right] - \frac{a}{2B} \Var\left[g(\zb) - g(\bar{\zb})\right],
\end{align*}
where inequality $(i)$ invokes identity $\Var\left[g(\zb) - g(\bar{\zb})\right] = \EE[(g(\zb) - g(\bar{\zb}))^2] \leq \EE[g^2(\zb) + g^2(\bar{\zb})]$. For convenience, we denote $h_i = g(\zb_i) - g(\bar{\zb}_i)$. For $0 < \lambda < 3n/B$, we compute
\begin{align*}
\EE\left[\exp\left(\frac{\lambda}{n} h_i\right)\right] & = \EE\left[1 + \frac{\lambda}{n} h_i + \sum_{j=2}^\infty \frac{(\lambda/n)^j h_i^j}{j!}\right] \\
& \overset{(i)}{\leq} 1 + \EE\left[\sum_{j=2}^\infty \frac{(\lambda/n)^j B^{j-2}}{2 \cdot 3^{j-2}} h_i^2\right] \\
& = 1 + \frac{\lambda^2}{2n^2} \frac{1}{1 - \frac{\lambda B}{3n}} \EE[h_i^2] \\
& \overset{(ii)}{\leq} \exp\left(\frac{3\lambda^2}{6n^2 - 2\lambda Bn} \Var(h_i)\right),
\end{align*}
where inequality $(i)$ follows from $\EE[h_i] = 0$ and $|h_i| \leq B$, and inequality $(ii)$ invokes $1 + x \leq \exp(x)$ for $x \geq 0$. To this end, we derive
\begin{align*}
\EE\left[\exp\left(\lambda\left(\frac{1}{n} \sum_{i=1}^n g(\zb_i) - 2\EE[g(\zb)]\right)\right)\right] & \overset{(i)}{\leq} \EE\left[\frac{\lambda}{n} \sum_{i=1}^n h_i - \frac{\lambda a}{2Bn} \sum_{i=1}^n \Var[h_i]\right] \\
& \leq \exp\left(\frac{3\lambda^2}{6n^2 - 2\lambda B n} \sum_{i=1}^n \Var[h_i] - \frac{\lambda a}{2Bn} \sum_{i=1}^n \Var[h_i]\right),
\end{align*}
where $(i)$ follows from Jensen's inequality.
We choose $\lambda = \frac{3n}{(1+3/a)B}$, which satisfies $\frac{3\lambda^2}{6n^2 - 2\lambda B n} = \frac{\lambda a}{2Bn}$ and $\lambda < 3n/B$. Substituting into \eqref{eq:chernoff}, we obtain
\begin{align*}
\PP\left(\frac{1}{n} \sum_{i=1}^n g(\zb_i) - (1+a)\EE[g(\zb)] > \frac{(1+3/a)B}{3n}\log \frac{\cN(\tau, \cG, \norm{\cdot}_\infty)}{\delta} \right) & \leq \exp\left(- \log \frac{\cN(\tau, \cG, \norm{\cdot}_\infty)}{\delta}\right) \\
& = \frac{\delta}{\cN(\tau, \cG, \norm{\cdot}_{\infty})}.
\end{align*}
Therefore, the first inequality is proved. The second inequality can be proved in the exact same argument, by observing
\begin{align*}
\EE[g(\zb)] - \frac{1+a}{n} \sum_{i=1}^n g(\zb_i) & = 2 \left(\EE[g(\zb)] - \frac{1}{n} \sum_{i=1}^n g(\zb_i) - \frac{a}{2} \EE[g(\zb)]\right) \\
& \leq 2 \left(\EE[g(\zb)] - \frac{1}{n} \sum_{i=1}^n g(\zb_i) - \frac{a}{2B} \EE[g^2(\zb)]\right).
\end{align*}
The proof is complete.
\end{proof}

\paragraph{Tail bound of Gaussian integral}
Tail bounds of Gaussian integrals appear frequently in score approximation and estimation theories. We show the following results.
\begin{lemma}\label{lemma:gaussian_tail}
Consider a probability density function $p(\xb) = \exp\left(-C \norm{\xb}_2^2 / 2\right)$ for $\xb \in \RR^d$ and constant $C > 0$. Let $R > 0$ be a fixed radius. Then it holds
\begin{align*}
\int_{\norm{\xb}_2 > R} p(\xb) \diff \xb & \leq \frac{2d \pi^{d/2}}{C\Gamma(d/2 + 1)} R^{d-2} \exp(-CR^2/2), \\
\int_{\norm{\xb}_2 > R} \norm{\xb}_2^2 p(\xb) \diff \xb & \leq \frac{2d \pi^{d/2}}{C\Gamma(d/2 + 1)} R^{d} \exp(-CR^2 / 2).
\end{align*}
\end{lemma}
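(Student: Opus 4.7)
}
The plan is to reduce both integrals to one-dimensional tail integrals via spherical coordinates and then estimate those through a single integration by parts against the Gaussian exponent. Concretely, since the integrands depend only on $\norm{\xb}_2$, writing $r = \norm{\xb}_2$ and using that the $(d-1)$-dimensional surface area of the sphere of radius $r$ is $\tfrac{2\pi^{d/2}}{\Gamma(d/2)} r^{d-1}$, we obtain
\begin{align*}
\int_{\norm{\xb}_2 > R} p(\xb) \diff \xb &= \frac{2\pi^{d/2}}{\Gamma(d/2)} \int_R^\infty r^{d-1} e^{-Cr^2/2} \diff r, \\
\int_{\norm{\xb}_2 > R} \norm{\xb}_2^2 p(\xb) \diff \xb &= \frac{2\pi^{d/2}}{\Gamma(d/2)} \int_R^\infty r^{d+1} e^{-Cr^2/2} \diff r.
\end{align*}

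The central identity I would exploit is $r\, e^{-Cr^2/2} = -\tfrac{1}{C}\tfrac{d}{dr} e^{-Cr^2/2}$. For the first integral, write $r^{d-1} e^{-Cr^2/2} = r^{d-2} \cdot r e^{-Cr^2/2}$ and integrate by parts to get
\begin{align*}
\int_R^\infty r^{d-1} e^{-Cr^2/2} \diff r = \frac{R^{d-2}}{C} e^{-CR^2/2} + \frac{d-2}{C} \int_R^\infty r^{d-3} e^{-Cr^2/2} \diff r.
\end{align*}
Bounding $r^{d-3} \leq r^{d-1}/R^{2}$ on $[R,\infty)$, the recursive term absorbs into the original integral (at the cost of a factor $(d-2)/(CR^2)$); provided $R^2 \geq 2(d-2)/C$ this factor is $\leq 1/2$ and we conclude $\int_R^\infty r^{d-1} e^{-Cr^2/2} \diff r \leq \tfrac{2 R^{d-2}}{C} e^{-CR^2/2}$. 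Multiplying by the surface area constant and using $\Gamma(d/2+1) = (d/2)\Gamma(d/2)$ converts $\tfrac{4\pi^{d/2}}{C\Gamma(d/2)}$ into the claimed $\tfrac{2d\pi^{d/2}}{C\Gamma(d/2+1)}$. The second integral is handled identically: write $r^{d+1} e^{-Cr^2/2} = r^{d} \cdot r e^{-Cr^2/2}$, integrate by parts to obtain the boundary term $\tfrac{R^d}{C} e^{-CR^2/2}$ plus a tail with $r^{d-1}$, and use $r^{d-1} \leq r^{d+1}/R^2$ to reabsorb and yield the stated bound.

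The only subtle point, and what I would flag as the main (mild) obstacle, is the implicit regime $CR^2 \gtrsim d$ needed to absorb the recursive terms cleanly; in the applications (Lemma~\ref{lemma:truncation_error} and the estimation proof) this is harmless because $R$ is chosen of order $\sqrt{d\log(d/t_0) + \log(1/\epsilon)}$, which dominates any $O(\sqrt{d/C})$ threshold. One could alternatively state the result using the upper incomplete gamma function $\Gamma(k, CR^2/2)$ via the substitution $u = Cr^2/2$ and then invoke the standard tail bound $\Gamma(k, x) \leq 2 x^{k-1} e^{-x}$ for $x \geq 2(k-1)$; this route gives the same answer with the same implicit condition on $R$.
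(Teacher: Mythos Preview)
Your argument is correct and shares the paper's first move (reduce to a radial integral via spherical coordinates; your surface-area constant $\tfrac{2\pi^{d/2}}{\Gamma(d/2)}$ is exactly the paper's $\tfrac{d\pi^{d/2}}{\Gamma(d/2+1)}$). The divergence is in how the resulting one-dimensional tail $\int_R^\infty r^{d-1} e^{-Cr^2/2}\,\diff r$ is bounded. The paper substitutes $u = Cr^2/2$ and then $v = u^{d/2}$ to rewrite it as $\int_a^\infty e^{-v^{2/d}}\,\diff v$, and invokes a cited tail inequality (Qi 1999, Inequality~(10)) to finish. You instead integrate by parts directly and reabsorb the residual via $r^{d-3}\le r^{d-1}/R^2$. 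Your route is more elementary and self-contained (no external citation needed), and it makes explicit the regime $CR^2 \gtrsim d$ that is in fact also implicit in the paper's cited inequality---the stated bound cannot hold for all $R>0$ once $d>2$, since the right-hand side vanishes as $R\to 0$ while the left does not. Your remark that this regime is harmless in the applications (where $R$ is chosen of order $\sqrt{d\log(d/t_0)+\log(1/\epsilon)}$) is exactly right.
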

\begin{proof}
We apply change of variable using polar coordinate systems. For the first integral, we have
\begin{align*}
\int_{\norm{\xb}_2 > R} p(\xb) \diff\xb & = \int_{\norm{\xb}_2 > R} \exp(-C \norm{\xb}_2^2 / 2) \diff \xb \\
& = \int_{R}^\infty \int_{\theta_1, \dots, \theta_{d-1}} r^{d-1} \exp\left(-C r^2 / 2\right) \prod_{j=1}^{d-2} \sin^{d-j-1} (\theta_j) ~\diff r \diff\theta_1 \dots \diff\theta_{d-1} \\
& \overset{(i)}{=} \frac{d \pi^{d/2}}{\Gamma(d/2+1)} \int_R^{\infty} r^{d-1} \exp\left(-C r^2 / 2\right) \diff r \\
& \overset{(ii)}{=} \frac{d (2\pi)^{d/2}}{2C^{d/2}\Gamma(d/2 + 1)} \int_{CR^2/2}^\infty u^{d/2 - 1} \exp(-u) \diff u \\
& = \frac{ (2\pi)^{d/2}}{C^{d/2}\Gamma(d/2 + 1)} \int_{(CR^2/2)^{d/2}}^\infty \exp\left(-v^{2/d}\right) \diff v \\
& \overset{(iii)}{\leq} \frac{2d \pi^{d/2}}{C\Gamma(d/2 + 1)} R^{d-2} \exp(-CR^2/2).
\end{align*}
In $(i)$, we invoke the identity $\int_0^1 \int_{\theta_1, \dots, \theta_{d-1}} r^{d-1} \prod_{j=1}^{d-2} \sin^{d-j-1} (\theta_j) ~\diff r \diff\theta_1 \dots \diff\theta_{d-1} = \int_{\norm{\xb}_2 \leq 1} \diff \xb = \frac{\pi^{d/2}}{\Gamma(d/2 + 1)}$ being the volume of a unit $d$-ball. To obtain $(ii)$, we change variable by letting $u = Cr^2/2$. Inequality $(iii)$ bounds the upper tail of incomplete gamma function \citep[Inequality (10) with $\alpha = 2/d, A = -d$]{qi1999some}.

A similar argument can be applied to the second integral:
\begin{align*}
\int_{\norm{\xb}_2 > R} \norm{\xb}_2^2 p(\xb) \diff\xb & = \int_{\norm{\xb}_2 > R} \norm{\xb}_2^2 \exp(-C \norm{\xb}_2^2 / 2) \diff \xb \\
& = \int_{R}^\infty \int_{\theta_1, \dots, \theta_{d-1}} r^{d+1} \exp\left(-C r^2 / 2\right) \prod_{j=1}^{d-2} \sin^{d-j-1} (\theta_j) ~\diff r \diff\theta_1 \dots \diff\theta_{d-1} \\
& = \frac{d \pi^{d/2}}{\Gamma(d/2+1)} \int_R^{\infty} r^{d+1} \exp\left(-C r^2 / 2\right) \diff r \\
& = \frac{d \pi^{d/2}}{(d+2)\Gamma(d/2 + 1)} \left(\frac{2}{C}\right)^{d/2+1} \int_{(CR^2/2)^{d/2 + 1}}^\infty \exp\left(-v^{2/(d+2)}\right) \diff v \\
& \leq \frac{2d \pi^{d/2}}{C\Gamma(d/2 + 1)} R^{d} \exp(-CR^2 / 2).
\end{align*}
The proof is complete.
\end{proof}

\paragraph{Matrix norm inequalities} The following lemma deals with matrices with orthonormal columns, whose linear span is approximately equal. These are useful results in deriving score estimation error bounds in Theorem \ref{thm:distro_estimation}.
\begin{lemma}\label{lem:ortho}
Let $A, V \in \RR ^{D \times d}$ with $d < D$ be two matrices with orthonormal columns, i.e., $A^\top A = V^\top V = I_d$. Given any $\epsilon > 0$, if $\|  (I_D-VV^\top)A\|^2_{\rm F} \leq \epsilon$, then the following holds
\begin{enumerate}[label=(\alph*).]
\item
\begin{align*}
        \left\| (I_D - A A^\top) V\right\|_{\rm F}^2 &\leq \epsilon,  \\
        \left\| V V^\top - A A^\top \right\| ^2_{\rm F} &\leq   2\epsilon, \\
        \left\|V^\top A A^\top V - I_d \right\|_{\rm F}^2 &\leq   2 \epsilon.
\end{align*}
\item
    There exists an orthogonal matrix $U \in \RR^{d\times d}$ such that
    \[
        \left\| U - V^\top A \right\|_{\rm F}^2 \leq 2 \epsilon.
    \]
\end{enumerate}
\end{lemma}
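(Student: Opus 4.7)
The plan is to reduce everything to an identity for $\|V^\top A\|_{\rm F}^2$ and then to singular-value inequalities on the $d \times d$ matrix $M := V^\top A$. First, using the fact that $I_D - VV^\top$ is an orthogonal projector and $A^\top A = I_d$,
\begin{align*}
\|(I_D - VV^\top) A\|_{\rm F}^2 = \tr\bigl(A^\top (I_D - VV^\top) A\bigr) = d - \tr(M^\top M) = d - \|M\|_{\rm F}^2 .
\end{align*}
Thus the hypothesis is equivalent to $\|M\|_{\rm F}^2 \geq d - \epsilon$. The same computation with the roles of $A$ and $V$ swapped immediately yields $\|(I_D - AA^\top) V\|_{\rm F}^2 = d - \|M\|_{\rm F}^2 \leq \epsilon$, which is the first inequality.

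For the second inequality of (a), I would expand $\|VV^\top - AA^\top\|_{\rm F}^2 = \tr(VV^\top) + \tr(AA^\top) - 2\tr(V V^\top AA^\top) = 2d - 2\|M\|_{\rm F}^2 \leq 2\epsilon$. For the third, I will introduce the SVD $M = W \Sigma Q^\top$ with $\Sigma = \diag(\sigma_1, \ldots, \sigma_d)$, and observe that $\sigma_i \in [0,1]$ because $M^\top M = Q\Sigma^2 Q^\top \preceq V^\top V = I_d$ (since $V$ has orthonormal columns so $VV^\top \preceq I_D$, hence $M^\top M = A^\top VV^\top A \preceq A^\top A = I_d$). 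Then
\begin{align*}
\|M^\top M - I_d\|_{\rm F}^2 = \sum_{i=1}^d (1 - \sigma_i^2)^2 \leq \sum_{i=1}^d (1 - \sigma_i^2) \cdot 2 = 2(d - \|M\|_{\rm F}^2) \leq 2\epsilon ,
\end{align*}
where I used $1 - \sigma_i^2 \in [0,1]$ so $(1-\sigma_i^2)^2 \leq (1-\sigma_i^2)$ (the factor $2$ is slack).

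For part (b), the natural choice is the orthogonal Procrustes projection of $M$: set $U := W Q^\top$, which is orthogonal as a product of two orthogonal matrices. Then $U - M = W(I_d - \Sigma) Q^\top$, whose Frobenius norm is rotation-invariant, giving
\begin{align*}
\|U - M\|_{\rm F}^2 = \sum_{i=1}^d (1 - \sigma_i)^2 \leq \sum_{i=1}^d (1 - \sigma_i)(1 + \sigma_i) = d - \|M\|_{\rm F}^2 \leq \epsilon \leq 2\epsilon ,
\end{align*}
using $1 + \sigma_i \geq 1$ for $\sigma_i \in [0,1]$. The only real content of the lemma is therefore (i) the polarization identity that turns the hypothesis into a bound on $d - \|M\|_{\rm F}^2$ and (ii) the choice $U = W Q^\top$ in part (b); I do not anticipate any serious obstacle beyond keeping track of the singular-value inequalities $(1-\sigma_i)^2 \leq 1 - \sigma_i^2$ and $(1-\sigma_i^2)^2 \leq 1 - \sigma_i^2$ that link the three bounds.
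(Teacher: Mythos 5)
Your proof is correct, and it is organized a bit differently from the paper's. You reduce everything to the single scalar quantity $d - \|V^\top A\|_{\rm F}^2$ (via the projector identity $\|(I_D-VV^\top)A\|_{\rm F}^2 = d - \|V^\top A\|_{\rm F}^2$) and to the singular values $\sigma_i \in [0,1]$ of $M = V^\top A$, where the upper bound $\sigma_i \le 1$ comes from $M^\top M = A^\top VV^\top A \preceq I_d$. The paper instead proves the first two bounds through the symmetry identity $\|(I_D-AA^\top)V\|_{\rm F}^2 = \tfrac12\|VV^\top-AA^\top\|_{\rm F}^2 = \|(I_D-VV^\top)A\|_{\rm F}^2$, proves the third bound by a trace expansion plus Cauchy--Schwarz (never invoking $\sigma_i \le 1$), and then deduces part (b) from that third bound via $(s_i-1)^2 \le (s_i-1)^2(s_i+1)^2 = (s_i^2-1)^2$; your Procrustes choice $U = WQ^\top$ is the same as the paper's $U = W_1^\top W_2$. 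The payoff of your bookkeeping is sharper constants: you actually get $\|V^\top AA^\top V - I_d\|_{\rm F}^2 \le \epsilon$ and $\|U - V^\top A\|_{\rm F}^2 \le \epsilon$ (then relax to $2\epsilon$), and the argument for the third bound of (a) is arguably more transparent than the paper's Cauchy--Schwarz step; the mild cost is that you need the observation $\sigma_i \le 1$, which the paper's route avoids. (Minor notational point: you bound $\|M^\top M - I_d\|_{\rm F}$ while the statement concerns $MM^\top = V^\top AA^\top V$, but these coincide since both equal $\sum_i(\sigma_i^2-1)^2$, so nothing is lost.)
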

\begin{proof}[Proof of Lemma~\ref{lem:ortho}]
The first set of results in item (a) follows from some algebraic manipulation. Consider $\norm{\left(I_D - AA^\top\right) V}_{\rm F}^2$ first. We have
\begin{align*}
\norm{\left(I_D - AA^\top\right) V}_{\rm F}^2 & = {\rm Tr}\left(\left(V - AA^\top V\right)\left(V - AA^\top V\right)^{\top} \right) \\
& = \tr\left(VV^\top - AA^\top VV^\top \right) \\
& \overset{(i)}{=} \frac{1}{2} \tr \left(VV^\top - AA^\top VV^\top - VV^\top AA^\top + AA^\top \right) \\
& = \frac{1}{2} \tr\left(\left(AA^\top - VV^\top\right) \left(AA^\top - VV^\top\right)\right) \\
& = \frac{1}{2} \norm{AA^\top - VV^\top}_{\rm F}^2,
\end{align*}
where $(i)$ follows from $\tr (VV^\top) = d = \tr (AA^\top)$. Similarly, we have 
\[
    \norm{\left(I_D - VV^\top\right) A}_{\rm F}^2  =     \frac{1}{2} \norm{AA^\top - VV^\top}_{\rm F}^2.
\]
Next we consider $ \|V^\top A A^\top V - I_d \|_{\rm F}^2$ . We have
\begin{align*}
    \left\|V^\top A A^\top V - I_d \right\|_{\rm F}^2 &= \tr \left( V^\top A A^\top VV^\top A A^\top V - 2V^\top A A^\top V + I_d      \right) \\
    &= \tr \left( VV^\top AA^\top (VV^\top - I_D) AA^\top + (I_D - VV^\top)AA^\top - AA^\top + I_d  \right) \\
    &= \tr \left( VV^\top AA^\top (VV^\top - I_D) AA^\top + (I_D - VV^\top)AA^\top\right) - \tr \left( AA^\top - I_d  \right) \\
    &= \tr \left( (VV^\top AA^\top - I_D) (VV^\top - I_D) AA^\top  \right)  \\
    &= \tr \left( (VV^\top AA^\top - VV^\top )(VV^\top - I_D) AA^\top  \right)  + \tr \left( (VV^\top - I_D) (VV^\top - I_D) AA^\top  \right)  \\
    &\leq \left\| VV^\top (AA^\top - I_D)\right\|_{\rm F} \cdot \left\| (VV^\top - I_D) AA^\top  \right\|_{\rm F} +  \left\| (VV^\top - I_D) A \right\|_{\rm F}^2  \\
    &\leq \epsilon + \epsilon = 2 \epsilon.
\end{align*}
For item (b), we consider the SVD decomposition of $V^\top A$. Let $V^\top A = W_1^\top \Sigma W_2$, where $W_1, W_2 \in \RR^{d\times d}$ are orthogonal matrices, and $\Sigma = \diag(s_1, s_2, \cdots, s_d)$ are diagonal matrix with $s_1,\dots,s_d$ being the singular values of $V^\top A$. Then we have 
\[
    \left\|V^\top A A^\top V - I_d \right\|_{\rm F}^2 = \sum_{i=1}^d (s_i^2 - 1)^2.
\]
Let $U = W_1^\top W_2 \in \RR^{d\times d}$. Then we know that $U$ is orthonormal. We have 
\begin{align*}
    \left\| U - V^\top A \right\| _{\rm F}^2 &= \sum_{i=1}^d (s_i-1)^2 \\
    &\leq \sum_{i=1}^d (s_i-1)^2 (s_i+1)^2 \\
    &= \sum_{i=1}^d (s_i^2-1)^2  \\
    &= \left\|V^\top A A^\top V - I_d \right\|_{\rm F}^2 .
\end{align*}
The proof is complete.
\end{proof}

\end{document}